\documentclass{article}

\PassOptionsToPackage{numbers, sort&compress}{natbib}

 \usepackage[preprint]{neurips_2026}

\usepackage[utf8]{inputenc} %
\usepackage[T1]{fontenc}    %
\usepackage{hyperref}       %
\usepackage{url}            %
\usepackage{booktabs}       %
\usepackage{amsfonts}       %
\usepackage{nicefrac}       %
\usepackage{microtype}      %
\usepackage{xcolor}         %

\usepackage{graphicx,caption}
\usepackage{amsmath,amsfonts,amssymb}
\usepackage{xparse}
\usepackage{empheq}
\usepackage{mathtools}
\usepackage{svg}
\usepackage{enumitem}
\usepackage{algorithm}
\usepackage{algpseudocode}

\usepackage{cleveref}

\usepackage{amsthm}

\newtheorem{lemma}{Lemma}

\newtheorem{assumption}{Assumption}
\newtheorem{remark}{Remark}

\newtheorem{corollary}{Corollary}

\usepackage{thmtools}
\usepackage{thm-restate}

\crefname{thm}{Theorem}{Theorems}
\Crefname{thm}{Theorem}{Theorems}
\crefname{definition}{Definition}{Definitions}
\Crefname{definition}{Definition}{Definitions}
\crefname{proposition}{Proposition}{Propositions}
\Crefname{proposition}{Proposition}{Propositions}
\crefname{lemma}{Lemma}{Lemmas}
\Crefname{lemma}{Lemma}{Lemmas}
\crefname{corollary}{Corollary}{Corollaries}
\Crefname{corollary}{Corollary}{Corollaries}
\crefname{assumption}{Assumption}{Assumptions}
\Crefname{assumption}{Assumption}{Assumptions}
\crefname{remark}{Remark}{Remarks}
\Crefname{remark}{Remark}{Remarks}
\crefname{example}{Example}{Examples}
\Crefname{example}{Example}{Examples}
\crefname{algorithm}{Algorithm}{Algorithms}
\Crefname{algorithm}{Algorithm}{Algorithms}

\AddToHook{cmd/appendix/before}{%
    \crefalias{section}{appendix}%
    \crefalias{subsection}{appendix}
}

\newenvironment{proofsketch}{%
  \proof}{\endproof}

\usepackage[textsize=tiny]{todonotes}

\definecolor{blind_blue}{HTML}{547FEF}
\definecolor{blind_magenta}{HTML}{DC267F}
\definecolor{blind_orange}{HTML}{FE6100}
\newcommand{\colourbase}{black}
\newcommand{\applycolor}[1]{\textcolor{\colourbase}{#1}}

\newcommand{\mymacro}[3][0]{%
  \newcommand{#2}[#1]{{\applycolor{#3}}}%
}

\newcommand{\defn}[1]{\textbf{#1}}
\newcommand{\mb}{\mathbf}
\newcommand{\mc}{\mathcal}

\title{Invariant Learning Dynamics of Transformers \\ in Inductive Reasoning Tasks}

\author{%
  Tiberiu Musat\thanks{Correspondence at \texttt{tiberiu@musat.ai}, \texttt{tiago.pimentel@inf.ethz.ch}} \\
  ETH Zurich
  \And
  Tiago Pimentel \\
  ETH Zurich
  \And
  Nicolas Zucchet \\
  Stanford University
  \And
  Thomas Hofmann \\
  ETH Zurich
}

\newcommand{\R}{\mathbb{R}} %

\newcommand{\tp}[1]{{#1}^{\intercal}} %

\mymacro{\setAssoc}{\mathcal{R}}
\mymacro{\setAssocC}{\mathcal{R}^c}
\mymacro{\arity}{\rho} %

\mymacro{\vocabPairs}{k}
\mymacro{\vocab}{\mathcal{V}}
\mymacro[1]{\tok}{\mathbf{#1}}
\mymacro{\token}{\tok{x}}
\mymacro{\tokSeq}{\mathbf{w}}

\mymacro{\vocabC}{\vocab^c}
\mymacro{\vocabR}{\vocab^r}

\mymacro{\itm}{\mathbf{a}}
\mymacro{\lbl}{\mathbf{b}}

\mymacro{\embedSymbol}{\mathrm{Embed}}
\mymacro[1]{\embed}{\embedSymbol[#1]}
\mymacro[1]{\emb}{\mathbf{e}_{#1}}
\mymacro[1]{\embedPerturbed}{\mathrm{Embed_w}[#1]}
\mymacro{\embedDim}{d}
\mymacro{\headDim}{d_h}
\mymacro{\embedSpace}{\R^d}
\mymacro{\embedMatrix}{\mathbf E}

\mymacro{\seqPairs}{n}
\mymacro{\queryPos}{2\seqPairs + 1}

\mymacro{\seqEmbed}{\mathbf{T}}
\mymacro{\posEmbed}{\mathbf{P}}
\mymacro{\modelInput}{\mathbf{X}}
\mymacro{\modelOutput}{\mathbf{y}}
\mymacro[1]{\hidden}{\mathbf{H}^{(#1)}}
\mymacro[1]{\hiddenlast}{\mathbf{H}^{(#1)}_{:,\seqLen}}
\mymacro[2]{\hiddeni}{\mathbf{H}^{(#1)}_{:,#2}}
\mymacro{\hiddenOut}{\Bar{\mathbf{H}}}
\mymacro{\outputPredicted}{\mathbf{z}}
\mymacro{\loss}{\mathcal{L}}

\mymacro{\seqEmbedT}{\tilde{\mathbf{T}}}
\mymacro{\posEmbedT}{\tilde{\mathbf{P}}}
\mymacro{\modelInputT}{\tilde{\mathbf{X}}}
\mymacro{\modelOutputT}{\tilde{\mathbf{y}}}
\mymacro[1]{\hiddenT}{\tilde{\mathbf{H}}^{(#1)}}
\mymacro{\hiddenOutT}{\tilde{\Bar{\mathbf{H}}}}
\mymacro{\outputPredictedT}{\tilde{\mathbf{z}}}
\mymacro{\lossT}{\tilde{\mathcal{L}}}

\mymacro{\seqLen}{N}

\mymacro{\seqEmbedSpace}{\R^{\embedDim \times \seqLen}}
\mymacro{\posEmbedSpace}{\R^{\embedDim \times \seqLen}}
\mymacro{\modelInputSpace}{\R^{\embedDim \times \seqLen}}
\mymacro{\outputSpace}{\embedSpace}
\mymacro{\hiddenSpace}{\R^{\embedDim \times \seqLen}}

\mymacro[1]{\weights}{\mathbf{W}^{(#1)}}
\mymacro[1]{\weightsQ}{\mathbf{W}^{(#1)}_{Q}}
\mymacro[1]{\weightsK}{\mathbf{W}^{(#1)}_{K}}
\mymacro[1]{\weightsV}{\mathbf{W}^{(#1)}_{V}}
\mymacro[1]{\weightsP}{\mathbf{W}^{(#1)}_{P}}
\mymacro[1]{\values}{\mathbf{V}^{(#1)}}
\mymacro{\weightSpace}{\R^{\embedDim \times \embedDim}}
\mymacro{\valueSpace}{\R^{\embedDim \times \embedDim}}
\mymacro{\weightsThirdSpace}{\R^{\embedDim \times \embedDim}}

\mymacro{\weightsOut}{\Bar{\mathbf{W}}}
\mymacro{\weightsOutSubSpace}{\Bar{\mathbb{S}}}

\mymacro[1]{\weightsICL}{\mathbf{W}^{(#1)}_{\mathrm{ICL}}}
\mymacro{\weightsOutICL}{\Bar{\mathbf{W}}_{\mathrm{ICL}}}
\mymacro{\out}{y}

\mymacro{\id}{\mathbf{I}}
\mymacro{\idR}{\mathbf{I}^{(r)}}
\mymacro{\idC}{\mathbf{I}^{(c)}}
\mymacro{\idP}{\mathbf{I}^{(p)}}
\mymacro{\idT}{\mathbf{I}^{(t)}}
\mymacro{\commCorrel}{\mathbf{C}}
\mymacro[1]{\weightsSubSpace}{\mathbb{S}^{(#1)}}

\mymacro{\posEmbedSet}{\mathcal{P}}
\mymacro{\posCorrel}{\mathbf{M}}
\mymacro{\correlSpace}{\R^{\embedDim \times \embedDim}}

\mymacro{\orthogonalMatrix}{\mathbf{E}}

\NewDocumentCommand{\softmax}{g}{
  \applycolor{\sigma
  \IfValueT{#1}{\left( #1 \right)}}
}

\mymacro{\inputSeq}{\mathbf{w}^\mathrm{in}}
\mymacro{\outputSeq}{\mathbf{w}^\mathrm{out}}

\mymacro{\oneInput}{\mathbf{x}}
\mymacro{\oneHidden}{\mathbf{h}}
\mymacro{\rank}{\mathrm{rank}}
\mymacro{\vecSpan}{\mathrm{span}}
\mymacro{\tokDim}{d_\mathrm{t}}
\mymacro{\posDim}{d_\mathrm{p}}

\mymacro{\tokSpace}{\mathbb{U}^{\mathrm{(t)}}}
\mymacro{\tokCommSpace}{\mathbb{U}^{\mathrm{(c)}}}
\mymacro{\tokRareSpace}{\mathbb{U}^{\mathrm{(r)}}}
\mymacro{\posSpace}{\mathbb{U}^{\mathrm{(p)}}}

\mymacro{\identity}{\mathbf{I}}
\mymacro{\zero}{\mathbf{0}}

\mymacro{\basisTok}{\mathbf{Q}^\mathrm{(t)}}
\mymacro{\basisPos}{\mathbf{Q}^\mathrm{(p)}}
\mymacro{\basisTokSpace}{\mathbf{Q}^\mathrm{x}}

\mymacro[1]{\permGroup}{S_{#1}}
\mymacro{\permC}{\pi_C}
\mymacro{\permR}{\pi_R}
\mymacro{\perm}{\pi}

\mymacro{\densityRare}{f_R}
\mymacro{\distToksRare}{\mathcal{D}^r}
\mymacro{\tokenR}{\token_r}
\mymacro{\tokenC}{\token_c}

\mymacro{\embedR}{\mathbf{e}^r}

\mymacro{\numFreqs}{m}
\mymacro{\freq}{\omega}
\mymacro{\offset}{\phi}
\mymacro{\offsetSpace}{\R^{\seqPairs}}
\mymacro{\sinBasis}{\mathbf{q}}
\mymacro{\cosBasis}{\mathbf{r}}

\mymacro{\imgCtxLen}{T}
\mymacro{\numLayers}{L}
\mymacro{\numHeads}{H}

\newcommand*{\circled}[1]{\tikz[baseline=(char.base)]{        \node[shape=circle,draw,inner sep=1pt] (char) {\normalfont{\small #1}};}}

\begin{document}

\maketitle

\begin{abstract}

    We present a theoretical framework to explain the emergence of inductive
    reasoning abilities in Transformer language models. While previous works on
    Transformer learning dynamics have so far been mostly tied to specific tasks,
    we study a generalized class of inductive tasks that unifies several synthetic
    tasks known in the literature, including in-context $n$-grams and multi-hop
    reasoning. In this class, we theoretically prove that the training dynamics of
    attention models can be confined to a highly interpretable, low-dimensional invariant
    manifold. On this manifold, the learning dynamics are captured by a handful of
    interpretable coordinates rather than millions of parameters, making both
    theoretical and empirical analysis more tractable. Using this framework, we
    characterize how data statistics govern the competition between in-context and
    in-weights learning, we study how random initializations determine the
    `winning' circuit when multiple solutions are possible, and we demonstrate that
    the coordinate frame associated with the manifold can be used to automatically
    detect which circuits have been learned in trained models. By casting circuit
    formation as a low-dimensional dynamical phenomenon, we take a step toward a
    predictive theory of how Transformers learn.%
\end{abstract}

\begin{figure}[h]
    \centering
    \captionsetup{width=\linewidth}
    \includegraphics[width=\linewidth]{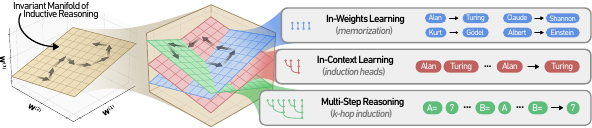}
    \caption{The \emph{Invariant Manifold of Inductive Reasoning} (IMIR) is an interpretable low-dimensional subspace of the parameter space which training trajectories never leave. Each induction circuit resides and evolves within a subspace of the IMIR, depicted schematically as a colored plane.}
    \label{fig:circuit-space}
\end{figure}

\section{Introduction}

While great progress has been made toward understanding the circuits (i.e.
subnetworks, or weight structures) present in large language models
\citep{zheng2024attention, olah2020zoom, wang2022interpretability,
    conmy2023towards},
we are still far from understanding the learning dynamics that underlie their
formation. %
As a result, skill acquisition in language models remains mostly unpredictable
\citep{wei2022emergent}, which complicates model development and poses AI
safety issues. %
A better understanding of circuit emergence could reduce these risks, provide
practical insights into how to interpret the internal mechanics of trained
models, as well as help design better training pipelines
\citep{simon2026there}.
In this work, we focus on the emergence of circuits responsible for
\defn{inductive abilities}: general-purpose capabilities that allow a language
model to recognize patterns, infer rules, and perform abstract reasoning on the
fly based on the provided context, rather than relying on static, memorized
patterns.\looseness=-1

Inductive abilities of transformers have been studied extensively by prior work, both empirically and theoretically. 
However, inductive abilities are typically studied in isolation, limited to specific tasks such as in-context $n$-grams \citep{akyurek2024context, edelman2024evolution, varre2025learning} or $k$-hop induction \citep{sanford2024transformers, musat2024mechanism, wang2024buffer, allen-zhu2026physics}.
Without a unifying theoretical framework, our understanding progresses slowly,
and three important phenomena remain particularly puzzling.
\circled{1}~\textbf{Circuit~competition} is a fundamental aspect of language model training
\citep{singh2025strategy,merrill2023tale,zhong2023clock}. However, most previous theoretical works assume staged
learning algorithms where only one component is trained at a time, while others are kept frozen \citep{bietti2024birth, nichani2024transformers}. A few works study the dynamical interaction of different components during training, but are limited to small transformers with only one \citep{zucchet2025emergence} or two \citep{musat2026emergence} attention heads. \circled{2}~\textbf{Data distributional properties}
strongly influence the emergence of in-context learning \citep{chan2022data,
    kim2025training, singh2023transient}, but a mechanistic understanding is
currently limited to phenomenological models \citep{reddy2023mechanistic}.
\circled{3}~\textbf{Randomly initialized} networks contain high-performing
subnetworks that can be trained in isolation, according to the \emph{lottery
    ticket hypothesis} \citep{frankle2018lottery}, but we currently lack a
mechanistic understanding of why specific initializations yield `winning'
tickets.

Our paper's central contribution is the \textbf{Invariant Manifold of Inductive
    Reasoning (IMIR)}: a low-dimensional subspace of
the parameter space which exhibits self-contained learning dynamics. 
A model whose parameters belong to the IMIR will remain on the IMIR when updated via gradient descent, a behavior known in dynamical systems theory as an \emph{invariant manifold}. 
We theoretically prove the existence of the IMIR and provide its mathematical description (\cref{sec:theory}) for a generalized class of induction tasks (\cref{sec:unified-class}). 
Importantly, our framework considers all parameter interactions in a model during training, making it more tractable to study phenomena like circuit competition. 
Moreover, each dimension of the IMIR is highly interpretable, corresponding to a specific elementary action within a specific transformer component. 
This allows us to find reduced and interpretable descriptions of the transformer circuits responsible for inductive reasoning abilities.

In \cref{sec:circuit-competition}, we leverage our framework to conduct a
circuit competition study (addressing~\circled{1}). Concretely, we first
investigate the circuit competition between in-context learning and in-weights
learning (\cref{sec:icl-vs-iwl}), where we theoretically prove the data dependency of
in-context learning (addressing~\circled{2}). Second, we investigate the impact
of random initialization on the competition between several in-context learning
circuits within a single transformer model; we find that the winning circuit
is determined by the model's initialization, with sharp phase transitions between the regions of initialization space favoring each circuit (partially addressing~\circled{3} in this simplified setup). 
In
\cref{sec:autointerp}, we show how the highly interpretable structure of the
IMIR can be used for automatic detection of inductive circuits. Finally, in
\cref{sec:discussion}, we discuss other possible applications of our framework
and future research directions.

Overall, our work provides a theoretical framework for understanding the
emergence of inductive circuits and opens the door for future work on the
learning dynamics of large language models.

\section{A Unified Class of Inductive Tasks}
\label{sec:unified-class}

\mymacro{\ctx}{\mathbf{x}}
\mymacro{\tupleSize}{k}

A great number of inductive abilities have been studied in the existing
literature \citep{olsson2022context, sanford2024transformers,
    akyurek2024context, wang2022interpretability, musat2024mechanism,
    alur2025impossibility}. 
However, models' inductive abilities are usually investigated
separately across these tasks, which makes it difficult to arrive at general results. We show that
many inductive tasks, while being relatively different at the surface level,
all share some intrinsic structure requiring in-context manipulation of token
tuples. This will enable us to derive general results that apply to a large
variety of tasks.
We describe our task class in \cref{sec:task-class} and we give some
examples of induction tasks. Then, we describe how the task depends on token relations and block positions in \cref{sec:data-invariance}.

\subsection{The Block-List Task Class}
\label{sec:task-class}

In natural language next-token prediction tasks, most words depend only on a tiny fraction of their context: 
the immediately preceding block of words and a few relevant blocks spread throughout its (potentially much larger) context. 
These relevant blocks may correspond to sentences, clauses, phrases, or even short spans such as consecutive word pairs.
Inspired by this observation about the dependency structure in natural language, we now
define a general class of \defn{block-list tasks} where a next token must be
predicted based on a list of $\seqPairs$ blocks of at most $\tupleSize$ tokens
each:
\begin{equation}
    [\ctx^1_{1} \ldots \ctx^1_{\tupleSize_1}]
    \;
    [\ctx^2_{1} \ldots \ctx^2_{\tupleSize_2}]
    \;\ldots\;
    [\ctx^{\seqPairs}_{1} \ldots \ctx^{\seqPairs}_{\tupleSize_\seqPairs}]
    \;\;\longrightarrow\;\;
    \ctx^{\seqPairs}_{\tupleSize_\seqPairs + 1}
\end{equation}
where $[\ctx^{\seqPairs}_{1} \ldots \ctx^{\seqPairs}_{\tupleSize_\seqPairs}]$ denotes the block of tokens immediately preceding the token to be predicted. The other blocks $[\ctx^{i}_{1} \ldots \ctx^{i}_{\tupleSize_i}]$, with $i < \seqPairs$, denote potentially (but not necessarily) relevant blocks of tokens. 
Under our task's definition, each $\tupleSize_i$ satisfies $\tupleSize_i \leq \tupleSize$, and we denote the language vocabulary as $\vocab$, so that $[\ctx^{i}_{1} \ldots \ctx^{i}_{\tupleSize_i}] \in \vocab^{\tupleSize_i}$. 
Many important next-token prediction tasks fit this description, and we now give three examples.

\label{sec:example-tasks}

\paragraph{In-context associative recall \citep{olsson2022context, nichani2024transformers, bietti2024birth}.}%
In this task, we have sequences with the pairwise structure $[\ctx^1_{1} \ctx^1_{2}]
    \;
    [\ctx^2_{1} \ctx^2_{2}]
    \;\ldots\;
    [\ctx^{\seqPairs}_{1}]
    \rightarrow\
    \ctx^{\seqPairs}_{2}$
and must predict token $\ctx^{\seqPairs}_{2}$ by finding a previous occurrence\footnote{If multiple occurrences are allowed, a common approach is to predict the same distribution over the next token.} $\ctx^i_1$ of token $\ctx^{\seqPairs}_{1}$ with $i<\seqPairs$ in the sequence and copying the token following it (i.e., $\ctx^i_{2}$).
In our general formulation, we have blocks of two tokens in the context
($\tupleSize_i=2$ for $i<\seqPairs$), while the query token forms a
single-token block. Notably, this is one of the most studied inductive tasks.

\paragraph{$k$-hop induction \citep{sanford2024transformers, musat2024mechanism, wang2024buffer,allen-zhu2026physics}.}
This task generalizes the in-context associative recall task by considering chained pairs of associations.
It has a similar structure to the previous task; however, once $\ctx^i_{1}$ is identified in the context, we may use $\ctx^i_{2}$ for a second-lookup operation.
Formally, we must find a chain of $k$ bigrams such that:
$(\ctx^{\seqPairs}_{1} = \ctx^{i_1}_{1}) \rightarrow (\ctx^{i_1}_{2} = \ctx^{i_2}_{1}) \rightarrow (\ctx^{i_2}_{2} = \ctx^{i_3}_{1}) \cdots \rightarrow \ctx^{i_k}_{2}$.

\paragraph{In-context $n$-grams \citep{akyurek2024context, edelman2024evolution, varre2025learning}.}
A different way to generalize the in-context associative recall task is by considering relations between $n$-grams of arbitrary length $\ell$.
In this task, we thus have sequences with the structure
$[\ctx^1_{1} \ctx^1_{2} \cdots \ctx^1_{\ell}]\;
    [\ctx^2_{1} \ctx^2_{2} \cdots \ctx^2_{\ell}]
    \;\ldots\;
    [\ctx^{\seqPairs}_{1} \cdots \ctx^{\seqPairs}_{\ell-1}]
    \rightarrow\
    \ctx^{\seqPairs}_{\ell}$
and must predict token $\ctx^{\seqPairs}_{\ell}$ by finding a previous occurrence of its entire prefix $\ctx^{\seqPairs}_{1} \cdots \ctx^{\seqPairs}_{\ell-1}$ in the sequence.

Several other tasks also fit this general block-list structure, among them:
indirect object identification \citep{wang2022interpretability}, conditional
retrieval \citep{musat2024mechanism}, inverting permutations
\citep{alur2025impossibility}, the in-context recall tasks used to study the emergence of sparse attention \citep{zucchet2025emergence},
in-context language learning \citep{akyurek2024context}, and reasoning with
fragments of natural language \citep{schlegel-etal-2022-transformers}. We
present these tasks in \cref{fig:block_list}.

\begin{figure}[t]
    \centering
    \includegraphics[width=\linewidth]{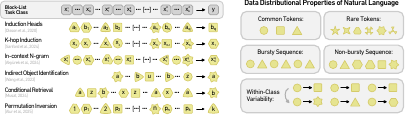}
    \caption{We define a block-list class of language tasks where the next token must be predicted based on a list of token tuples. We then study the training dynamics in this class of tasks.}
    \label{fig:block_list}
\end{figure}

\subsection{Data Symmetries}
\label{sec:data-invariance}

In inductive reasoning tasks, predictions typically depend on high-level structure, rather than on arbitrary choices of tokens or absolute positions.
We now describe a few symmetries present in our data, which may impose such structure.

\paragraph{Token Symmetries.} What a model must learn in an inductive task is the relation between tokens, not the identity of any particular token. For instance, in the associative recall task of \cref{sec:task-class}, the rule ``return the token that previously followed the query'' is unaffected if we consistently relabel the vocabulary: the instance $\mathrm{a\,b}\,\ldots\,\mathrm{a} \rightarrow \mathrm{b}$ and its relabeling $\mathrm{u\,v}\,\ldots\,\mathrm{u} \rightarrow \mathrm{v}$ are solved by the same mechanism. A model that depends on this relational structure, rather than on specific token identities, is precisely the kind of model that generalizes to tokens unseen during training, which is a defining feature of in-context learning. We formalize this property as an invariance of the data distribution under a group of token relabelings. Consider a valid sequence of tokens $\ctx \in \vocab^\seqLen$, where $\seqLen = \sum_{i=1}^{\seqPairs} \tupleSize_i$. Then, applying an admissible token permutation $\perm \in S_\vocab$---where $S_\vocab$ denotes the symmetric group on $\vocab$, i.e., the set of all bijections $\perm : \vocab \to \vocab$---should result in another valid sequence $\perm(\ctx) = \perm(\ctx_1)\ldots\perm(\ctx_\seqLen)$. However, what counts as an admissible permutation?

We define admissible token permutations based on the relational structure that a task imposes on the underlying vocabulary. Not every relabeling preserves the task: when the prediction relies on fixed relations between tokens, such as memorized associations, a permutation that does not respect those relations yields a different task. The admissible permutations are therefore exactly those that preserve this structure. 
Let $\setAssoc \subseteq \vocab^\arity$ be an $\arity$-ary relation---i.e., a set of $\arity$-tuples of tokens---characterizing basic skills required to solve the task.
For instance, in the associative recall task, $\setAssoc$ is a set of associated token pairs (a binary relation, $\arity = 2$, formalized below); in a task whose context encodes memorized transitions of a Markov chain, $\setAssoc$ is the set of allowed transitions $(\token, \token')$.
The relation $\setAssoc$ induces a subgroup of the symmetric group $S_{\vocab}$ called the automorphism group of $\setAssoc$, which is defined as
\begin{equation}
\operatorname{Aut}(\vocab, \setAssoc) := \{ \perm \in S_\vocab \;:\; \mathbf{w} \in \setAssoc \Leftrightarrow \perm(\mathbf{w}) \in \setAssoc \}.
\end{equation}
Its elements are precisely the permutations preserving the relational structure of $\setAssoc$.

To simplify the exposition throughout the remainder of this work, we specialize $\setAssoc$ to be an irreflexive symmetric binary relation of distinct, \emph{unordered} token pairs:
\begin{align}
    \setAssoc =
    \{  \{\itm_1,\lbl_1\},  \dots, \{\itm_\vocabPairs,\lbl_\vocabPairs\} \},
    \qquad \itm_i,\lbl_i \in \vocab,
\end{align}
where, for simplicity of presentation, we assume for now that every token belongs to exactly one association, i.e., $\bigcup_{i=1}^{\vocabPairs} \{ \itm_i,\lbl_i\} = \vocab$ and $2\vocabPairs = |\vocab|$.\footnote{When we later distinguish common from rare tokens (\cref{sec:data-properties}), only common tokens carry associations and we instead have that $\bigcup_{i=1}^{\vocabPairs} \{ \itm_i,\lbl_i\} = \vocabC$, with rare tokens remaining unpaired.}
We show how our theory can be extended to arbitrarily complex task structures in \cref{sec:relational-structure}.

\paragraph{Positional Symmetries.} In natural language, the
relevant blocks $[\token_1^i \cdots \token_{k_i}^i]$ may appear at any position
in a document, separated by an arbitrary number of irrelevant
words. We can simulate this effect by giving each block a random positional offset $\offset_i \in \mathbb{N}$, as if blocks were spread out in a very large hypothetical context. We can ensure that blocks don't overlap by enforcing $\offset_{i+1} \ge \offset_i + \tupleSize_i$. Then, the positional embedding of the token $\ctx^i_j$ is determined by its shifted position $\offset_i + j$. This generalizes the encoding of positions and includes the special case with no offset: $\offset_{i+1} = \offset_i + \tupleSize_i$.

\section{Invariant Manifolds of Learning Dynamics}
\label{sec:theory}

In this section, we introduce our key theoretical contribution, the identification and characterization of the \defn{Invariant Manifold of Inductive Reasoning (IMIR)}, a low-dimensional subspace to which parameters remain confined when trained via gradient descent on any block-list task.
In \cref{sec:architecture-main}, we describe the transformer architecture and training setup considered throughout the paper.
In \cref{sec:IMIR-theory}, we present our main theorem along with a brief proof
sketch. In \cref{sec:imir-interp}, we provide an interpretation of the IMIR's
structure.\footnote{
We note that \cref{thm:invariant-manifold} establishes that the IMIR is \emph{invariant}: a trajectory initialized on the IMIR remains on it throughout training.
We do not prove the stronger property that the IMIR is \emph{attractive}; we only show this empirically in our experiments.
}

\subsection{Transformer Architecture}
\label{sec:architecture-main}

Our models are based on an attention-only transformer, which we describe here.
Let $\seqLen = \sum_{i=1}^{\seqPairs} \tupleSize_i$ denote the length of the
input sequence. First, we map every token $\token \in \vocab$ to an embedding
vector $\emb{\token} \in \embedSpace$. Applying this embedding to our input
sequence and target output, we obtain the model input $\seqEmbed \in \R^{\embedDim \times \seqLen}$ and output $\modelOutput \in \outputSpace$. Second, we use sinusoidal positional
embeddings $\posEmbed \in \posEmbedSpace$, following
\citet{vaswani2017attention}, with random block-level offsets as described in \cref{sec:data-invariance}. As is standard, we then combine token
and position embeddings to obtain the model inputs: $\hidden{0} = \seqEmbed +
    \posEmbed$.

We pass these inputs through a multi-head attention-only
transformer with $\numLayers$ attention layers and $\numHeads$ attention heads
per layer. The activations at each layer $\ell \in [\numLayers]$ are given by:
\begin{equation}
    \hidden{\ell} \;=\; \; \hidden{\ell-1} \; + \;\; \sum_{h=1}^{\numHeads} \; \weightsP{\ell,h} \weightsV{\ell,h} \, \hidden{\ell-1} \softmax{\tp{\hidden{\ell-1}} \tp{\weightsQ{\ell,h}} \, \weightsK{\ell,h} \; \hidden{\ell-1}},
\end{equation}
where: (i) $\weightsK{\ell,h}, \weightsQ{\ell,h}, \weightsV{\ell,h} \in \R^{\headDim \times \embedDim}$ and $\weightsP{\ell,h} \in \R^{\embedDim \times \headDim}$ denote the key, query, value, and projection matrices of attention head $h \in [\numHeads]$ in layer $\ell \in [\numLayers]$;
(ii) $\embedDim,\headDim \in \mathbb{N}$ denote the embedding and attention head dimension;
(iii) $\hidden{\ell} \in \hiddenSpace$ denotes the residual stream at layer $\ell \in [\numLayers]$, and
(iv)~$\softmax : \R^{\seqLen \times \seqLen} \rightarrow \R^{\seqLen \times \seqLen}$ denotes the column-wise softmax function with auto-regressive masking.
Finally, we obtain our predicted next-token by passing the hidden activations
of the last token through a final linear layer $\tilde{\mathbf{y}} =
    \weightsOut \, \hiddenlast{\numLayers}$, where $\weightsOut \in \weightSpace$,
followed by a weight-tied unembedding layer. We train the model using the
cross-entropy loss. We give a more detailed description of the architecture in
\cref{sec:architecture}.

\paragraph{Beyond Attention-Only Transformers.} Standard Transformers additionally include feed-forward networks (FFN) and layer normalizations (LN). We omit these components from the core analysis in order to focus on attention layers, which are primarily responsible for inductive capabilities. Nevertheless, our theoretical results extend naturally to standard Transformers with FFNs and LNs, as we discuss in \cref{sec:beyond-attention-only}.

\subsection{Selection, Writing, and Action Bases}
\label{sec:defn_bases_matrices}

\mymacro{\orthoGroup}{\mathcal{O}}

\mymacro{\EV}{\mathbb{E}}
\mymacro{\proj}{\mathrm{proj}}
\mymacro{\weightsAll}{\mathbf{W}}
\mymacro{\weightsSubSpaceAll}{\mathbb{S}}

\mymacro[1]{\valActions}{\mathcal{V}_{#1}}
\mymacro{\tokActions}{\mathcal{T}}
\mymacro{\posActions}{\mathcal{P}}

Our main result (\cref{thm:gradient-confinement,thm:invariant-manifold} below) states that, when a transformer is trained on a block-list task, gradient descent confines its key--query and output weights to the span of a small set of basis matrices.
In this section, we construct these basis matrices, each of which encodes one elementary action that is possibly useful for solving a block-list task. Our construction depends on three intermediate bases:
\defn{selection matrices} are learnable mappings of tokens or positions (e.g., ``map each position to the previous''), \defn{writing bases} track how each head writes information into the residual stream (and thus how later layers can read it back), and \defn{action bases} track how chained attention heads transform the original embeddings (extending writing bases to multiple layers).\looseness=-1

When selecting contextual information, an attention head must compare transformed query and key representations.
In our block-list task setup, this can be done in two ways:
tokens can be compared directly to themselves via their identity, 
or they can be compared to their symmetry-related counterparts.
These operations can be represented via the two matrices:
$\idT \in \weightSpace$, the identity transformation acting only on the subspace of token embeddings, and
$\commCorrel \in \weightSpace$, a projection matrix that maps the embeddings of symmetrically
associated tokens to one another (associated via $\setAssoc$),  mapping the embedding of each token $\itm_i$ to the embedding of its associated partner $\lbl_i$.\footnote{
Formally, $\commCorrel\,\emb{\itm_i} = \emb{\lbl_i}$ and $\commCorrel\,\emb{\lbl_i} = \emb{\itm_i}$ for all $\{ \itm_i, \lbl_i \} \in \setAssoc$, while collapsing all other directions to zero.
See \cref{sec:relational-structure} for an extension to non-symmetric token relations.}
Similarly, in our setup, positions can be either compared directly to themselves, or mapped to one of their (at most $\tupleSize - 1$) previous positions,
which can be represented by matrices: 
$\idP \in \weightSpace$, which denotes the identity transformation acting only on the subspace of positions; and
$\posCorrel$, which maps position embeddings to their immediate predecessor, i.e. $\posCorrel \posEmbed_{:,i+1} = \posEmbed_{:,i}$, and whose matrix powers $\posCorrel^{j}$ thus shift positions back by $j$ time steps.
Based on these observations, we define the following sets of \defn{selection matrices} for tokens and positions, respectively:
\begin{align}
    \tokActions \;=\; \{\; \idT, \commCorrel  \;\},
     &  &
    \posActions \;=\; \{\; \idP, \posCorrel, \ldots, \posCorrel^{\tupleSize - 1} \;\},
\end{align}

Now, we note that each attention head can only write information to the residual stream via its output--value matrix transformation. 
Together with the identity defined by the transformer's residual connections, we say a layer $\ell \in [\numLayers]$ spans the
following \defn{writing basis} on the residual stream:
\begin{align}
    \valActions{\ell} \;=\; \Big\{\; \id \;\Big\} \,\cup\, \left\{\;
    \weightsP{\ell,h} \, \weightsV{\ell,h}
    \;:\;
    h \in [\numHeads]
    \;\right\}.
\end{align}

Notably, after several self-attention layers, a representation may accumulate
information written by the output-value projections of any set of heads in
previous layers. We thus define the \defn{action basis} $\valActions{1:\ell}$
of a collection of layers $1:\ell \in [\numLayers]$, as well as its
\defn{inverse action basis} $\valActions{\ell:1}$:
\begin{align}
    \valActions{1:0} = \{\, \id \,\}
     &  &
    \valActions{1:\ell} = \valActions{\ell} \; \valActions{\ell - 1} \; \ldots \; \valActions{1}
     &  &
    \valActions{\ell:1} = \left\{\;
    \mathbf{V}^+
    \;:\;
    \mathbf{V} \in \valActions{1:\ell}
    \;\right\}
\end{align}
where the product of two matrix sets $\mathcal{A}$ and $\mathcal{B}$ is defined as $\mathcal{A} \mathcal{B} = \{\, \mathbf{AB} : \mathbf{A} \in \mathcal{A},\, \mathbf{B} \in \mathcal{B} \,\}$ and $\mathbf{V}^+$ denotes the Moore-Penrose pseudo-inverse of $\mathbf{V}$. Note that $\valActions{1:\ell}$ contains exactly $(\numHeads + 1)^\ell$ matrices, one for each subset of heads in layers $1$ to $\ell$ with at most one head per layer.

Finally, each attention head may construct keys and queries using any of the
representations that have accumulated up to that layer. This is achieved by
rotating a selection matrix on the left side for the query and on the right
side for the key, aligning the desired subspaces.
Likewise, the final output
layer may select any token representation by reading from the appropriate
subspace. Therefore, we define the \textbf{basis weights} $\mathcal{W}_{\ell}$
at layer $\ell \in [\numLayers]$ and the \textbf{basis output weights}
$\bar{\mathcal{W}}$ as
\begin{align}
    \mathcal{W}_\ell = \valActions{1:\ell-1} \,
    \big( \tokActions \cup \posActions \big) \,
    \valActions{\ell-1:1}
     &  &
    \bar{\mathcal{W}} = \tokActions \, \valActions{\numLayers:1}.
\end{align}
Intuitively, each basis weight in $\mathcal{W}_\ell$ composes three operations: its right factor (in $\valActions{\ell-1:1}$) reads from one position's residual stream a representation written there by some subset of earlier heads, its middle factor (in $\tokActions \cup \posActions$) applies an elementary token or position action to it, and its left factor (in $\valActions{1:\ell-1}$) matches the result against a representation written into another position's residual stream by a (possibly different) subset of earlier heads.\footnote{%
The inverse is needed on the right side to read information from an earlier head which encoded it in its write space. E.g., if a first-layer head stored some information $\emb{\token}$ using the projection $\values{1} = \weightsP{1,1}\weightsV{1,1}$, one must decode it by using the inverse, to recover value $\emb{\token}$ via $\values{1}^{+}\values{1}\emb{\token}$. The left side does not need an inverse since it is used in the attention head through a transpose, $\tp{\big(\values{1}\,\emb{\token}\big)} = \tp{\emb{\token}}\,\tp{\values{1}}$, which, by our assumptions (the value maps are orthogonal projections), already equals the required pseudo-inverse: $\tp{\values{1}} = \values{1}^{+}$.%
}
Similarly, each basis output weight in $\bar{\mathcal{W}}$ reads a representation written by a subset of heads and maps it to an output token, either directly or through its association.\looseness=-1

\newcommand{\calX}{\mathcal{X}}

\subsection{Invariant Dynamics Theory}
\label{sec:IMIR-theory}

We now move forward to our main theorem.
Before that, we highlight that our result relies on a few assumptions, which we state informally here and formally in \cref{sec:assumptions}. 
First, we put forward the following \emph{data-related assumptions}: 
(i)~token associations in $\setAssoc$ are fully interchangeable, so the set of valid inputs $\calX$, where $\ctx \in \calX \subseteq \vocab^N$, is closed under association-preserving token permutations; 
(ii) block positions carry random offsets; and
(iii)~the embeddings of tokens which do not belong to any association in $\setAssoc$, termed here rare tokens, are lexinvariant \citep{huang2023lexinvariant} (i.e., they are re-sampled for every sequence).
We believe these assumptions to be straightforward and motivated by the traditional structure of block-list tasks.
Second, we put forward the following \emph{architecture-related assumptions}:
(iv)~token and position embeddings
are orthonormal;
(v)~we analyze the merged key--query product $\tp{\weightsQ{\ell}}\weightsK{\ell}$ as a single learnable matrix; and 
(vi)~the output--value maps are fixed projections onto mutually orthogonal subspaces (following the associative-memory \citep{bietti2024birth} and disentangled-residual-stream view \citep{nichani2024transformers, friedman2023learning}).
The extension of our results to models with learnable value projection matrices is sketched in appendices \ref{sec:multi-head} and \ref{sec:circuit-folding}.

\mymacro{\manifold}{\mathcal{S}}

\begin{restatable}[Gradient Confinement]{thm}{thmGradientConfinement}
    \label{thm:gradient-confinement}
    Consider the population-wise training dynamics of a model satisfying \cref{asm:embeddings,asm:merged-kq,asm:ortho-values} on a block-list task satisfying \cref{asm:data-symmetry,asm:data-offsets,asm:lexinvariance}.
    Further, let $\manifold$ be the space of transformers with $\tp{\mathbf{W}^{(\ell, h)}_Q} \; \mathbf{W}^{(\ell, h)}_K \;\in\; \vecSpan(\mathcal{W}_\ell)$ and $\weightsOut \in \vecSpan(\bar{\mathcal{W}})$ and let 
    $\weightsAll$ be the tuple of all weights in this transformer.
    If a transformer has weights in $\manifold$, then its expected gradient is confined to $\manifold$, i.e.:
    \begin{align}
        \weightsAll \in \manifold 
        \quad \implies \quad
        \EV \big[\nabla_{\weightsAll}\loss(\weightsAll) \big] \in \manifold.
    \end{align}
\end{restatable}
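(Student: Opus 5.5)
The approach is a symmetry argument: $\manifold$ should be the fixed-point subspace of a group acting on parameters, and gradients of invariant losses preserve fixed-point subspaces. I would proceed in three steps.

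\textbf{Step 1: build the group $G$.} Its elements are the admissible token permutations $\perm \in \operatorname{Aut}(\vocab,\setAssoc)$, the block-offset shifts of \cref{asm:data-offsets}, and the orthogonal transformations of the rare-token subspace allowed by \cref{asm:lexinvariance}. By \cref{asm:embeddings}, each element acts on $\R^{\embedDim}$ as an orthogonal matrix $\mathbf{U}$. Token permutations and rare-token rotations act inside the token subspace. Position shifts act inside the positional subspace as rotations of the sinusoidal frequency planes, since a shift by $\offset$ rotates each $(\sin,\cos)$ pair. Each $\mathbf{U}$ is extended to the writing subspaces of \cref{asm:ortho-values} by conjugation, $\mathbf{U}\mapsto \values{}\mathbf{U}\values{}^{+}$ on the image of each fixed projection $\values{}$. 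This extension makes $\mathbf{U}$ commute with every element of $\valActions{1:\ell}$.

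\textbf{Step 2: equivariance and invariance of the loss.} Acting by $\mathbf{U}$ on a key--query matrix by $\mathbf{A}\mapsto \mathbf{U}\mathbf{A}\tp{\mathbf{U}}$, and likewise on $\weightsOut$, transforms the forward pass on input $\ctx$ into the forward pass of the transformed model on input $g\ctx$. The reasons are that attention scores are bilinear in the residual stream, the value maps are fixed and commute with $\mathbf{U}$, and the tied unembedding is permuted consistently. Assumptions (i)--(iii) make the input distribution $G$-invariant, so $\EV[\loss(g\cdot\weightsAll)]=\EV[\loss(\weightsAll)]$. The standard consequence is: if $g\cdot\weightsAll=\weightsAll$ for all $g$, then $\EV[\nabla\loss(\weightsAll)]$ is also $G$-fixed, because $\nabla$ is equivariant under orthogonal actions.

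\textbf{Step 3: identify the fixed subspace with $\vecSpan(\mathcal{W}_\ell)$ and $\vecSpan(\bar{\mathcal{W}})$.} This is where the real work lies. It amounts to computing the commutant of the representation of $G$ on each block $\values{a}^{\top}(\cdot)\values{b}$ of the residual stream.

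For the token part, $\operatorname{Aut}(\vocab,\setAssoc)$ permutes the pairs $\{\itm_i,\lbl_i\}$ arbitrarily and swaps elements within pairs. Its commutant on the span of common-token embeddings is therefore spanned by $\idC$, $\commCorrel$, and the all-ones pair-averaging operator. I would need to remove the all-ones term, or argue that it cannot appear; one option is to show it is cancelled by the softmax shift invariance, or to include it in $\idT$ by convention. On the rare-token subspace, invariance under orthogonal rotations leaves only multiples of the identity, which merge into $\idT$.

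For positions, commuting with all shift rotations forces each frequency block to be a linear combination of rotations. In general these are not exactly $\{\idP,\posCorrel,\dots,\posCorrel^{\tupleSize-1}\}$. The restriction to $\tupleSize$ powers must come from the block structure: offsets randomize only between blocks, so positions within a block keep relative shifts $0,\dots,\tupleSize-1$. Making this precise is the step I expect to be the main obstacle. My first attempt would be to work with the finite-dimensional position subspace spanned by the actual sinusoids and show that its relevant commutant, restricted to pairs of positions within one block, is spanned by these $\tupleSize$ powers.

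Cross terms between the token and position subspaces vanish, because the group acts independently on the two subspaces with no common trivial component. Cross terms between different writing subspaces reduce, through the conjugation extension, to $\valActions{1:\ell-1}(\cdot)\valActions{\ell-1:1}$. Combining these pieces gives the stated spans and completes the proof.
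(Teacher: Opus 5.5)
Your skeleton matches the paper's proof. It uses the same token-permutation and offset-shift rotations, extended through the fixed value maps by conjugation (the paper's layer-wise $\Lambda^{(\ell)}$). It then shows that scores and loss are invariant, that the population gradient is therefore invariant, and computes the commutant block by block, with token--position cross blocks vanishing.

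The genuine gap is the positional block, which you flag but propose to close by the wrong mechanism. The per-frequency shifts of \cref{asm:data-offsets} only force each $2\times 2$ frequency block to be a scaled rotation. That commutant is $2m$-dimensional, strictly larger than $\mathrm{span}\{\mathbf{I}^{(p)},\mathbf{M},\dots,\mathbf{M}^{k-1}\}$ whenever $2m>k$; the paper's own experiments have $k=2$, $m=4$. Restricting to within-block position pairs does not cut this down. The block is $\mathbb{E}[\sum_{i,i'}g_{ii'}\,\mathbf{p}_i\mathbf{p}_{i'}^{\top}]$ with $g_{ii'}$ the score gradient. For $i,i'$ in different blocks, $g_{ii'}$ depends on the random relative phases through the positional part of the scores themselves. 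Averaging therefore leaves a generically nonzero scaled rotation in every frequency plane, so cross-block pairs do not drop out.

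The missing ingredient is that the reduction to $k$ powers comes from the frequencies. \cref{sec:architecture} takes $\omega_j=2\pi N_j/k$, so $\mathbf{M}^k=\mathbf{I}^{(p)}$ and only $k$ residues mod $2\pi$ occur. You also need data symmetries acting \emph{across} frequency planes. The paper permutes planes with equal residue: offsets are i.i.d., so the data law is preserved, and every polynomial in $\mathbf{M}$ is fixed, which forces those blocks to be equal. It then argues from the form of the gradient that the blocks at $\omega$ and $-\omega$ are transposes. That argument, like yours, really only addresses within-block pairs.

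Within your framework, the clean completion is to add two kinds of elements to $G$:
\begin{enumerate}
\item the equal-residue permutations above;
\item for each pair with $\omega_{j'}\equiv-\omega_j$, the reflection $\mathbf{q}_j\mapsto-\mathbf{q}_{j'}$, $\mathbf{q}_{j'}\mapsto-\mathbf{q}_j$, $\mathbf{r}_j\leftrightarrow\mathbf{r}_{j'}$, with $j=j'$ allowed when $\omega_j\in\{0,\pi\}$.
\end{enumerate}
The reflection swaps and negates the two planes' offsets and fixes $\mathbf{M}$. The commutant of the enlarged group is then exactly the span above: one real parameter for residues $0$ and $\pi$, and one complex parameter per pair $\pm\omega$.

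On the token side there are two smaller problems. First, the all-ones operator cannot be absorbed into $\mathbf{I}^{(t)}$ ``by convention'', because it is not in $\mathrm{span}\{\mathbf{I}^{(t)},\mathbf{C}\}$. The paper removes it with the centering clause of \cref{asm:embeddings}. However, centering is incompatible with nonzero, mutually orthogonal, equal-norm embeddings. The proof body itself only claims $c_1\approx 0$, on the grounds that all tokens have equal overlap with the all-ones direction. Combined with score gradients summing to zero over keys and the output residual (predicted minus target distribution) summing to zero over the vocabulary, this is exactly your softmax-shift idea. It is exact for common tokens and only approximate once resampled rare embeddings enter.

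Second, \cref{asm:lexinvariance} grants no orthogonal-rotation symmetry of a separate rare-token subspace. It matches only first and second moments, and the matched second moment forces rare embeddings into the common-token span almost surely. So there is no separate rare identity to ``merge into'' $\mathbf{I}^{(t)}$. What the argument actually needs is that the rare-embedding law be invariant under the same permutation matrices $\mathbf{E}^{(t)}$ acting on common tokens. The paper also takes this for granted rather than deriving it from moment matching.
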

\begin{proofsketch}
    Our proof relies on two key symmetries of the data, regarding token embeddings and position embeddings, respectively. First, the only correlations between tokens are those between associated tokens. This means that the data distribution is preserved under any permutation of tokens that preserves common associations. Second, blocks have random offsets to simulate an arbitrary number of irrelevant tokens between each pair of consecutive blocks. This means that the data distribution is preserved when changing the positional offsets of blocks.
    We apply these two transformations (token permutation and position offset
    shifts) to the model input and compute how this transforms the hidden
    representations and model output. A key result is that all attention scores are
    perfectly invariant to our transformations, hence the hidden and output
    transformations are essentially identical to our input transformations (applied
    to different subspaces). Importantly, passing permuted tokens through the model is equivalent to permuting its outputs, which implies that the loss is also invariant to our
    transformations for any input sequence.
    Finally, since the data distribution is invariant to our transformations, the expected
    gradients must also be invariant to the same transformations. We compute
    exactly how the gradient is transformed under our input transformations and we
    show that any gradients invariant under those transformations must fit the
    structure described above. We provide a comprehensive proof in
    \cref{sec:invariant-space-proof}.
\end{proofsketch}

\begin{corollary}[Invariant Manifold]
    \label{thm:invariant-manifold}
    Under the same assumptions, $\manifold$ is an \textbf{invariant manifold}\footnote{Not to be confused with \emph{invariant
        subspaces}, referring to vector subspaces preserved by linear maps.} under the population-wise training dynamics: if $\weightsAll \in \manifold$ at some training step, then $\weightsAll \in \manifold$ for the rest of training. In other words, the weight structure described above is preserved during training with gradient descent.
\end{corollary}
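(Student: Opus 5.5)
The corollary follows from \cref{thm:gradient-confinement} together with the fact that $\manifold$ is a linear subspace of parameter space. I first check linearity. The constraints defining $\manifold$ are membership conditions $\tp{\mathbf{W}^{(\ell,h)}_Q}\mathbf{W}^{(\ell,h)}_K \in \vecSpan(\mathcal{W}_\ell)$ and $\weightsOut \in \vecSpan(\bar{\mathcal{W}})$. By \cref{asm:merged-kq}, the merged key--query product is itself the learnable parameter. By \cref{asm:ortho-values}, the output--value maps are fixed throughout training, so the basis sets $\mathcal{W}_\ell$, $\valActions{1:\ell}$ and $\bar{\mathcal{W}}$ do not change along the trajectory. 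Hence $\manifold$ is a fixed finite-dimensional vector subspace of the space of trainable weights, and in particular it is closed under addition and scalar multiplication.

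For discrete gradient descent, I argue by induction on the step count $t$. Suppose $\weightsAll_t \in \manifold$. By \cref{thm:gradient-confinement}, $\EV[\nabla_{\weightsAll}\loss(\weightsAll_t)] \in \manifold$. Therefore the population-wise update
\[
\weightsAll_{t+1} = \weightsAll_t - \eta_t\,\EV[\nabla_{\weightsAll}\loss(\weightsAll_t)]
\]
is a linear combination of two elements of $\manifold$, so it also lies in $\manifold$. This holds for any step sizes $\eta_t$. Since every later iterate stays in $\manifold$, the claim follows. The same argument covers momentum or any optimizer whose update is a linear combination of past expected gradients, because each such update is also a combination of elements of $\manifold$. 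Coordinate-wise adaptive methods such as Adam are a different matter: they do not commute with the basis, so I would explicitly restrict the statement to plain (or momentum) gradient descent.

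For the continuous-time gradient flow $\dot{\weightsAll} = -\EV[\nabla\loss(\weightsAll)]$, I would take one of two routes. The first is to let $P$ be the orthogonal projector onto $\manifold$ and compare the true trajectory with the solution of the projected flow $\dot{\weightsAll} = -P\,\EV[\nabla\loss(\weightsAll)]$ started at the same point. The projected solution stays in $\manifold$ by construction. On $\manifold$, \cref{thm:gradient-confinement} gives $P\,\EV[\nabla\loss] = \EV[\nabla\loss]$, so the projected solution also solves the original ODE. Uniqueness under a locally Lipschitz vector field then identifies the two trajectories. The second route is to apply Grönwall's inequality to the distance $\|(I-P)\weightsAll\|$.

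The only real obstacle is this regularity check. I need the expected gradient to be locally Lipschitz, which I would argue from the smoothness of softmax attention and cross-entropy, the finite vocabulary and finite sequence length, and bounded random embeddings for the lexinvariant rare tokens. That bound allows differentiation under the expectation.
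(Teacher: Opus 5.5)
Your discrete-time induction is the paper's proof: a population gradient step adds an element of $\manifold$ (by \cref{thm:gradient-confinement}) to an element of $\manifold$. You are also right that $\manifold$ is a fixed linear space only because of \cref{asm:merged-kq,asm:ortho-values}. Your momentum and gradient-flow extensions go beyond what the paper claims and are sound, with two small caveats. First, for momentum the buffer must itself lie in $\manifold$; this is automatic if the trajectory starts on $\manifold$ with a zero buffer, but not if it only enters $\manifold$ at some later step. Second, local Lipschitzness of the expected gradient needs bounded rare-token embeddings (or enough finite moments), which is slightly stronger than \cref{asm:lexinvariance} as stated.
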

\begin{proof}
    A gradient-descent step updates $\weightsAll \mapsto \weightsAll - \eta\, \EV[\nabla_{\weightsAll}\loss(\weightsAll)]$. By \cref{thm:gradient-confinement} the update direction lies in $\manifold$, and the constraints defining $\manifold$ are linear in the key--query and output weights, so $\weightsAll \in \manifold$ implies the updated weights remain in $\manifold$.
\end{proof}

\subsection{The Geometry of Reasoning Circuits}
\label{sec:imir-interp}

Each of the basis weights in $\mathcal{W}_{\ell}$ and $\bar{\mathcal{W}}$ corresponds to a simple and interpretable action over specific subspaces of embeddings. 
For attention layers, such interpretable actions might be, for
example, ``attending to the previous position'' or ``attending to the token retrieved by head 1'', while an output action might be something like ``read the token retrieved by head 3''.
When the model is on the IMIR, by decomposing it into its
components corresponding to each weight matrix in $\mathcal{W}_{\ell}$ and $\bar{\mathcal{W}}$, we can describe the model as a sum of highly interpretable actions. 
When the model is not on the IMIR, projecting it onto the IMIR provides a partial description.
We give a more comprehensive and intuitive description of the weight spaces in \cref{sec:IMIR-intuitive}. 
We also give a theoretical motivation for the weight spaces from representational symmetries in \cref{sec:IMIR-symmetry}, discussing why they capture all actions relevant for inductive tasks.

\section{Understanding Circuit Competition}
\label{sec:circuit-competition}

\subsection{Data Distributional Properties of Natural Language}
\label{sec:data-properties}

\mymacro{\freqR}{f_r}
\mymacro{\freqC}{f_c}
\mymacro{\freqWCV}{f_{v}}
\mymacro{\burstiness}{b}
\mymacro{\noiseAmount}{\varepsilon}
\mymacro{\noise}{\eta}
\mymacro{\dataDist}{\mathcal{D}}

One important aspect not discussed above is how to sample the tokens $\ctx$ used in the tasks. 
Notably, prior work \citep{chan2022data,kim2025training,reddy2023mechanistic,zucchet2025emergence} has shown that this choice can have a strong impact on the training dynamics of language models, and, in particular, that standard properties of token distributions in natural languages (e.g., Zipfian distributions or burstiness) can accelerate learning.
We incorporate a number of natural language distributional properties into
our setup as follows:
\begin{enumerate}[leftmargin=*]
    \item \textbf{Tokens differ in frequency.}
        Token frequencies in natural language typically follow a long-tailed distribution \citep{piantadosi2014zipf, zipf1949principle}. 
        To model this and bring our setup closer to natural language (while keeping it analytically tractable), we consider a subset of the tokens in our vocabulary to form a set of \defn{common tokens} $\vocabC$ with total probability of occurrence  $\mathbb P(v \in \vocabC) = 1- f_r$ where $f_r \in [0;1]$. 
        The remaining tokens are termed \defn{rare tokens}: $\vocabR = \vocab \setminus \vocabC$, with $\mathbb P(v \in \vocabR) = f_r$. 
        We assume that rare tokens form a virtually infinite set, where no rare token is ever observed twice.\footnote{
        In practice, we model such a virtually ``infinite'' vocabulary using a finite vocabulary and a lexinvariant assumption \citep{huang2023lexinvariant}, where a rare token's embeddings are repeatedly re-sampled at every batch, inhibiting the model from memorizing it.}\looseness=-1
    \item \textbf{Tokens can be bursty.} Documents in natural language tend to contain a concept or association multiple times, or not at all. In other words, in natural language, concepts do not show up uniformly, but rather in `bursts'. Consider, for example, the word `burstiness' itself, which appears several times in this document, while most works do not contain it at all. We incorporate this behavior in our task class by assuming that block pairs present in the context have an average multiplicity of $\burstiness \ge 1$. For example, an input sequence with $\burstiness = 2$ might look like "$\mathrm{ab,gh,gh,ab,g\rightarrow h}$". We write $\dataDist(\burstiness)$ for the resulting data distribution at burstiness $\burstiness$, making the dependence explicit when we compare gradients across burstiness levels.
    \item \textbf{Tokens are polysemous.}
          The same token can present different meanings and associations in different documents.
          For example, in computer science books, the name `Alan' might be commonly followed by the surname `Turing', but at times it might also be followed by a different surname such as `Baker'.\footnote{
          Alan Baker (1939–2018) was an English mathematician, known for his work on effective methods in number theory, in particular those arising from transcendental number theory.} 
          In our setup, we model this aspect by assuming that the next token to be predicted after a common token differs from its common association with frequency $\freqWCV \in [0,1]$.
\end{enumerate}

Past work has shown that in-context learning abilities emerge optimally in
transformers trained with many rare tokens, high burstiness, and high
within-class variability \citep{chan2022data,kim2025training}. In the absence
of these properties, in-context learning emerges after significantly more
training, or not at all.
The same properties were shown to enable learning in a 3-parameter
phenomenological model of induction-head formation
\citep{reddy2023mechanistic}. Finally, burstiness was shown to accelerate the
emergence of sparse attention in single-layer transformers
\citep{zucchet2025emergence}. We exemplify these properties in
\cref{fig:block_list}.

\subsection{The Role of Data: In-Context Learning \emph{vs.}\ In-Weights Learning}
\label{sec:icl-vs-iwl}

\newcommand{\mathcomment}[1]{\text{\textcolor{gray}{#1}}}

We now use our \cref{thm:invariant-manifold} to study a very simple setting, but which already exhibits circuit competition: a 2-layer, single-head attention-only transformer trained to predict the second token of an in-context bigram.  
The IMIR for this architecture is 28-dimensional,\footnote{The IMIR has 28-dimensions for the space of key-query products and final projections, not the full parameter space.} with 4 dims for the first layer actions, 16 dims for the second layer, and 8 dims for output layer:
\begin{subequations}
\begin{align}
    \tp{\weightsQ{1,1}} \weightsK{1,1} &\in
    \vecSpan(\underbrace{\{\idT, \commCorrel, \idP, \posCorrel\}}_{\tokActions \cup \posActions}) \\
    \tp{\weightsQ{2,1}} \weightsK{2,1} &\in 
    \vecSpan(\underbrace{\{\identity, \weightsP{1, 1}\weightsV{1, 1}\}}_{\valActions{1:\ell-1}}
    \underbrace{\{\idT, \commCorrel, \idP, \posCorrel\}}_{\tokActions \cup \posActions\;}
    \underbrace{\{\identity, \big(\weightsP{1, 1}\weightsV{1, 1}\big)^{\!+}\}}_{\valActions{\ell-1:1}}) \\
    \weightsOut &\in 
    \vecSpan(\underbrace{\{\idT, \commCorrel\}}_{\tokActions} \underbrace{\{\identity, \big(\weightsP{2, 1}\weightsV{2, 1}\big)^{\!+}\}\{\identity, \big(\weightsP{1, 1}\weightsV{1, 1}\big)^{\!+}\}}_{\valActions{\numLayers:1}})
\end{align}
\end{subequations}
As mentioned above, this IMIR allows us to additively decompose a model's learned weights into these interpretable directions, which we do and plot in \cref{fig:icl-vs-iwl} (center-bottom) (we also plot each parameter individually in \cref{sec:icl-vs-iwl-all-circuits}).
Notably, from this figure, we see that learning appears to concentrate on four directions, three of which are useful for \emph{in-context learning} (ICL) and one for \emph{in-weights learning} (IWL):
\begin{subequations}
\begin{align}
    \alpha:\;\; & \tp{\weightsQ{1,1}} \weightsK{1,1} \,\propto\, \posCorrel,
                &
    \gamma:\;\; & \weightsOut    \,\propto\, \idT\,\big(\weightsP{2, 1}\weightsV{2, 1}\big)^{\!+},
    \\
    \beta:\;\;  & \tp{\weightsQ{2,1}} \weightsK{2,1} \,\propto\, \idT\,\big(\weightsP{1, 1}\weightsV{1, 1}\big)^{\!+},
                &
    \delta:\;\; & \weightsOut    \,\propto\, \commCorrel.
\end{align}
\end{subequations}
Note that $(\alpha, \beta, \gamma)$ assemble the canonical induction head: the $\alpha$ subspace allows the first layer's head to pay attention to a
previous token, the $\beta$ enables the second layer's head to match the lookup token $\ctx^n_1$ to a previous one, and then $\gamma$ reads the information out and extracts the retrieved label.
Together they thus implement in-context learning, the only solution that works on the rare part of the data. 
The fourth direction $\delta$ accounts for in-weights learning, routing the input token to its canonical partner through a memorized association map $\commCorrel$.
Causal ablations confirming that $(\alpha, \beta, \gamma)$ alone yields ICL and $\delta$ alone yields IWL are reported in \cref{sec:icl-vs-iwl-ablations}.

\begin{figure}[t]
    \centering
    \includegraphics[width=\linewidth]{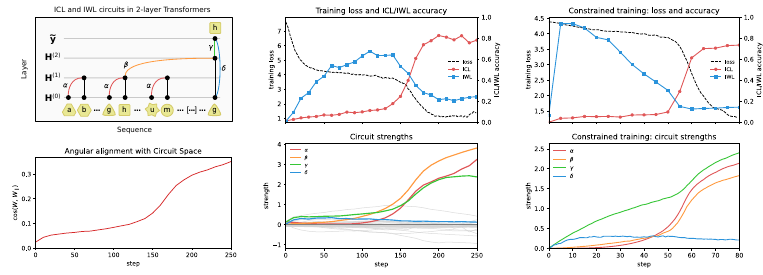}
    \caption{In two-layer transformers, ICL and IWL circuits compete during training. Top left: ICL has three circuit units ($\alpha$, $\beta$, and $\gamma$, forming an induction head) and IWL has one circuit unit ($\delta$, predicting the next token directly from the current token). Bottom left: the parameters become aligned with the IMIR during training. Center: the emergence of ICL/IWL abilities and circuits. Right: emergence of ICL/IWL during isolated training (only $\alpha$, $\beta$, $\gamma$, $\delta$) exhibits similar dynamics. Training details in \cref{sec:icl-vs-iwl-training}.\looseness=-1}
    \label{fig:icl-vs-iwl}
\end{figure}

\paragraph{IWL emerges first and starves ICL of gradient.}
The direction $\delta$ is a one-dimensional fix that already solves every query answerable by a memorized canonical association---i.e., common-token queries without within-class variation, which make up a $(1 - \freqR)(1 - \freqWCV)$ fraction of the data. 
It is the easiest direction for SGD to discover, empirically saturating well before $(\alpha, \beta, \gamma)$ have moved appreciably (\cref{fig:icl-vs-iwl}, center).
Once $\delta$ is in place, then \cref{thm:icl-rare-tokens} (presented below) starts taking effect, with ICL gradients---i.e., gradients with respect to the $(\alpha, \beta, \gamma)$ coordinates---being suppressed exactly on the data handled by IWL,
\begin{equation}
    \nabla_\mathrm{ICL} \loss(\weightsAll + \weightsAll_\mathrm{IWL})
    \;\approx\;
    \big(\freqR + (1 - \freqR)\,\freqWCV\big)\;
    \nabla_\mathrm{ICL} \loss(\weightsAll).
\end{equation}
The induction head therefore emerges later than it would in a
rare-only ($\freqR = 1$) world, with a delay that grows as $\freqR$
decreases or as $\freqWCV \to 0$. Per-circuit dynamics across data
regimes are in \cref{sec:icl-vs-iwl-all-circuits}.

\begin{restatable}{thm}{thmICLrareTokens}
    \label{thm:icl-rare-tokens}
    In the presence of an IWL circuit with logit scale $\delta$, the ICL circuit
    receives a population gradient suppressed by a data-dependent factor:
    \begin{equation}
        \EV\big[\nabla_\mathrm{ICL} \loss(\weightsAll + \weightsAll_\mathrm{IWL})\big]
        \;=\;
        \Big(
        \freqR + (1 - \freqR) \freqWCV
        \Big) \;
        \EV\big[\nabla_\mathrm{ICL} \loss(\weightsAll)\big]
        \;+\; O\!\big(|\vocab|\, e^{-\delta} + |\vocab|^{-1}\big),
    \end{equation}
    where $\weightsAll$ denotes the ICL-only configuration (no in-weights learning).
    The error term vanishes in the saturated-IWL ($\delta \to \infty$) and
    large-vocabulary ($|\vocab| \to \infty$) limit.
\end{restatable}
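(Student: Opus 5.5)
The plan is to decompose the population gradient by data type and use the fact that the IWL direction $\delta$ acts only on the output logits. I would first write the cross-entropy gradient with respect to any ICL coordinate $\theta \in \{\alpha,\beta,\gamma\}$ for a single sequence $\ctx$ as
\begin{equation}
    \nabla_\theta \loss(\ctx) \;=\; \tp{\big(\mathbf{p}(\ctx) - \mathbf{u}_{\ctx}\big)}\, \partial_\theta \mathbf{z}(\ctx),
\end{equation}
where $\mathbf{z}$ are the logits, $\mathbf{p} = \softmax{\mathbf{z}}$ is the predicted distribution, and $\mathbf{u}_{\ctx}$ is the one-hot target. Adding $\weightsAll_\mathrm{IWL} = \delta\,\commCorrel$ to $\weightsOut$ leaves every attention pattern and residual stream unchanged, because the IWL direction lives only in the output weight. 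Hence $\partial_\theta \mathbf{z}$ is the same with or without IWL. The direction $\delta\,\commCorrel$ also commutes with the $\gamma$-readout in the sense that $\partial_\gamma \mathbf{z}$ does not depend on $\delta$. The only change is therefore the additive logit shift $\delta\,\tp{\embedMatrix}\commCorrel\,\hiddenlast{\numLayers}$, which through the residual stream carries $\emb{\ctx^n_1}$. This shift boosts the canonical partner of the query token by $\delta$ and leaves all other logits unchanged up to cross terms.

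Next I would split the expectation according to the three data types of \cref{sec:data-properties}. The first type is rare-token queries, with mass $\freqR$. The second is common queries whose target deviates from the canonical association, with mass $(1-\freqR)\freqWCV$. The third is common queries with canonical targets, with mass $(1-\freqR)(1-\freqWCV)$.

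For rare queries, $\commCorrel\,\emb{\ctx^n_1} = 0$ because rare tokens are unpaired and lexinvariant (\cref{asm:lexinvariance}), so the logits and gradient are unchanged exactly.

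For canonical common queries, the target receives an extra logit $\delta$. Then $\mathbf{p} - \mathbf{u}_\ctx$ has norm $O(|\vocab| e^{-\delta})$, because each of the at most $|\vocab|$ competitors has probability at most $e^{-\delta}$ times its ICL-only odds. With $\partial_\theta \mathbf{z}$ bounded, this gives a contribution of $O(|\vocab| e^{-\delta})$. In the ICL-only reference these sequences contribute $(1-\freqR)(1-\freqWCV)\,g_\mathrm{canon}$, which must be absorbed into the comparison. By the token symmetry of \cref{asm:data-symmetry}, the ICL-only per-sequence gradient depends on the sequence only through its relational type. Rare queries and common queries are indistinguishable to the ICL circuit, since $\alpha,\beta,\gamma$ use $\idT$ and $\posCorrel$ only and never $\commCorrel$. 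So the ICL-only expectation equals the common value $g$ on every stratum, and $\EV[\nabla_\mathrm{ICL}\loss(\weightsAll)] = g$.

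The main obstacle is the $\freqWCV$ stratum. There the target is not the canonical partner $\lbl$, but $\delta$ inflates $\lbl$'s logit. This makes $\mathbf{p}$ concentrate on the wrong token $\lbl$ rather than reproduce the ICL-only distribution, so the residual $\mathbf{p}-\mathbf{u}_\ctx$ is not obviously equal to its ICL-only value. My first attempt would be to show that only the $\lbl$ coordinate of $\mathbf{p}-\mathbf{u}_\ctx$ changes appreciably. I would then argue that $\partial_\theta z_{\lbl}$ vanishes up to $O(|\vocab|^{-1})$ under the ICL circuit, since $\lbl$ is generically absent from the context in the large-vocabulary limit: the $\gamma$ readout only raises logits of tokens retrieved from context, and with a large vocabulary the canonical partner appears in context with probability $O(|\vocab|^{-1})$. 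Granting that, the inner product $\tp{(\mathbf{p}-\mathbf{u})}\partial_\theta\mathbf{z}$ matches the ICL-only one up to renormalisation of the remaining softmax mass. I expect this renormalisation to be the hardest estimate to control, and I would bound it using the $O(|\vocab|^{-1})$ per-token probability mass.

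Summing the strata gives $(\freqR + (1-\freqR)\freqWCV)\,g$ plus errors $O(|\vocab|e^{-\delta} + |\vocab|^{-1})$, which is the claimed identity.
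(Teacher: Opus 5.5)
Your structure matches the paper's proof. You use the same three-way split into rare, canonical and scrambled queries. You note that only $\mathbf{p}$, not the logit Jacobian, carries the $\delta$-dependence. You get an $O(|\mathcal{V}|e^{-\delta})$ residual on the canonical stratum, exact invariance on rare queries, and the same ICL-only gradient on every stratum.

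The gap is in the step you flag as the crux. On the scrambled stratum $\Delta p_{\lbl} = \Theta(1)$, because the IWL term moves almost all probability mass onto the canonical partner. You therefore need $\partial_\theta z_{\lbl}$ to be $O(|\mathcal{V}|^{-1})$ in expectation, and you justify this by saying that $\lbl$ appears in context with probability $O(|\mathcal{V}|^{-1})$. Under the paper's sampler this is false:
\begin{itemize}
\item $\lbl$ is a \emph{common} token, and common tokens are drawn at total rate $1-f_r$.
\item The common vocabulary has size $|\mathcal{V}^c|$, which stays fixed as $|\mathcal{V}|\to\infty$; only the rare vocabulary grows.
\item Hence $\lbl$ appears in some distractor block with probability $\Theta(n(1-f_r)/|\mathcal{V}^c|)$, independently of $|\mathcal{V}|$.
\end{itemize}
On that event the induction circuit reads $\lbl$, so $\partial_\theta z_{\lbl}$ is no longer small. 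The scrambled-stratum correction then picks up a term of order $n(1-f_r)/|\mathcal{V}^c|$ that survives the large-vocabulary limit.

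The paper does not derive this step either. It adds a distractor-disjointness hypothesis: no distractor block contains the query's canonical partner as item or label. Under it, the canonical-partner coordinate of the Jacobian is only an orthonormality fluctuation. The paper also notes that without this hypothesis the error term becomes $O(|\mathcal{V}|^{-1} + n(1-f_r)/|\mathcal{V}^c|)$. You need to impose such an assumption explicitly, or work in a limit where $|\mathcal{V}^c|$ grows with $|\mathcal{V}|$.

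Two smaller repairs are also needed.
\begin{itemize}
\item ``Residual streams are unchanged'' does not by itself make $\partial_\alpha\mathbf{z}$ and $\partial_\beta\mathbf{z}$ independent of $\delta$. We have $\partial_\theta\mathbf{z} = \weightsOut\,\partial_\theta\hiddenlast{2}$, and $\weightsOut$ contains $\delta\,\commCorrel$. You need $\commCorrel\,\partial_\theta\hiddenlast{2} = \zero$, which holds because these derivatives lie in the value-written subspaces, orthogonal to the token subspace.
\item The renormalisation estimate must be bounded as $\ell_\infty(\Delta\mathbf{p})\cdot\ell_1(\partial_\theta\mathbf{z})$. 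This uses that $\partial_\theta\mathbf{z}$ is supported on the $O(N)$ in-context tokens with $\ell_1$ mass $O(1)$. A per-token $O(|\mathcal{V}|^{-1})$ bound summed against a merely bounded vector over $|\mathcal{V}|$ coordinates gives only $O(1)$.
\end{itemize}
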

\begin{proofsketch}
    The IWL solution solves common queries whose in-context partner equals their
    canonical partner, a fraction $\freqC (1 - \freqWCV)$ of the data; there the
    loss is near-zero and, up to a residual of order $|\vocab|\,e^{-\delta}$, the
    ICL direction receives no gradient. On the remaining $\freqR + (1-\freqR)\freqWCV$
    of the data the IWL solution is either inactive (rare query) or points at the
    wrong token (scrambled partner), so the ICL gradient is unchanged up to a
    finite-vocabulary cross-talk of order $|\vocab|^{-1}$. We give a detailed
    proof in \cref{sec:proof-rare-tokens}.
\end{proofsketch}

\paragraph{Burstiness amplifies the ICL gradient.}
\Cref{thm:icl-rare-tokens} describes how the rare-token frequency $\freqR$ and the
within-class variability $\freqWCV$ reduce the gradient received by the ICL circuit.
Burstiness, which we introduced in \cref{sec:data-properties}, has the opposite
effect. In a bursty sequence, the block relevant to the query appears $\burstiness$
times in the context, and each occurrence provides the same in-context evidence to
the induction circuit, whereas the distractor blocks provide no consistent signal.
The ICL gradient therefore grows with $\burstiness$, which accelerates the emergence
of the induction head.

\begin{restatable}{thm}{thmICLburstiness}
    \label{thm:icl-burstiness}
    In the distractor-dominated regime, where the number of context blocks grows
    with the burstiness held fixed ($\seqPairs \to \infty$ with $\burstiness$ fixed,
    so $\seqPairs \gg \burstiness$), the ICL circuit receives a population gradient
    proportional to burstiness:
    \begin{equation}
        \EV_{\dataDist(\burstiness)}\big[\nabla_\mathrm{ICL} \loss(\weightsAll)\big]
        \;=\;
        \burstiness \;
        \EV_{\dataDist(1)}\big[\nabla_\mathrm{ICL} \loss(\weightsAll)\big]
        \;\big(1 \;+\; O\!\big(\burstiness / \seqPairs\big)\big),
    \end{equation}
    where $\EV_{\dataDist(\burstiness)}$ denotes expectation over the data
    distribution at burstiness $\burstiness$.
\end{restatable}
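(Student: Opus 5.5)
We will prove the statement by splitting the population gradient into per-block contributions and exploiting the exchangeability of context blocks. On the IMIR, restricted to the ICL coordinates $(\alpha,\beta,\gamma)$, the prediction at the query is a softmax-weighted average over the $\seqPairs-1$ context blocks. The second-layer head attends with scores determined by $\beta$: a block whose first token matches the query receives score $s_+$, and any other block receives score $s_-$. This holds up to the first-layer previous-token mixing governed by $\alpha$. By the token and offset symmetries used in \cref{thm:gradient-confinement}, all distractor blocks have the same joint law, and all matching blocks do too. Hence the attention normalizer is
\begin{equation}
  Z_\burstiness = \burstiness\, e^{s_+} + (\seqPairs - \burstiness)\, e^{s_-}.
\end{equation}
We then write the gradient of the cross-entropy with respect to each ICL coordinate as a sum over blocks $j$ of the following terms: the attention weight $e^{s_j}/Z_\burstiness$, times an \emph{evidence} term (the retrieved label's logit derivative), plus the softmax-derivative correction terms.

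The second step classifies these terms by block type. A matching block contributes a term $g_+$ that depends only on the block's content relative to the query, identical in law for each of the $\burstiness$ copies. A distractor block contributes $g_-$, a term whose expectation has no consistent sign toward the target. Distractors carry a token that is a fresh sample, independent of the target. Averaging over the permutation group $\operatorname{Aut}(\vocab,\setAssoc)$ kills the target-aligned component of distractor evidence, up to $O(|\vocab|^{-1})$ cross-talk. We absorb this cross-talk into the same order as in \cref{thm:icl-rare-tokens}, or assume a large vocabulary. So the expected gradient has the form
\begin{equation}
  \burstiness\, \frac{e^{s_+}}{Z_\burstiness}\, \EV[g_+] + (\seqPairs-\burstiness)\,\frac{e^{s_-}}{Z_\burstiness}\,\EV[g_-^{\parallel}],
\end{equation}
where $g_-^{\parallel}$ is the part of $g_-$ that survives symmetrization. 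This surviving part arises only through the normalizer, i.e.\ through the term in which the matching weight competes with the distractors.

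The third step expands in the regime $\seqPairs\gg\burstiness$. Here $Z_\burstiness = (\seqPairs-\burstiness)e^{s_-}\bigl(1+O(\burstiness/\seqPairs)\bigr)$, which equals $\seqPairs e^{s_-}\bigl(1+O(\burstiness/\seqPairs)\bigr)$. So the matching-block weight is $\burstiness e^{s_+-s_-}/\seqPairs$ to leading order, which is linear in $\burstiness$. The output softmax over the vocabulary is evaluated at a residual whose target component is itself $O(\burstiness/\seqPairs)$. We therefore linearize the cross-entropy derivative around the uninformative prediction, with relative error $O(\burstiness/\seqPairs)$. After this linearization, every term of the gradient is linear in the total matching attention mass: the direct evidence term, and the normalizer terms induced through $\beta$ and $\alpha$. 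Comparing with $\burstiness=1$ then gives the factor $\burstiness$ times $\bigl(1+O(\burstiness/\seqPairs)\bigr)$.

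The main obstacle will be the $\alpha$ (first-layer) coordinate. Its gradient flows through the keys of \emph{all} blocks, so we must check two things. First, that its distractor part also vanishes after symmetrization. Second, that its matching part is linear in the count of matches, with block overlaps from the first layer's previous-token mixing contributing only $O(\burstiness/\seqPairs)$. I would handle this using the offset invariance (\cref{asm:data-offsets}): it decouples blocks, so the first-layer attention within each block is identically distributed and independent of burstiness. We also need the fraction of rare versus common queries to be unchanged by $\burstiness$, which should follow directly from the definition of $\dataDist(\burstiness)$.
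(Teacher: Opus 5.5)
Your overall argument follows the paper's proof: per-block decomposition, identically distributed relevant blocks, the shared normalizer $Z_b=b\,e^{s_+}+(n-b)\,e^{s_-}=\Theta(n)$, and control of both $Z_1/Z_b$ and the per-block numerator \emph{relative} to a leading term that is itself $\Theta(b/n)$. Your linearization of the output cross-entropy plays the role of the paper's bound $\kappa(b)=\kappa(1)(1+O(b/n))$.

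The step that fails is your handling of distractor cross-talk. You let symmetrization remove the target-aligned distractor evidence ``up to $O(|\mathcal{V}|^{-1})$'' and propose to absorb this into the error of \cref{thm:icl-rare-tokens}. That error is additive. The present statement only tolerates a multiplicative $(1+O(b/n))$, i.e.\ an additive error of order $b^2/n^2$. Any $b$-independent term that does not decay with $n$ therefore overwhelms the $\Theta(b/n)$ signal as $n\to\infty$, and drives the ratio of the two expectations to $1$ rather than $b$. Two such terms appear in your setting:
\begin{itemize}
\item Linearizing around the uniform prediction leaves, in the $\gamma$-component, the term $\sum_k A^{(2)}_{Nk}\,p_{x_k}\approx|\mathcal{V}|^{-1}$. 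This does not depend on the matching mass, so ``every term is linear in the matching mass'' is false at finite vocabulary.
\item You keep common queries. Under the paper's independent sampler, distractor tokens drawn from the fixed common vocabulary hit $a_q$ or $b_q$ at rate $\Theta(1/|\mathcal{V}^c|)$ per position. The resulting $\Theta(n/|\mathcal{V}^c|)$ collisions, each carrying $\Theta(1/n)$ attention, give a $b$-independent $\Theta(1/|\mathcal{V}^c|)$ push toward the target.
\end{itemize}
Averaging over $\operatorname{Aut}(\mathcal{V},\mathcal{R})$ cannot cancel the collision term, because collisions have a consistent sign. A large-vocabulary assumption removes the first term and rare-token collisions only if $|\mathcal{V}|\to\infty$ is taken before $n\to\infty$, and it never removes common-token collisions.

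The paper avoids all of this by working in the all-rare regime $f_r=1$ with never-repeating lexinvariant rare tokens, where distractor contributions vanish exactly. It even notes that a finite rare vocabulary would add an $O(n/|\mathcal{V}^r|)$ term. You need that restriction, or a disjointness assumption like (A4) in the proof of \cref{thm:icl-rare-tokens} with the vocabulary limit taken first. Observing that the rare/common fraction is independent of $b$ does not address the problem.

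Two smaller points. First, after linearizing the output softmax, the attention-softmax normalizer still produces terms quadratic in the matching mass, e.g.\ one proportional to $(b\,e^{s_+}/Z_b)^2$ in the $\beta$-component. Linearity therefore holds only up to a relative $O(b/n)$ error; this is harmless but should be said.

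Second, for the $\alpha$ coordinate, offset invariance does not decouple blocks: the layer-$1$ softmax at a label position ranges over all earlier positions. What you actually need is twofold:
\begin{itemize}
\item $\mathbf{W}^{(1)}\propto\mathbf{M}$ acts only on the position subspace, so the layer-$1$ attention pattern is independent of token content and hence of $b$.
\item Exchangeability of block order makes the relevant blocks identically distributed.
\end{itemize}
The leakage of layer-$1$ attention onto other relevant items is then exactly the relative $O(b/n)$ correction you anticipated. The paper's proof, which attributes all $r$-dependence to the offsets, passes over this point.
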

\begin{proofsketch}
    Burstiness acts as a multiplier on the relevant blocks: the ICL gradient
    decomposes into per-block signals that are identical for the $\burstiness$
    relevant blocks and zero in expectation for distractors, so the relevant
    signal is multiplied by $\burstiness$. The only cross-block coupling is the
    shared layer-$2$ softmax denominator $Z_q$ at the query position, which
    depends on $\burstiness$ only through a relative $O(\burstiness/\seqPairs)$
    shift; since every per-block signal scales as $1/Z_q$, this yields a
    \emph{multiplicative} $\big(1 + O(\burstiness/\seqPairs)\big)$ correction
    rather than an additive one. This is crucial, because the leading term is itself
    of order $\burstiness/\seqPairs$. We give a detailed proof in
    \cref{sec:proof-burstiness}.
\end{proofsketch}

\paragraph{The induction head is a winning ticket.}
The Lottery Ticket Hypothesis (LTH)~\citep{frankle2018lottery} posits that a
randomly initialized network contains a sparse subnetwork which, trained in
isolation, matches the dense network's performance. Our framework identifies
one such subnetwork explicitly: the four-parameter $(\alpha, \beta, \gamma,
    \delta)$ IMIR sub-manifold. We test the prediction directly via
\emph{constrained training}: at every SGD step we project the model back onto
the subspace spanned by $\alpha, \beta, \gamma, \delta$. The resulting dynamics
closely match those of the unconstrained network (\cref{fig:icl-vs-iwl},
right).

\vspace{-5pt}
\subsection{The Role of Initialization: Competing Induction Heads}
\label{sec:competing-induction-heads}
\vspace{-3pt}

\begin{figure}[t]
    \centering
    \includegraphics[width=\linewidth]{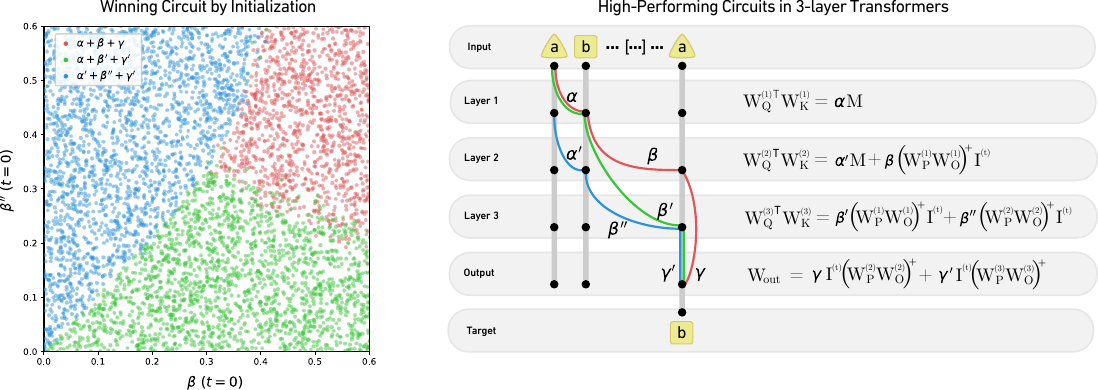}
    \vspace{-15pt}
    \caption{Training a 3-layer Transformer to solve in-context bi-grams yields three different solutions. On the left, we show the winning solution at various initializations. On the right, we show the precise mechanism of each possible solution with formulas for model weights.
    }
    \label{fig:icl-competition}
    \vspace{-5pt}
\end{figure}

An induction head can use any pair of attention heads in different layers.
Thus, in a transformer with three layers and one head per layer, trained on the same task, three distinct induction-head circuits are possible, as depicted in \cref{fig:icl-competition} (right). 
A randomly initialized model must converge to one of them (or a composition of them). 
To better understand how the initialization influences the learned circuit, we vary two parameters (denoted as $\beta$ and $\beta''$) and we plot the `winning' circuit in \cref{fig:icl-competition} (left). 
We give the full training details in \cref{sec:competing-induction-heads-training} and we plot the full training dynamics for 12 seeds in \cref{sec:competing-induction-heads-dynamics}.\looseness=-1

\vspace{-3pt}
\paragraph{Winning mechanics are complex to predict.}
In the LTH literature, winning subnetworks are typically identified by weight magnitude after training, and whether they can instead be predicted from properties of the initialization---such as initial weight magnitude---remains actively debated \citep{frankle2020pruning, paul2022unmasking, liu2024survey}.
The simplest initialization-based predictor in our setup would be that each circuit wins when its own units start large. 
We can analyze this predictor by looking at \cref{fig:icl-competition} (left).
In fact, this predictor aligns with the fact that the ($\alpha', \beta'', \gamma'$)-circuit (layers 2 and 3, blue) tends to emerge when $\beta''$ is initialized with a large magnitude. 
However, the ($\alpha, \beta', \gamma'$)-circuit (layers 1 and 3, green) seems to benefit from a larger $\beta$, which is not part of this circuit. 
Moreover, the ($\alpha, \beta, \gamma$)-circuit (layers 1 and 2, red) seems to require not just a large $\beta$ (part of the circuit), but also a large $\beta''$ (not part of the circuit).
Together, these results suggest the history is not that simple.

\vspace{-3pt}
\paragraph{The story of a circuit battle.} 
A closer look at the phase transitions in \cref{fig:icl-competition} (left) also sheds light on our transformer's learning dynamics.
First, note that these phase transitions are very sharp between the blue and green solutions and between blue and red, but not between green and red. 
This is because the red and green circuits are cooperative: both share $\alpha$, while $\beta$ and $\beta'$ are in different layers. Meanwhile, blue is competing with both red and green, trying to use the same layers in different ways. This might also explain why red requires a large $\beta''$: it stimulates the blue circuit, which in turn inhibits the green circuit.
Notably, the learning dynamics plots in \cref{sec:competing-induction-heads-dynamics} appear to support this analysis.\looseness=-1

\vspace{-5pt}
\section{Automated Circuit Detection for Induction Tasks}
\label{sec:autointerp}
\vspace{-3pt}

Our theoretical framework offers two benefits that may improve circuit discovery in trained models. 
First, the IMIR has a low dimensionality compared to the full parameter space, which greatly reduces the circuit search space.
Second, while most interpretability works identify circuits at the level of attention heads \citep{elhage2021mathematical, wang2022interpretability, conmy2023towards, sharma2025llms}, the IMIR allows us to identify circuits at the more granular level of precise weight structures, enabling us to understand the exact mechanism of each head.\footnote{The bases present in our IMIR are also similar to the variables produced by the D-RASP decompilation procedure in \citet{friedman2023learning, weiss2021thinking}.}

To illustrate these benefits, we apply our framework to the question of
Automated Circuit Discovery \citep{conmy2023towards} of induction circuits. We
train deep attention-only single-head transformers on the two-hop induction
task \citep{sanford2024transformers} (training details in
\cref{sec:autointerp-details}). We discover the learned circuit using a simple
algorithm: we project the learned parameters to the IMIR and we ablate the
dimensions that don't impact performance. We depict the recovered circuits
schematically in \cref{fig:autointerp}. We describe the complete algorithm in
\cref{sec:autointerp-algo} and the complete circuits in
\cref{sec:autointerp-circuits}.

One interesting finding is that the attention paths depicted in
\cref{fig:autointerp} are not implemented using only a minimal set of necessary
directions, but also leverage a few additional `aiding' directions. The
additional directions marginally increase the accuracy by reducing the
attention to irrelevant tokens. This kind of structure would be invisible to
existing automated circuit discovery algorithms with granularity at the
attention head level.

\begin{figure}[t]
    \centering
    \captionsetup{width=0.9\linewidth}
    \includegraphics[width=0.8\linewidth]{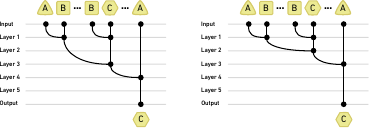}
    \caption{Two auto-discovered circuits in deep Transformers for two-hop induction.}
    \label{fig:autointerp}
    \vspace{-10pt}
\end{figure}

\vspace{-2pt}
\section{Conclusion and Future Work}
\label{sec:discussion}
\vspace{-1pt}

In this paper, we identified an invariant manifold for a class of transformers trained on block-list tasks.
This manifold allows us to both study transformers' training dynamics, and to decompose trained models' weights into interpretable subspaces.
We now discuss three directions for future work in which our
theoretical framework might prove useful in addition to the use cases demonstrated in this paper.\looseness=-1

\paragraph{Improving Reasoning Abilities.} 
Mechanistic interpretability has already proven useful for the design of improved neural network architectures \citep{xiao2023efficient,    willette2025delta, darcet2023vision, wang2024buffer}. 
For example, the selection mechanism in \emph{Mamba} architectures \citep{gu2023mamba} is inspired by induction heads \citep{olsson2022context}. 
A better understanding of inductive reasoning abilities (and their emergence) may thus enable the development of improved learning algorithms. 
In particular, our theoretical framework provides a geometric description of a large class of induction circuits.
Useless circuits can be identified and pruned more effectively, while remaining circuits can be described using a reduced number of parameters, reducing the memory footprint and inference latency. 
Moreover, it is known that learning induction heads is more difficult in long sequences \citep{musat2026emergence}, while deep circuits require either an implicit curriculum or an exponential amount of data \citep{musat2024mechanism, wang2025learning}. 
Can we design neural architectures without these limitations?\looseness=-1

\paragraph{Accelerating Experiments at Scale.} In deep learning, many important phenomena, such as data dependency of ICL
\citep{chan2022data} or generalization beyond overfitting
\citep{power2022grokking}, began as empirical observations in synthetic tasks,
before they were later shown to hold in general settings
\citep{kim2025training, liu2022omnigrok}. There is a simple reason for this:
training large models on full datasets is slow and costly, whereas uncovering
novel phenomena may require running a great number of experiments quickly. 
Our
framework can further accelerate experimentation in two ways. First, by
constraining learning to the IMIR, or even a reduced subset of interesting
dimensions, training can be greatly optimized, reducing experiment time and
computational demands. Second, by providing the geometric description of
meaningful directions, our framework can aid hypothesis generation and
understanding of training dynamics.

\paragraph{Invariant Manifolds in Other Tasks.} In this work, we have shown how invariant manifolds can be used to understand
training dynamics and model internals. While invariant manifolds of learning
dynamics have been studied before \citep{joudaki2025barriers,
    zhang2021validating, musat2026emergence}, to the best of our knowledge, this
work is the first to identify the geometry of the invariant manifolds
responsible for learning a large class of tasks. If inductive tasks exhibit a
low-dimensional invariant manifold, then maybe other tasks also do. Uncovering
them could lead to a better understanding of learning dynamics in deep neural
networks.%

\bibliographystyle{unsrtnat} %
\bibliography{refs} %

\begin{thebibliography}{49}
\providecommand{\natexlab}[1]{#1}
\providecommand{\url}[1]{\texttt{#1}}
\expandafter\ifx\csname urlstyle\endcsname\relax
  \providecommand{\doi}[1]{doi: #1}\else
  \providecommand{\doi}{doi: \begingroup \urlstyle{rm}\Url}\fi

\bibitem[Zheng et~al.(2025)Zheng, Wang, Huang, Song, Yang, Tang, Xiong, and
  Li]{zheng2024attention}
Zifan Zheng, Yezhaohui Wang, Yuxin Huang, Shichao Song, Mingchuan Yang,
  Bo~Tang, Feiyu Xiong, and Zhiyu Li.
\newblock Attention heads of large language models: A survey.
\newblock \emph{Patterns}, 6\penalty0 (2), 2025.

\bibitem[Olah et~al.(2020)Olah, Cammarata, Schubert, Goh, Petrov, and
  Carter]{olah2020zoom}
Chris Olah, Nick Cammarata, Ludwig Schubert, Gabriel Goh, Michael Petrov, and
  Shan Carter.
\newblock Zoom in: An introduction to circuits.
\newblock \emph{Distill}, 5\penalty0 (3):\penalty0 e00024--001, 2020.

\bibitem[Wang et~al.(2023)Wang, Variengien, Conmy, Shlegeris, and
  Steinhardt]{wang2022interpretability}
Kevin Wang, Alexandre Variengien, Arthur Conmy, Buck Shlegeris, and Jacob
  Steinhardt.
\newblock Interpretability in the wild: a circuit for indirect object
  identification in {GPT-2} small.
\newblock In \emph{The Eleventh International Conference on Learning
  Representations}, 2023.

\bibitem[Conmy et~al.(2023)Conmy, Mavor-Parker, Lynch, Heimersheim, and
  Garriga-Alonso]{conmy2023towards}
Arthur Conmy, Augustine Mavor-Parker, Aengus Lynch, Stefan Heimersheim, and
  Adri{\`a} Garriga-Alonso.
\newblock Towards automated circuit discovery for mechanistic interpretability.
\newblock \emph{Advances in Neural Information Processing Systems},
  36:\penalty0 16318--16352, 2023.

\bibitem[Wei et~al.(2022)Wei, Tay, Bommasani, Raffel, Zoph, Borgeaud, Yogatama,
  Bosma, Zhou, Metzler, et~al.]{wei2022emergent}
Jason Wei, Yi~Tay, Rishi Bommasani, Colin Raffel, Barret Zoph, Sebastian
  Borgeaud, Dani Yogatama, Maarten Bosma, Denny Zhou, Donald Metzler, et~al.
\newblock Emergent abilities of large language models.
\newblock \emph{Transactions on Machine Learning Research}, 2022.

\bibitem[Simon et~al.(2026)Simon, Kunin, Atanasov, Boix-Adser{\`a}, Bordelon,
  Cohen, Ghosh, Guth, Jacot, Kamb, et~al.]{simon2026there}
Jamie Simon, Daniel Kunin, Alexander Atanasov, Enric Boix-Adser{\`a}, Blake
  Bordelon, Jeremy Cohen, Nikhil Ghosh, Florentin Guth, Arthur Jacot, Mason
  Kamb, et~al.
\newblock There will be a scientific theory of deep learning.
\newblock \emph{arXiv preprint arXiv:2604.21691}, 2026.

\bibitem[Aky{\"u}rek et~al.(2024)Aky{\"u}rek, Wang, Kim, and
  Andreas]{akyurek2024context}
Ekin Aky{\"u}rek, Bailin Wang, Yoon Kim, and Jacob Andreas.
\newblock In-context language learning: Architectures and algorithms.
\newblock In \emph{International Conference on Machine Learning}. PMLR, 2024.

\bibitem[Edelman et~al.(2024)Edelman, Edelman, Goel, Malach, and
  Tsilivis]{edelman2024evolution}
Benjamin~L Edelman, Ezra Edelman, Surbhi Goel, Eran Malach, and Nikolaos
  Tsilivis.
\newblock The evolution of statistical induction heads: In-context learning
  {M}arkov chains.
\newblock \emph{Advances in Neural Information Processing Systems}, 37, 2024.

\bibitem[Varre et~al.(2025)Varre, Y{\"u}ce, and Flammarion]{varre2025learning}
Aditya Varre, Gizem Y{\"u}ce, and Nicolas Flammarion.
\newblock Learning in-context n-grams with transformers: Sub-n-grams are
  near-stationary points.
\newblock In \emph{International Conference on Machine Learning}. PMLR, 2025.

\bibitem[Sanford et~al.(2024)Sanford, Hsu, and
  Telgarsky]{sanford2024transformers}
Clayton Sanford, Daniel Hsu, and Matus Telgarsky.
\newblock Transformers, parallel computation, and logarithmic depth.
\newblock In \emph{International Conference on Machine Learning}. PMLR, 2024.

\bibitem[Musat(2025)]{musat2024mechanism}
Tiberiu Musat.
\newblock Mechanism and emergence of stacked attention heads in multi-layer
  transformers.
\newblock In \emph{The Thirteenth International Conference on Learning
  Representations}, 2025.

\bibitem[Wang et~al.(2024)Wang, Wang, Zhang, Zhou, Jin, Hu, Sun, Li, Zhang, and
  Xu]{wang2024buffer}
Zhiwei Wang, Yunji Wang, Zhongwang Zhang, Zhangchen Zhou, Hui Jin, Tianyang Hu,
  Jiacheng Sun, Zhenguo Li, Yaoyu Zhang, and Zhi-Qin~John Xu.
\newblock Understanding the language model to solve the symbolic multi-step
  reasoning problem from the perspective of buffer mechanism.
\newblock \emph{arXiv preprint arXiv:2405.15302}, 2024.

\bibitem[Allen-Zhu(2025)]{allen-zhu2026physics}
Zeyuan Allen-Zhu.
\newblock Physics of language models: Part 4.1, architecture design and the
  magic of canon layers.
\newblock In \emph{The Thirty-ninth Annual Conference on Neural Information
  Processing Systems}, 2025.
\newblock URL \url{https://openreview.net/forum?id=kxv0M6I7Ud}.

\bibitem[Singh et~al.(2025)Singh, Moskovitz, Dragutinovic, Hill, Chan, and
  Saxe]{singh2025strategy}
Aaditya~K Singh, Ted Moskovitz, Sara Dragutinovic, Felix Hill, Stephanie~CY
  Chan, and Andrew~M Saxe.
\newblock Strategy coopetition explains the emergence and transience of
  in-context learning.
\newblock In \emph{International Conference on Machine Learning}. PMLR, 2025.

\bibitem[Merrill et~al.(2023)Merrill, Tsilivis, and Shukla]{merrill2023tale}
William Merrill, Nikolaos Tsilivis, and Aman Shukla.
\newblock A tale of two circuits: Grokking as competition of sparse and dense
  subnetworks.
\newblock \emph{arXiv preprint arXiv:2303.11873}, 2023.

\bibitem[Zhong et~al.(2023)Zhong, Liu, Tegmark, and Andreas]{zhong2023clock}
Ziqian Zhong, Ziming Liu, Max Tegmark, and Jacob Andreas.
\newblock The clock and the pizza: Two stories in mechanistic explanation of
  neural networks.
\newblock \emph{Advances in neural information processing systems},
  36:\penalty0 27223--27250, 2023.

\bibitem[Bietti et~al.(2023)Bietti, Cabannes, Bouchacourt, Jegou, and
  Bottou]{bietti2024birth}
Alberto Bietti, Vivien Cabannes, Diane Bouchacourt, Herve Jegou, and Leon
  Bottou.
\newblock Birth of a transformer: A memory viewpoint.
\newblock \emph{Advances in Neural Information Processing Systems}, 36, 2023.

\bibitem[Nichani et~al.(2024)Nichani, Damian, and Lee]{nichani2024transformers}
Eshaan Nichani, Alex Damian, and Jason~D Lee.
\newblock How transformers learn causal structure with gradient descent.
\newblock In \emph{International Conference on Machine Learning}. PMLR, 2024.

\bibitem[Zucchet et~al.(2025)Zucchet, d'Angelo, Lampinen, and
  Chan]{zucchet2025emergence}
Nicolas Zucchet, Francesco d'Angelo, Andrew~K Lampinen, and Stephanie~CY Chan.
\newblock The emergence of sparse attention: impact of data distribution and
  benefits of repetition.
\newblock \emph{Advances in Neural Information Processing Systems}, 38, 2025.

\bibitem[Musat et~al.(2025)Musat, Pimentel, Noci, Stolfo, Sachan, and
  Hofmann]{musat2026emergence}
Tiberiu Musat, Tiago Pimentel, Lorenzo Noci, Alessandro Stolfo, Mrinmaya
  Sachan, and Thomas Hofmann.
\newblock On the emergence of induction heads for in-context learning, 2025.
\newblock URL \url{https://arxiv.org/abs/2511.01033}.

\bibitem[Chan et~al.(2022)Chan, Santoro, Lampinen, Wang, Singh, Richemond,
  McClelland, and Hill]{chan2022data}
Stephanie Chan, Adam Santoro, Andrew Lampinen, Jane Wang, Aaditya Singh, Pierre
  Richemond, James McClelland, and Felix Hill.
\newblock Data distributional properties drive emergent in-context learning in
  transformers.
\newblock \emph{Advances in Neural Information Processing Systems},
  35:\penalty0 18878--18891, 2022.

\bibitem[Kim et~al.(2025)Kim, Kim, Kwon, Yang, Jung, and Cha]{kim2025training}
Minsung Kim, Dong-Kyum Kim, Jea Kwon, Nakyeong Yang, Kyomin Jung, and Meeyoung
  Cha.
\newblock How training data shapes the use of parametric and in-context
  knowledge in language models.
\newblock \emph{arXiv preprint arXiv:2510.02370}, 2025.

\bibitem[Singh et~al.(2023)Singh, Chan, Moskovitz, Grant, Saxe, and
  Hill]{singh2023transient}
Aaditya Singh, Stephanie Chan, Ted Moskovitz, Erin Grant, Andrew Saxe, and
  Felix Hill.
\newblock The transient nature of emergent in-context learning in transformers.
\newblock \emph{Advances in neural information processing systems},
  36:\penalty0 27801--27819, 2023.

\bibitem[Reddy(2024)]{reddy2023mechanistic}
Gautam Reddy.
\newblock The mechanistic basis of data dependence and abrupt learning in an
  in-context classification task.
\newblock In \emph{The Twelfth International Conference on Learning
  Representations}, 2024.

\bibitem[Frankle and Carbin(2019)]{frankle2018lottery}
Jonathan Frankle and Michael Carbin.
\newblock The lottery ticket hypothesis: Finding sparse, trainable neural
  networks.
\newblock In \emph{International Conference on Learning Representations}, 2019.

\bibitem[Olsson et~al.(2022)Olsson, Elhage, Nanda, Joseph, DasSarma, Henighan,
  Mann, Askell, Bai, Chen, Conerly, Drain, Ganguli, Hatfield-Dodds, Hernandez,
  Johnston, Jones, Kernion, Lovitt, Ndousse, Amodei, Brown, Clark, Kaplan,
  McCandlish, and Olah]{olsson2022context}
Catherine Olsson, Nelson Elhage, Neel Nanda, Nicholas Joseph, Nova DasSarma,
  Tom Henighan, Ben Mann, Amanda Askell, Yuntao Bai, Anna Chen, Tom Conerly,
  Dawn Drain, Deep Ganguli, Zac Hatfield-Dodds, Danny Hernandez, Scott
  Johnston, Andy Jones, Jackson Kernion, Liane Lovitt, Kamal Ndousse, Dario
  Amodei, Tom Brown, Jack Clark, Jared Kaplan, Sam McCandlish, and Chris Olah.
\newblock In-context learning and induction heads.
\newblock \emph{Transformer Circuits Thread}, 2022.
\newblock
  https://transformer-circuits.pub/2022/in-context-learning-and-induction-heads/index.html.

\bibitem[Alur et~al.(2025)Alur, Hays, Raghavan, and
  Shah]{alur2025impossibility}
Rohan Alur, Chris Hays, Manish Raghavan, and Devavrat Shah.
\newblock The impossibility of inverse permutation learning in transformer
  models.
\newblock \emph{arXiv preprint arXiv:2509.24125}, 2025.

\bibitem[Schlegel et~al.(2022)Schlegel, Pavlov, and
  Pratt-Hartmann]{schlegel-etal-2022-transformers}
Viktor Schlegel, Kamen Pavlov, and Ian Pratt-Hartmann.
\newblock Can transformers reason in fragments of natural language?
\newblock In Yoav Goldberg, Zornitsa Kozareva, and Yue Zhang, editors,
  \emph{Proceedings of the 2022 Conference on Empirical Methods in Natural
  Language Processing}, pages 11184--11199, Abu Dhabi, United Arab Emirates,
  December 2022. Association for Computational Linguistics.
\newblock \doi{10.18653/v1/2022.emnlp-main.768}.
\newblock URL \url{https://aclanthology.org/2022.emnlp-main.768/}.

\bibitem[Vaswani et~al.(2017)Vaswani, Shazeer, Parmar, Uszkoreit, Jones, Gomez,
  Kaiser, and Polosukhin]{vaswani2017attention}
Ashish Vaswani, Noam Shazeer, Niki Parmar, Jakob Uszkoreit, Llion Jones,
  Aidan~N Gomez, {\L}ukasz Kaiser, and Illia Polosukhin.
\newblock Attention is all you need.
\newblock \emph{Advances in neural information processing systems}, 30, 2017.

\bibitem[Huang et~al.(2023)Huang, Zelikman, Chen, Wu, Valiant, and
  Liang]{huang2023lexinvariant}
Qian Huang, Eric Zelikman, Sarah Chen, Yuhuai Wu, Gregory Valiant, and Percy~S
  Liang.
\newblock Lexinvariant language models.
\newblock \emph{Advances in Neural Information Processing Systems},
  36:\penalty0 23990--24012, 2023.

\bibitem[Friedman et~al.(2023)Friedman, Wettig, and Chen]{friedman2023learning}
Dan Friedman, Alexander Wettig, and Danqi Chen.
\newblock Learning transformer programs.
\newblock \emph{Advances in Neural Information Processing Systems},
  36:\penalty0 49044--49067, 2023.

\bibitem[Piantadosi(2014)]{piantadosi2014zipf}
Steven~T Piantadosi.
\newblock Zipf’s word frequency law in natural language: A critical review
  and future directions.
\newblock \emph{Psychonomic bulletin \& review}, 21\penalty0 (5):\penalty0
  1112--1130, 2014.

\bibitem[Zipf(1949)]{zipf1949principle}
George~Kingsley Zipf.
\newblock \emph{Human Behavior and the Principle of Least Effort: An
  Introduction to Human Ecology}.
\newblock Addison-Wesley Press, Cambridge, MA, 1949.

\bibitem[Frankle et~al.(2021)Frankle, Dziugaite, Roy, and
  Carbin]{frankle2020pruning}
Jonathan Frankle, Gintare~Karolina Dziugaite, Daniel~M Roy, and Michael Carbin.
\newblock Pruning neural networks at initialization: Why are we missing the
  mark?
\newblock In \emph{International Conference on Learning Representations}, 2021.

\bibitem[Paul et~al.(2023)Paul, Chen, Larsen, Frankle, Ganguli, and
  Dziugaite]{paul2022unmasking}
Mansheej Paul, Feng Chen, Brett~W Larsen, Jonathan Frankle, Surya Ganguli, and
  Gintare~Karolina Dziugaite.
\newblock Unmasking the lottery ticket hypothesis: What's encoded in a winning
  ticket's mask?
\newblock In \emph{The Eleventh International Conference on Learning
  Representations}, 2023.

\bibitem[Liu et~al.(2024)Liu, Zhang, He, Wang, Xiao, Ye, Zhou, Ku, and
  Hui]{liu2024survey}
Bohan Liu, Zijie Zhang, Peixiong He, Zhensen Wang, Yang Xiao, Ruimeng Ye, Yang
  Zhou, Wei-Shinn Ku, and Bo~Hui.
\newblock A survey of lottery ticket hypothesis.
\newblock \emph{arXiv preprint arXiv:2403.04861}, 2024.

\bibitem[Elhage et~al.(2021)Elhage, Nanda, Olsson, Henighan, Joseph, Mann,
  Askell, Bai, Chen, Conerly, et~al.]{elhage2021mathematical}
Nelson Elhage, Neel Nanda, Catherine Olsson, Tom Henighan, Nicholas Joseph, Ben
  Mann, Amanda Askell, Yuntao Bai, Anna Chen, Tom Conerly, et~al.
\newblock A mathematical framework for transformer circuits.
\newblock \emph{Transformer Circuits Thread}, 2021.
\newblock https://transformer-circuits.pub/2021/framework/index.html.

\bibitem[Sharma et~al.(2025)Sharma, Rogers, Shapira, and Bau]{sharma2025llms}
Arnab~Sen Sharma, Giordano Rogers, Natalie Shapira, and David Bau.
\newblock Llms process lists with general filter heads.
\newblock \emph{arXiv preprint arXiv:2510.26784}, 2025.

\bibitem[Weiss et~al.(2021)Weiss, Goldberg, and Yahav]{weiss2021thinking}
Gail Weiss, Yoav Goldberg, and Eran Yahav.
\newblock Thinking like transformers.
\newblock In \emph{International Conference on Machine Learning}, pages
  11080--11090. PMLR, 2021.

\bibitem[Xiao et~al.(2024)Xiao, Tian, Chen, Han, and Lewis]{xiao2023efficient}
Guangxuan Xiao, Yuandong Tian, Beidi Chen, Song Han, and Mike Lewis.
\newblock Efficient streaming language models with attention sinks.
\newblock In \emph{The Twelfth International Conference on Learning
  Representations}, 2024.

\bibitem[Willette et~al.(2025)Willette, Lee, and Hwang]{willette2025delta}
Jeffrey Willette, Heejun Lee, and Sung~Ju Hwang.
\newblock Delta attention: Fast and accurate sparse attention inference by
  delta correction.
\newblock \emph{Advances in Neural Information Processing Systems}, 38, 2025.

\bibitem[Darcet et~al.(2024)Darcet, Oquab, Mairal, and
  Bojanowski]{darcet2023vision}
Timoth{\'e}e Darcet, Maxime Oquab, Julien Mairal, and Piotr Bojanowski.
\newblock Vision transformers need registers.
\newblock In \emph{The Twelfth International Conference on Learning
  Representations}, 2024.

\bibitem[Gu and Dao(2024)]{gu2023mamba}
Albert Gu and Tri Dao.
\newblock Mamba: Linear-time sequence modeling with selective state spaces.
\newblock In \emph{First Conference on Language Modeling}, 2024.

\bibitem[Wang et~al.(2025)Wang, Nichani, Bietti, Damian, Hsu, Lee, and
  Wu]{wang2025learning}
Zixuan Wang, Eshaan Nichani, Alberto Bietti, Alex Damian, Daniel Hsu, Jason~D
  Lee, and Denny Wu.
\newblock Learning compositional functions with transformers from easy-to-hard
  data.
\newblock In \emph{Proceedings of the Thirty Eighth Conference on Learning
  Theory}. PMLR, 2025.

\bibitem[Power et~al.(2022)Power, Burda, Edwards, Babuschkin, and
  Misra]{power2022grokking}
Alethea Power, Yuri Burda, Harri Edwards, Igor Babuschkin, and Vedant Misra.
\newblock Grokking: Generalization beyond overfitting on small algorithmic
  datasets.
\newblock \emph{arXiv preprint arXiv:2201.02177}, 2022.

\bibitem[Liu et~al.(2023)Liu, Michaud, and Tegmark]{liu2022omnigrok}
Ziming Liu, Eric~J Michaud, and Max Tegmark.
\newblock Omnigrok: Grokking beyond algorithmic data.
\newblock In \emph{The Eleventh International Conference on Learning
  Representations}, 2023.

\bibitem[Joudaki et~al.(2025)Joudaki, Lanzillotta, Razlighi, Mirzadeh,
  Alizadeh, Hofmann, Farajtabar, and Faghri]{joudaki2025barriers}
Amir Joudaki, Giulia Lanzillotta, Mohammad~Samragh Razlighi, Iman Mirzadeh,
  Keivan Alizadeh, Thomas Hofmann, Mehrdad Farajtabar, and Fartash Faghri.
\newblock Barriers for learning in an evolving world: Mathematical
  understanding of loss of plasticity.
\newblock \emph{arXiv preprint arXiv:2510.00304}, 2025.

\bibitem[Zhang et~al.(2021)Zhang, Jin, Zhang, Zhou, Zhao, Ren, Liu, Wu, Jin,
  and Dou]{zhang2021validating}
Zeru Zhang, Jiayin Jin, Zijie Zhang, Yang Zhou, Xin Zhao, Jiaxiang Ren, Ji~Liu,
  Lingfei Wu, Ruoming Jin, and Dejing Dou.
\newblock Validating the lottery ticket hypothesis with inertial manifold
  theory.
\newblock \emph{Advances in neural information processing systems},
  34:\penalty0 30196--30210, 2021.

\bibitem[Huang et~al.(2026)Huang, Bakalova, Bhattamishra, Merrill, and
  Hahn]{huang2026discovering}
Xinting Huang, Aleksandra Bakalova, Satwik Bhattamishra, William Merrill, and
  Michael Hahn.
\newblock Discovering interpretable algorithms by decompiling transformers to
  rasp, 2026.
\newblock URL \url{https://arxiv.org/abs/2602.08857}.

\end{thebibliography}

\appendix

\newpage
\section{Details of the theoretical results}

\subsection{Overview of the assumptions}
\label{sec:assumptions}

\Cref{thm:gradient-confinement} (and its \cref{thm:invariant-manifold})
holds under the six assumptions stated formally below, each followed by its motivation. 
The first three constrain the \emph{data} distribution (\cref{asm:data-symmetry,asm:lexinvariance,asm:embeddings}); 
the last three fix the \emph{architecture} and parameterization we analyze (\cref{asm:merged-kq,asm:ortho-values}). 
The same assumptions are summarized informally in words just above \cref{thm:gradient-confinement}.

\begin{assumption}[$\setAssoc$-invariance of common tokens]
    \label{asm:data-symmetry}
    Partition the vocabulary into a set of common and rare tokens: $\vocabC \cup \vocabR = \vocab$.
    The correlations among common tokens are limited to a set of symmetric, balanced, non-overlapping associations $\setAssoc \subset [\vocabC]^2$, so that the data distribution is unchanged under a $\perm$ permutation.
\end{assumption}
In other words, $\tokSeq$ and $\perm(\tokSeq)$ are equally likely for any string $\tokSeq \in \vocab^*$ and any permutation $\perm \in \permGroup{\vocab}$ that fixes rare tokens and preserves associations, i.e.\ $\perm(\token) \in \vocabR$ for all $\token \in \vocabR$ and $\{\perm(\itm), \perm(\lbl)\} \in \setAssoc$ for all $\{\itm, \lbl\} \in \setAssoc$. 
We do this to emphasize inductive abilities and limit the room for memorization, considering languages in which common associations are fully interchangeable. 
    
\begin{assumption}[Independent positional offsets]
    \label{asm:data-offsets}
    Each block carries an independent positional offset $\offset_{j,b}$, drawn i.i.d.\ and marginally uniform, so that the data distribution is unchanged under a global per-frequency offset shift $\offset_{j,b} \mapsto \offset_{j,b} + \Delta\offset_j$.
\end{assumption}

Our data uses these random offsets to simulate the presence of an arbitrary number of irrelevant tokens that may separate consecutive blocks in a real document.

\begin{assumption}[Lexinvariance of rare tokens]
    \label{asm:lexinvariance}
    Rare-token embeddings are resampled i.i.d.\ before each sequence from a
    distribution whose mean and second moment match those of the common tokens,
    $\mu = \tfrac{1}{|\vocabC|}\sum_{\token \in \vocabC}\emb{\token}$ and
    $S = \tfrac{1}{|\vocabC|}\sum_{\token \in \vocabC}\emb{\token}\tp{\emb{\token}}$.
\end{assumption}

We target the limit $|\vocabR| \gg |\vocabC|$ in which individual rare-token associations cannot be memorized. 
Following \citet{huang2023lexinvariant}, resampling the embeddings every sequence inhibits memorization and lets us handle rare tokens distributionally.

\begin{assumption}[Orthonormal centered embeddings]
    \label{asm:embeddings}
    The common-token embeddings $\{\emb{\token}\}_{\token\in\vocabC}$ and the
    sinusoidal basis vectors $\{\sinBasis_j, \cosBasis_j\}_{j\in[\numFreqs]}$ are
    mutually orthogonal, with $\tp{\emb{\tok a}}\emb{\tok b} =
        \tp{\emb{\tok a}}\sinBasis_j = \tp{\emb{\tok a}}\cosBasis_j = 0$ for distinct
    $\tok a, \tok b \in \vocabC$ and all $j \in [\numFreqs]$; all common tokens
    share a common norm, $\|\emb{\tok a}\|_2 = \|\emb{\tok b}\|_2$; and the common
    embeddings are \emph{centered}, $\sum_{\token\in\vocabC}\emb{\token} = \zero$.
    In particular $\tokSpace \perp \posSpace$.
\end{assumption}

We adopt the \emph{associative-memory} view of
\citet{bietti2024birth}, in which representations are near-orthogonal; this is
realized by large embedding dimension $\embedDim \gg 1$ (where random vectors are
near-orthogonal), by data whitening, or by one-hot encodings
\citep{nichani2024transformers}. Centering is the standard mean-subtraction
preprocessing, and it is what makes the token commutant in
\cref{proofsec:gradient-blocks} exactly $\vecSpan\{\idT, \commCorrel\}$: without
it a residual rank-one term $\mathbf{1}\tp{\mathbf{1}}$ survives and the manifold is
only approximately invariant. With centering, the matched mean of
\cref{asm:lexinvariance} is $\mu = \zero$.

\begin{assumption}[Merged key--query parameterization]
    \label{asm:merged-kq}
    Queries and keys enter the loss only through the merged product
    $\weights{\ell,h} = \tp{\weightsQ{\ell,h}}\weightsK{\ell,h}$, which we treat as
    the learnable attention parameter of head $(\ell,h)$.
\end{assumption}

The attention scores
$\tp{\hidden{\ell-1}}\tp{\weightsQ{\ell,h}}\weightsK{\ell,h}\hidden{\ell-1}$ depend
on queries and keys only through this product, the ``QK circuit'' of mechanistic
analyses \citep{elhage2021mathematical}. Treating it as a single matrix removes
the gauge freedom $(\weightsQ{}, \weightsK{}) \mapsto (\mathbf{R}\weightsQ{},
    \mathbf{R}\weightsK{})$ and is exactly what the basis weights $\mathcal{W}_\ell$
describe. Relaxing it to independently learnable factors is discussed in
\cref{sec:circuit-folding}.

\begin{assumption}[Fixed orthogonal output--value maps]
    \label{asm:ortho-values}
    Each output--value map $\values{\ell,h} = \weightsP{\ell,h}\weightsV{\ell,h}$ is
    a fixed isometry onto a write subspace that is orthogonal to the residual stream
    entering layer $\ell$ and to the write subspaces of the other heads of layer
    $\ell$, so that $\tp{\values{\ell,h}}\values{\ell,h'} = \delta_{hh'}\id$ on that
    input space.
\end{assumption}

This is the multi-head form of the \emph{disentangled
    residual stream} \citep{friedman2023learning,nichani2024transformers}, again
justified by the associative-memory view: large random projections write into
near-orthogonal subspaces. It is the main simplification of our analysis. The
detailed proof in \cref{sec:invariant-space-proof} establishes the theorem under
it, and \cref{sec:circuit-folding} sketch (without a complete
proof) how learnable value and projection factorizations might be accommodated.

\subsection{Transformer Architecture}
\label{sec:architecture}

In this section, we recap our transformer architecture in a bit more detail.

First, we denote our model's vocabulary as $\vocab$, and we map every token $\token \in \vocab$ to a non-learnable embedding vector $\emb{\token} \in \embedSpace$. 
We use $\embedSymbol : \vocab \rightarrow \embedSpace$ to denote the mapping
$\embed{\token} = \emb{\token}$. Applying this embedding to our input sequence
and target output, we obtain the inputs $\seqEmbed \in \R^{\embedDim \times
        \seqLen}$ and output $\modelOutput \in \outputSpace$.

Second, we add \emph{sinusoidal} position embeddings $\posEmbed \in \posEmbedSpace$ to our model.
As discussed above, we assume each block can be separated by an arbitrary number of irrelevant words, which we model by adding random offsets for each block.
Further, we use $\numFreqs$ frequencies, which gives us:
\begin{equation}
    \posEmbed_{:, b\tupleSize + i} = \sum_{j=1}^{\numFreqs}
    \Big(
    \sin\big(\freq_j i + \offset_{j,b} \big) \, \sinBasis_j +
    \cos\big(\freq_j i + \offset_{j,b} \big) \, \cosBasis_j
    \Big)
\end{equation}
where $\sinBasis_j, \cosBasis_j \in \embedSpace$ are fixed orthogonal basis vectors, $\freq \in \R^\numFreqs$ are fixed frequencies of order $\tupleSize$ (i.e. $\freq_j = 2 \pi N_j / \tupleSize$), and $\offset \in \R^{\numFreqs \times \seqPairs}$ are block offsets sampled i.i.d. for every sequence during training.

Given these token position embeddings, we add them together to get our model's inputs $\hidden{0}
    = \seqEmbed + \posEmbed$. 
We then use an attention-only transformer with
$\numLayers$ attention layers and $\numHeads$ attention heads per layer.
The representations at each layer $\ell \in [\numLayers]$ are given by:
\begin{equation}
    \hidden{\ell} \;=\; \; \hidden{\ell-1} \; + \;\; \sum_{h=1}^{\numHeads} \; \weightsP{\ell,h} \weightsV{\ell,h} \, \hidden{\ell-1} \softmax{\tp{\hidden{\ell-1}} \tp{\weightsQ{\ell,h}} \, \weightsK{\ell,h} \; \hidden{\ell-1}},
\end{equation}
where $\softmax$ denotes the column-wise softmax function with autoregressive masking, $\weights{\ell,h} \in \weightSpace$ denote the merged key-query attention weights for head $h \in [\numHeads]$ in layer $\ell \in [\numLayers]$, $\values{\ell, h} \in \valueSpace$ denote the merged output-value projection weights, and $\hidden{\ell} \in \hiddenSpace$ denote the residual stream at layer $\ell \in [\numLayers]$.

Finally, we get predicted outputs by passing the hidden activations corresponding
to the last input token through a final linear layer:
\begin{equation}
    \outputPredicted \;=\; \weightsOut \, \hidden{\numLayers}_{:, \seqLen}
\end{equation}
where we denote the output weights as $\weightsOut \in \weightSpace$ and the final output as $\outputPredicted \in \embedSpace$.
We train the model to predict the correct token using the cross-entropy loss with a tied unembedding matrix:
\begin{equation}
    \loss \;=\;
    -\tp{\outputPredicted} \modelOutput
    \;+\;
    \log \left(\;
    \sum_{\token \in \vocab} \; e^{\;\tp{\outputPredicted}\emb{\token}}
    \;\right).
\end{equation}

\subsection{Proof of \texorpdfstring{\cref{thm:gradient-confinement}}{Theorem 1}}
\label{sec:invariant-space-proof}

\thmGradientConfinement*

\begin{proof}    
The proof proceeds in six steps. \emph{First} (\cref{proofsec:core_concepts}),
we set up the objects the argument acts on: the orthogonal token and position
embedding subspaces $\tokSpace,\posSpace$, the selection matrices
$\tokActions,\posActions$ that generate the basis weights, and the
representation spaces reachable at each layer. \emph{Second}
(\cref{sec:proof-simplified}), we isolate a simplified single-head subspace with
identity queries and fixed orthogonal value matrices, which already exposes the
mechanism; the learnable, multi-head case is recovered at the end. \emph{Third}
(\cref{proofsec:input-transformations}), we encode the two data symmetries of a
block-list task (permuting associated common tokens and shifting block
positional offsets) as a pair of orthogonal rotations acting on
$\tokSpace,\posSpace$. \emph{Fourth} (\cref{proofsec:forward-pass}), we show that
these rotations leave all attention scores invariant, so the hidden
representations and the model output are simply rotated by the same
transformation; consequently the loss is unchanged. \emph{Fifth}
(\cref{proofsec:backward-pass}), we propagate this
equivariance backward to find how each weight's gradient transforms, and observe
that, because the data distribution is invariant, the \emph{population} gradient
must itself be invariant under the transformation. \emph{Sixth}
(\cref{proofsec:gradient-blocks}), we analyze which matrices can be invariant: a
commutant argument forces every gradient block to vanish across concepts and, within
a concept, to be a polynomial in the selection matrices, exactly the basis-weight
structure $\vecSpan(\mathcal{W}_\ell),\vecSpan(\bar{\mathcal{W}})$ defining
$\manifold$. This establishes $\EV[\nabla_{\weightsAll}\loss]\in\manifold$ and
hence \cref{thm:gradient-confinement}.
\end{proof}

We note that in \cref{sec:circuit-folding} we then lift our argument to independently learnable factorizations via circuit folding.

\subsubsection{Core Concepts}
\label{proofsec:core_concepts}

We begin by introducing several key concepts necessary for our proof.
\paragraph{Embedding Spaces.}
We use $\tokSpace, \,\posSpace \subset \embedSpace$ to denote the vector spaces
containing the \emph{token} and \emph{position} embeddings:
\begin{align}
    \tokSpace \;=\; \vecSpan\big(\{ \, \emb{\token} : \token \in \vocabC\,\}\big),
     &  &
    \posSpace \;=\; \vecSpan\big(\{\, \sinBasis_j, \cosBasis_j: \, j\in [\numFreqs] \,\}\big)
\end{align}
Two remarks are in order. First, $\tokSpace$ is spanned by \emph{common}-token embeddings only: rare tokens have no fixed embedding that could span a subspace, since their embeddings are re-sampled for every sequence under our lexinvariance assumption (\cref{sec:assumptions}); they are instead handled distributionally in the arguments below. Second, $\posSpace$ is spanned by the $2\numFreqs$ sinusoidal basis vectors $\sinBasis_j, \cosBasis_j$ because every position embedding is a linear combination of them (\cref{sec:architecture}): a position corresponds to a rotation angle within each frequency plane, rather than to a separate direction, which is what allows offset shifts to act as rotations of $\posSpace$.

\paragraph{Identity Matrices.}
We use $\idT,\, \idP \in \weightSpace$ to denote the identity transformations
acting only on the subspaces $\tokSpace$ and $\posSpace$, respectively, while collapsing all other directions to zero. 
Formally, they are the orthogonal projections onto these subspaces; under our orthonormality assumptions (\cref{sec:assumptions}), they can be written as
\begin{align}
    \idT \;=\; \sum_{\token \in \vocabC} \emb{\token}\,\tp{\emb{\token}},
     &  &
    \idP \;=\; \sum_{j=1}^{\numFreqs} \Big(\sinBasis_j\, \tp{\sinBasis_j} + \cosBasis_j\, \tp{\cosBasis_j}\Big).
\end{align}
Note that these definitions are basis-independent: the subspaces $\tokSpace$ and $\posSpace$ need not be aligned with the neuron axes.

\mymacro[1]{\repSpace}{\mathbb{U}^{(#1)}}
\paragraph{Representation Spaces.} We use $\repSpace{\ell} \subset \embedSpace$ to denote the vector space
containing the representations at layer $\ell \in [\numLayers]$ such that
$\hidden{\ell} \in \repSpace{\ell}$ always holds. Therefore, we have that
\begin{align}
     & \repSpace{0} = \tokSpace + \posSpace,
     & \repSpace{\ell} = \repSpace{\ell-1} \,+\, {\sum}_{h=1}^{\numHeads} \, \weightsP{\ell,h} \weightsV{\ell,h} \repSpace{\ell-1}, \;\; \forall \ell \in [\numLayers] %
\end{align}
with the sum of vector spaces defined as $\mathcal{A} + \mathcal{B} = \{\, \mathbf{a} + \mathbf{b} \;:\; \mathbf{a} \in \mathcal{A},\, \mathbf{b} \in \mathcal{B} \,\}$ and products of matrices and spaces defined as $\mathbf{A} \mathcal{B} = \{\, \mathbf{A} \mathbf{b} \;:\; \mathbf{b} \in \mathcal{B} \,\}$.
This recursion follows directly from the architecture in \cref{sec:architecture}. The first term, $\repSpace{\ell-1}$, accounts for the residual connection, which lets each layer's representations span the same directions as the previous layer's. Each remaining term, $\weightsP{\ell,h} \weightsV{\ell,h} \repSpace{\ell-1}$, is the ``writing space'' of attention head $h$: the head's update is a convex combination of the previous layer's representations mapped through $\weightsP{\ell,h} \weightsV{\ell,h}$, so its output--value matrices constrain the set of directions in which it can write information.

\subsubsection{Proving the Invariance of a Simplified Space}
\label{sec:proof-simplified}

We will prove the invariance of a simplified transformer where:
(i) only the key matrices $\weightsK{\ell,h}$ and the output matrix $\weightsOut$ are learnable and the others are considered fixed orthogonal matrices; (ii) with a single attention head per layer ($\numHeads = 1$); and (iii) full-dimension head ($\headDim = \embedDim$). 
Extending the proof to multi-head attention is straightforward, and we later informally sketch a `circuit folding' technique that generalizes our proof to a larger class of invariant spaces with arbitrary $\headDim$ and where all matrices are learnable.\looseness=-1

For this simplified calculation thus, we consider full-dimension single-head
attention layers ($\numHeads = 1$ and $\headDim = \embedDim$) and we take the
query matrix to be an identity matrix $\weightsQ{\ell, 1} = \id$. 
(Note that this is mathematically equivalent to considering a merged key--query matrix.)
Regarding projection and value matrices, we assume that they always project their input representations into orthogonal subspace, i.e. $\repSpace{\ell-1} \perp
    \weightsP{\ell, 1}\weightsV{\ell, 1}\repSpace{\ell-1}$. We further simplify
notation by denoting $\weights{\ell} = \weightsK{\ell,1}$ and $\values{\ell} =
    \weightsP{\ell,1}\weightsV{\ell,1}$. This leads to the following simplified
self-attention formulation:
\begin{equation}
    \hidden{\ell} \;=\; \; \hidden{\ell-1} \; + \; \values{\ell} \, \hidden{\ell-1} \softmax{\tp{\hidden{\ell-1}} \, \weights{\ell} \; \hidden{\ell-1}},
\end{equation}

These simplifications are in line with the \emph{associative memory} view of
\citet{bietti2024birth}, where the large, randomly initialized matrices project
their inputs into nearly-orthogonal representations. A related approach was
also used by \citet{nichani2024transformers}, who analyzed a transformer with a ``disentangled''
residual stream \citep{friedman2023learning} where the outputs of different
layers are concatenated rather than added together. Note that this requires an
exponentially growing residual dimensionality $\embedDim =
    \Theta(\numHeads^\numLayers)$.

\subsubsection{Input Transformations}
\label{proofsec:input-transformations}

\mymacro{\rotR}{\mathbf{E}^{(r)}}
\mymacro{\rotT}{\mathbf{E}^{(t)}}
\mymacro{\rotC}{\mathbf{E}^{(c)}}
\mymacro{\rotP}{\mathbf{E}^{(p)}}
\mymacro{\automorph}{\mathrm{Aut}}

We now define two likelihood-preserving rotations to the token and position embeddings. 
For token embeddings, we will apply an association-preserving permutation $\perm \in \permGroup{\vocabC}$ of the common tokens. 
For position embeddings, we apply a frequency-wise change in position encoding offsets $\offset'_{j,b} = \offset_{j,b} + \Delta \mathbf{\offset}_j$, with $\Delta \mathbf{\offset} \in \R^\numFreqs$, which also preserves associations between positions.

We define the transformations of common embeddings $\rotT \in
    \orthoGroup(\tokCommSpace)$ and position embeddings $\rotP \in
    \orthoGroup(\posSpace)$ as:
\begin{align}
     & \rotT \;=   %
    \sum_{\token \,\tiny\in \vocabC}
    \Big( \| \emb{\token} \|_2 \, \| \emb{\perm(\token)} \|_2 \Big)^{-1}
    \,
    \emb{\token} \, \tp{\emb{\perm(\token)}}
    \\
     & \rotP \;=\;
    \sum_{j=1}^{\numFreqs} \;
    \left(\cos\Delta \offset_j \right) \left(\sinBasis_j \, \tp{\sinBasis_j} + \cosBasis_j \, \tp{\cosBasis_j} \right)
    + \left(\sin\Delta \offset_j \right) \left(\cosBasis_j \, \tp{\sinBasis_j} - \sinBasis_j \, \tp{\cosBasis_j} \right)
\end{align}

Together, they define the transformed embeddings
\begin{align}
    \seqEmbedT = \rotT \seqEmbed,
     &  & \posEmbedT = \rotP \posEmbed.
\end{align}

Since our permutation of common tokens preserves the associations and our
offset shift maintains position correlations, we have the following
commutations:
\begin{align}
    \rotC \commCorrel \;=\; \commCorrel \rotC,
     &  &
    \rotP \posCorrel \;=\; \posCorrel \rotP.
\end{align}

\mymacro[1]{\rotH}{\Lambda^{(#1)}}

Note that the token and position subspaces are orthogonal by assumption, $\tokSpace \perp \posSpace$ (\cref{sec:assumptions}), so our transformations do not act outside their respective subspaces, i.e. $\rotP \seqEmbed = \rotT \posEmbed = \zero$.
This allows us to define a global transformation $\rotH{0} \in \orthoGroup(\repSpace{0})$ of the inputs:
\begin{align}
    \hiddenT{0} = \rotH{0} \hidden{0},
     &  & \rotH{0} = \rotT + \rotP.
\end{align}

\subsubsection{Forward Pass}
\label{proofsec:forward-pass}

In this section, we aim to show that applying a transformation $\rotH{0} = \rotT + \rotP$ to our inputs $\hidden{0}$: (i) leaves attention scores unchanged; (ii) rotates the output $\outputPredicted$ by the same rotation $\rotT$; (iii) leaves the loos unchanged.

Before that, however, recall the definition of the invariant subspace, adapted to the simplified setup:
\begin{align}
     & \weights{\ell, *} \;\in\; \weightsSubSpace{\ell} \;=\; \vecSpan\Big(
    \valActions{1:\ell-1} \,
    \big( \tokActions \cup \posActions \big) \,
    \valActions{\ell-1:1}
    \Big),
     & \weightsOut \;\in\; \weightsOutSubSpace \;=\; \vecSpan\Big(
    \tokActions \, \valActions{\numLayers:1}
    \Big),
\end{align}
where we remind the reader that we defined the selection matrices and action spaces as:
\begin{align}
     &
    \tokActions \;=\; \{\; \idT, \commCorrel  \;\},
     &   &
    \valActions{1:\ell} = \left\{\; \id, \,
    \values{\ell, h}
    \;:\;
    h \in [\numHeads]
    \;\right\} \; \valActions{1:\ell-1},
    \\
     &
    \posActions \;=\; \{\; \idP, \posCorrel, \ldots, \posCorrel^{\tupleSize - 1} \;\},
     &   &
    \valActions{\ell:1} = \left\{\;
    \tp{\mathbf{V}}
    \;:\;
    \mathbf{V} \in \valActions{1:\ell}
    \;\right\}.
\end{align}
and that we have the following token and position association matrices
\begin{align}
     & \commCorrel \;=\;
    \sum_{\{ \tok{a}, \tok{b} \} \in \setAssoc}
    \Big( \| \emb{\tok{a}} \|_2 \, \| \emb{\tok{b}} \|_2 \Big)^{-1}
    \,
    \emb{\tok{a}} \, \tp{\emb{\tok{b}}}
    \\
     & \posCorrel \;=\;
    \sum_{j=1}^{\numFreqs} \;
    \left(\cos\freq_j \right) \left(\sinBasis_j \, \tp{\sinBasis_j} + \cosBasis_j \, \tp{\cosBasis_j} \right)
    + \left(\sin\freq_j \right) \left(\sinBasis_j \, \tp{\cosBasis_j} - \cosBasis_j \, \tp{\sinBasis_j} \right)
\end{align}

Note that $\valActions{\ell:1}$ is defined via \emph{transposes}, whereas the main-text definition in \cref{sec:IMIR-theory} uses \emph{pseudo-inverses}. The two coincide in this simplified setup where the fixed $\values{\ell}$ are isometries onto orthogonal subspaces, i.e. $\mathbf{V}^{+} = \tp{\mathbf{V}}$.

\paragraph{Attention scores are unchanged.}
We now aim to establish that attention scores remain unchanged under these transformations. 
Let $\rotH{\ell} \in \orthoGroup(\repSpace{\ell})$ be a layer-wise transformation matrix, for which it holds that:\looseness=-1
\begin{align}
    \hiddenT{0} = \rotH{0} \hidden{0} \implies \hiddenT{\ell} = \rotH{\ell} \hidden{\ell}
\end{align}
Note that for any layer, $\rotH{\ell-1}$ commutes with any $\weights{\ell} \in \weightsSubSpace{\ell}$.
This implies that attention scores are invariant to our input transformations:
\begin{equation}
    \tp{\hiddenT{\ell-1}} \weights{\ell} \hiddenT{\ell-1} =
    \tp{\hidden{\ell-1}} \tp{\rotH{\ell-1}} \weights{\ell} \rotH{\ell-1} \hidden{\ell-1} =
    \tp{\hidden{\ell-1}} \weights{\ell} \hidden{\ell-1}
\end{equation}

\paragraph{Outputs are rotated.}
We can now show the layer-wise transformation matrices can be defined recursively as $\rotH{\ell} = \rotH{\ell-1} + \values{\ell} \rotH{\ell-1} \tp{\values{\ell}}$.
Expanding the definition of $\hiddenT{\ell}$ we get: 
\begin{subequations}    
\begin{align}
    \hiddenT{\ell}
     & = \hiddenT{\ell-1} + \values{\ell} \hiddenT{\ell-1} \softmax{\tp{\hiddenT{\ell-1}} \weights{\ell} \hiddenT{\ell-1}}
    & \mathcomment{definition of $\hiddenT{\ell}$} \\
     & =  \rotH{\ell-1} \hidden{\ell-1} + \values{\ell} \rotH{\ell-1} \hidden{\ell-1} \softmax{\tp{\hidden{\ell-1}} \weights{\ell} \hidden{\ell-1}} 
     \!\!\!\!\!\!\!\!\!\!\!\!\!\!\!\! 
     \\
     & =
    \left( \rotH{\ell-1} + \values{\ell} \rotH{\ell-1} \tp{\values{\ell}} \right)
    \hidden{\ell} 
    & \mathcomment{by $\repSpace{\ell-1} \perp \values{\ell} \repSpace{\ell-1}$} \\
    &= \rotH{\ell} \hidden{\ell}
\end{align}
\end{subequations}
which shows that $\rotH{\ell} = \rotH{\ell-1} + \values{\ell} \rotH{\ell-1} \tp{\values{\ell}}$.
The final layer $\weightsOut \in \weightsOutSubSpace$ projects $\hidden{L}$ back into $\tokSpace$, 
which gives $\hiddenOutT = \rotT \hiddenOut$ and $\outputPredictedT = \rotT \outputPredicted$.

\paragraph{Loss is unchanged.}
Finally, we note that, under our transformations, the target output $\modelOutput$ are rotated by the same transformation as the inputs: $\modelOutputT = \rotT \modelOutput$.
This fact, combined with the previous result imply the loss is unchanged $\lossT = \loss$.

\subsubsection{Backward Pass}
\label{proofsec:backward-pass}

\mymacro[1]{\grad}{\frac{\partial \loss}{ \partial #1}}
\mymacro[1]{\gradT}{\frac{\partial \lossT}{ \partial #1}}
\mymacro[1]{\eGrad}{\frac{\partial \EV \big[\loss \big]}{ \partial #1}}
\mymacro[1]{\eGradT}{\frac{\partial \EV \big[\lossT \big]}{ \partial #1}}

We now backward propagate through our model, and show how the previous transformation affects our gradients.
We also show here that the population gradients are preserved under such a transformation.

\paragraph{Transformed Gradients.}
We start by the final softmax and unembedding layers.
Propagating the loss gradient through them, we get
\begin{equation}
    \gradT{\hiddenOutT} \;=\; \rotT \grad{\hiddenOut}
\end{equation}

Second, the gradient of the final linear layer weights will be:
\begin{equation}
    \gradT{\weightsOut}
    \;=\;
    \hiddenT{\numLayers} \tp{\left( \gradT{\hiddenOutT} \right)}
    \;=\;
    \rotH{\numLayers}\, \hidden{\numLayers} \, \grad{\hiddenOut} \, \tp{\rotT}
    \;=\;
    \rotH{\numLayers}\, \grad{\weightsOut} \, \tp{\rotT}
\end{equation}

A key ingredient of the proof is that attention scores are invariant to our
input transformations: association-preserving permutations don't affect the
attention between a token and itself or its associated token, which are the
only allowed token actions; likewise, the attention between different positions
is not affected by a global offset change. Since the attention scores are
invariant to our transformations, their gradients are also invariant. For
layers $\ell \in [\numLayers]$, we get:
\begin{subequations}    
\begin{align}
    \gradT{\weights{\ell}} & \;=\;
    \hiddenT{\ell-1} \;
    \gradT{ \;\softmax{ \tp{\hiddenT{\ell-1}} \weights{\ell} \hiddenT{\ell-1}} } \;
    \tp{\hiddenT{\ell-1}}
    \\
                           & \;=\;
    \rotH{\ell-1} \;
    \hidden{\ell-1} \;
    \gradT{ \;\softmax{ \tp{\hidden{\ell-1}} \weights{\ell} \hidden{\ell-1}} } \;
    \tp{\hidden{\ell-1}} \;
    \tp{\rotH{\ell-1}}
    \\
                           & \;=\;
    \rotH{\ell-1} \;
    \grad{\weights{\ell}} \;
    \tp{\rotH{\ell-1}}
\end{align}
\end{subequations}

\paragraph{Population Gradients.}
We now note that, since our transformations (by construction) preserve the likelihood of the input--output pairs, they result in an identical data distribution. 
As the loss is also preserved under transformation, this implies that the population gradient is invariant to our transformations:
\begin{equation}
    \EV \Bigg[ \gradT{\weights{\ell}} \Bigg]
    \;=\;
    \eGradT{\weights{\ell}}
    \;=\;
    \eGrad{\weights{\ell}}
    \;=\;
    \EV \Bigg[ \grad{\weights{\ell}} \Bigg]
    .
\end{equation}

\mymacro[1]{\basis}{\mathbf{B}^{(#1)}}

\subsubsection{Gradient Blocks}
\label{proofsec:gradient-blocks}

Finally, in this section we conclude our proof that our model's gradient will live inside the constructed manifold.
To that end, we first analyze which matrices are invariant with respect to the studied transformations.
Rather than analyzing the gradient directly, we decompose it into blocks indexed by pairs of concepts (tokens or positions) and pairs of layer subsets, i.e., by the possible interaction paths through the network.
This decomposition is convenient because each block is easier to treat in isolation, and all blocks follow the same symmetry argument.
Moreover, it loses nothing: the residual-stream activations are supported entirely on the concept subspaces and their images under the composite value maps, so any gradient block not considered below is zero, since the corresponding activations are zero.

For a subset of layers $S \subseteq [\numLayers]$, we define the product of the corresponding value matrices, taken in descending layer order (later layers on the left), matching the composites in $\valActions{1:\numLayers}$:
\begin{equation}
    \values{S} \;=\; \prod_{\ell \;\in\, S} \values{\ell}
\end{equation}

For any layer $\weights{\ell}$ or $\weightsOut$, any two layer subsets $S_1, S_2 \subseteq [\numLayers]$, and any two embedding `concepts' $\tok{x_1}, \tok{x_2} \in \{ \mathrm{t, p} \}$ (i.e., for either tokens or positions), the corresponding gradient block satisfies:
\begin{equation}
    \mathbf{I}^{(\tok{x_1})} \;
    \tp{\values{S_1}} \;
    \eGrad{\weights{\ell}} \;
    \values{S_2} \;
    \mathbf{I}^{(\tok{x_2})}
    \;=\;
    \tp{\mathbf{E}^{(\tok{x_1})}}
    \left(
    \mathbf{I}^{(\tok{x_1})} \;
    \tp{\values{S_1}} \;
    \eGrad{\weights{\ell}} \;
    \values{S_2} \;
    \mathbf{I}^{(\tok{x_2})}
    \right)
    \mathbf{E}^{(\tok{x_2})}
\end{equation}

Recall that our embedding transformations, $\rotT$, and $\rotP$, can be chosen
independently. Moreover, by shifting every position offset by $\Delta \offset'
    = \Delta \offset + \pi$, we obtain $\rotP' = -\rotP$. This implies that any
gradient block mapping different concepts must be zero:
\begin{equation}
    \idT \;
    \tp{\values{S_1}} \;
    \eGrad{\weights{\ell}} \;
    \values{S_2} \;
    \idP
    \; = \;
    \zero
\end{equation}

We are left with the gradient blocks mapping two rotations of the same
embedding space. Because the expected loss $\EV[\loss]$ is computed over a data
distribution that is invariant to our embedding transformations, the expected
gradients must commute with those transformations.

Let $\mathbf{G}^{(p)} = \idP \tp{\values{S_1}} \eGrad{\weights{\ell}}
    \values{S_2} \idP$ and $\mathbf{G}^{(t)} = \idT \tp{\values{S_1}}
    \eGrad{\weights{\ell}} \values{S_2} \idT$. By the chain rule and the invariance
of the loss, the gradient blocks must commute with any valid transformation
matrix:
\begin{align}
    \mathbf{G}^{(p)} \rotP = \rotP \mathbf{G}^{(p)} \quad \forall \rotP \in \orthoGroup(\posSpace), \\
    \mathbf{G}^{(t)} \rotT = \rotT \mathbf{G}^{(t)} \quad \forall \rotT \in \orthoGroup(\tokCommSpace).
\end{align}
We can analyze these two constraints using standard linear algebra and symmetry properties.

\paragraph{Token Subspace.}
The transformation $\rotT$ represents permutations of common tokens that
strictly preserve associations (e.g., swapping one pair of associated tokens
with another, or swapping tokens within a pair).

Let us view $\mathbf{G}^{(t)}$ as a weighted adjacency matrix connecting tokens
in $\tokCommSpace$. For $\mathbf{G}^{(t)}$ to commute with all valid
permutations $\rotT$, the weights must be symmetric and uniform across all
interchangeable token relationships. Because any token pair can be swapped with
any other pair, the diagonal entries of $\mathbf{G}^{(t)}$ (self-connections)
must all be identical. Similarly, because tokens within a pair can be swapped,
the entries connecting a token to its associated pair must all be identical.
Any connection between tokens belonging to \emph{different} pairs would break
the permutation symmetry unless all such connections were identical.

Therefore, $\mathbf{G}^{(t)}$ must have all entries equal except for the
diagonal and the off-diagonal entries corresponding to valid associations, i.e.
$\mathbf{G}^{(t)} = c_1 \mathbf{1}^{(t)} + c_2 \idT + c_3 \commCorrel$. The
bias term $\mathbf{1}^{(t)}$ of the gradient vanishes ($c_1 \approx 0$) when
all tokens have the same L1 norm in the basis of common tokens. In our setup,
all common tokens have the same L2 norm (i.e. $\| \emb{\token} \|_2 =
    \text{const.}$), which ensures equal L1 norm in the token basis. Furthermore,
lex-invariant rare token embeddings are generated with the same mean and
variance as common tokens, which ensures nearly-equal L1 norm when the
dimensionality is large.

\paragraph{Positional Subspace.}
The transformation $\rotP$ represents a continuous shift in the positional
offsets. By definition of our sinusoidal embeddings, $\posSpace$ is spanned by
orthogonal 2D subspaces corresponding to the fundamental frequencies. In this
basis, any rotation $\rotP(\Delta \offset)$ is a block-diagonal matrix where
each $2 \times 2$ block is a standard 2D rotation matrix.

For $\mathbf{G}^{(p)}$ to commute with $\rotP(\Delta \offset)$ for \emph{all}
continuous values of $\Delta \offset$, standard matrix algebra dictates that
$\mathbf{G}^{(p)}$ must also be block-diagonal in the exact same basis, and its
$2 \times 2$ blocks must be scaled 2D rotation matrices themselves. Recall that
we considered that all frequencies have order $\tupleSize$, i.e. $\freq_j =
    2\pi N_j/ \tupleSize$, but there are fundamentally only $\tupleSize$ such
distinct frequencies. Permuting the offsets of equivalent frequencies preserves
the output and loss of the model, hence the gradient sub-block rotations
corresponding to equivalent frequencies must be equal. Therefore, the gradient
contains at most $\tupleSize$ distinct sub-blocks, each of which is a $2 \times
    2$ scaled rotation.

A polynomial of order $\tupleSize$ in $\posCorrel$ (i.e., $\mathbf{G}^{(p)} =
    c_0 \idP + c_1 \posCorrel + \ldots + c_{\tupleSize - 1}
    \posCorrel^{\tupleSize-1}$) requires one additional constraint: sub-blocks
corresponding to opposite frequencies (i.e., $\omega$ and $\omega' = 2\pi
    -\omega$) must be the transpose of each other (i.e., scaled rotations with the
same scaling factors but opposite rotations). To establish this, we must
examine the structure of the gradient in more detail. Note that
$\mathbf{G}^{(p)}$ is the gradient block of an attention matrix
$\weights{\ell}$, which is obtained as:
\begin{align}
    \grad{\weights{\ell}} \;=\;
    \hidden{\ell-1} \, \grad{A} \; \tp{\hidden{\ell-1}},
     &  &
    \text{where } A = \softmax{\tp{\hidden{\ell-1}} \, \weights{\ell} \; \hidden{\ell-1}}.
\end{align}

In particular, our gradient block will be $\mathbf{G}^{(p)} = \EV[ {\sum}_{ij}
        (\partial \loss / \partial A_{ij}) \, p_i \tp{p_j}]$, where $p_i,p_j \in
    \posSpace$ are the positional components at two locations $i,j \in [\seqPairs
        \tupleSize]$ in the sequence. Since attention score gradients $\partial \loss /
    \partial A_{ij}$ are shared across frequencies, it suffices to focus on a
single attention path. Isolating the $2 \times 2$ gradient sub-block
corresponding to a frequency $\omega$, we get $\mathbf{G}_{ij\omega} = \EV[
        p_{\omega i} \tp{p_{\omega j}} ]$. Further using the fact that $p_{\omega i} =
    \posCorrel_{\omega}^{i-j} p_{\omega j} = \tp{\posCorrel_{-\omega}^{i-j}}
    p_{\omega j}$ and taking the expectation over global offsets, which preserve
attention scores and their gradients, we obtain that opposite-frequency
sub-blocks are transpositions of each other.

By combining the structural constraints of both the token and positional
subspaces, we recover the desired geometry for all gradient blocks.

\paragraph{Multi-Head Attention.}
\label{sec:multi-head}
The above proof can be easily extended to multi-head attention:
\begin{equation}
    \hidden{\ell} \;=\; \; \hidden{\ell-1} \; + \; \sum_{h=1}^{\numHeads} \; \values{\ell,h} \, \hidden{\ell-1} \softmax{\tp{\hidden{\ell-1}} \, \weights{\ell,h} \; \hidden{\ell-1}}.
\end{equation}

By applying the same input transformations as before, we obtain slightly more
complex global transformations $\rotH{\ell} = \rotH{\ell-1} +
    \sum_{h=1}^{\numHeads} \values{\ell,h} \rotH{\ell-1} \tp{\values{\ell,h}}$,
where the orthogonality of the per-head write subspaces (\cref{asm:ortho-values})
keeps the heads' contributions independent. Looking at the gradient blocks indexed
by concepts and sets of heads, the desired structure can be established using
identical arguments.

\subsection{Circuit Folding: Invariant Dynamics with Interacting Subspaces}
\label{sec:circuit-folding}

\Cref{thm:gradient-confinement} assumes fixed orthogonal output--value maps
(\cref{asm:ortho-values}); relaxing this is more delicate, and we record the
extension here as a sketch rather than a proven claim. Allowing the value maps
$\values{\ell,h}$ to be learned lets concept subspaces rotate or mix, and allowing
the key--query and output--value factorizations to be learned independently
(rather than only through their merged products, \cref{asm:merged-kq}) requires
tracking the intermediate representations rather than just those products. We
expect the same block-level arguments to extend to factorizations in which
subspaces interact only within a single concept (tokens or positions) and are
transformed only by their fundamental actions; this appendix sets up
the circuit-folding formalism needed to make this precise. We emphasize that this
is a sketch, and leave a complete treatment to future work.

\mymacro{\w}{\mathbf{W}}
\mymacro{\h}{\mathbf{H}}
\mymacro{\B}{\mathbf{B}}
\mymacro{\bTI}{\mathcal{B}_{t}^i}
\mymacro{\bTO}{\mathcal{B}_{t}^o}
\mymacro{\bPI}{\mathcal{B}_{p}^i}
\mymacro{\bPO}{\mathcal{B}_{p}^o}

To illustrate the idea, consider any learnable matrix $\w \in \R^{d_2 \times
        d_1}$ of the network. For any orthogonal input and output concept spaces, we
can establish the following invariant manifold:
\begin{equation}
    \w \;\in\; \vecSpan(\tp{\bTO} \tokActions \bTI + \tp{\bPO} \posActions \bPI)
\end{equation}
where $\bTI \subset \R^{d_t \times d_1}, \bTO \subset \R^{d_t \times d_2}, \bPI \subset \R^{d_p \times d_1}, \bPO \subset \R^{d_p \times d_2}$ denote the sets of orthogonal basis vectors for the orthogonal input/output token/position spaces.

Consider a sequence of representations $\h \in \R^{d_i \times n}$ that we
transform by applying the same input transformation as before. Each token space
is transformed with $\rotT \in \orthoGroup(\tokCommSpace)$ and each position
space is transformed with $\rotP \in \orthoGroup(\posSpace)$. We define the
global input transformation:
\begin{equation}
    \Lambda_i = \sum_{\B \in \bTI}  \tp{\B} \rotT \B \;+\; \sum_{\B \in \bPI}  \tp{\B} \rotP \B.
\end{equation}

Since output subspaces are connected only to input subspaces of the same
concept via transformations that commute with the input transformations, we
obtain the following output transformation:
\begin{align}
    \w \Lambda_i \h \;=\; \Lambda_o \w \h,
     &  &
    \Lambda_o = \sum_{\B \in \bTO}  \tp{\B} \rotT \B \;+\; \sum_{\B \in \bPO}  \tp{\B} \rotP \B.
\end{align}

Similarly, if the output gradient is transformed via $\Lambda_o$, the
propagated input gradient will be transformed via $\Lambda_i$. Computing the
transformed gradient blocks of $\w$ and applying the same reasoning as in the
original proof, we can show that the gradient of $\w$ fits the desired
structure.

\subsection{Beyond Attention Only: Feed-Forward Networks \& Layer Norm}
\label{sec:beyond-attention-only}

We extend \cref{thm:invariant-manifold} to standard Transformer layers, which
interleave multi-head attention with two further operations: position-wise
feed-forward networks (FFN) and layer normalization (LN). Both operations
commute with the embedding-space symmetries used in
\cref{sec:invariant-space-proof}, provided their parameters fit the natural
extension of the IMIR structure described below. The argument is the same in
spirit as the multi-head extension of \cref{sec:multi-head}: one writes down a
global residual-stream rotation $\rotH{\ell}$, checks that the new operation
commutes with it, and propagates the block-level analysis through the new
gradient blocks.

We adopt the notation of \cref{sec:invariant-space-proof}: $\rotT \in
    \orthoGroup(\tokCommSpace)$ is the orthogonal map induced by an
association-preserving permutation of common tokens, $\rotP \in
    \orthoGroup(\posSpace)$ is the rotation on the position subspace induced by a
global offset shift, and $\rotH{\ell}$ is the composite rotation on the
residual stream at layer~$\ell$.

\paragraph{Layer Normalization.}

A LayerNorm at layer $\ell$ rescales each column of the residual stream:
\begin{equation}
    \mathrm{LN}(\hidden{\ell})_{:, i}
    \;=\; \mathbf{g} \odot \frac{\hidden{\ell}_{:, i} - \mu_i \mathbf{1}}{\sigma_i}
    \;+\; \mathbf{b},
\end{equation}
where $\mu_i, \sigma_i \in \R$ are the column mean and standard
deviation and $\mathbf{g}, \mathbf{b} \in \embedSpace$ are learnable
gain and bias vectors. Two facts make LN compatible with
$\rotH{\ell-1}$.
\begin{enumerate}[leftmargin=*]
    \item The column statistics $\mu_i, \sigma_i$ depend on the residual column only
          through inner products with the all-ones direction $\mathbf{1}$ and with
          itself. Both $\rotT$ and $\rotP$ act on $\tokCommSpace$ and $\posSpace$
          respectively, and these subspaces are orthogonal to $\mathbf{1}$ in our setup
          (common-token and sinusoidal position embeddings are zero-mean by
          construction). Hence $\mu_i$ and $\sigma_i$ are unchanged by $\rotH{\ell-1}$.
    \item Provided the gain and bias decompose into the same fundamental actions that
          span the IMIR --- i.e.\ $\mathbf{g}, \mathbf{b} \in \vecSpan(\tokActions \cup
              \posActions)$ applied to $\mathbf{1}$, with the special case $\mathbf{g} =
              \mathbf{1}$, $\mathbf{b} = \zero$ recovering RMSNorm --- the elementwise
          multiplication and addition commute with $\rotH{\ell-1}$ in the same
          block-diagonal sense as $\commCorrel$ and $\posCorrel$ in the proof of
          \cref{thm:invariant-manifold}.
\end{enumerate}
Together,
\begin{equation}
    \mathrm{LN}(\rotH{\ell-1} \hidden{\ell-1})
    \;=\;
    \rotH{\ell-1}\,\mathrm{LN}(\hidden{\ell-1}),
\end{equation}
and the gradient of $\mathrm{LN}$ with respect to its input and to
$\mathbf{g}, \mathbf{b}$ inherits the block structure of
\cref{sec:invariant-space-proof} verbatim.

\paragraph{Feed-Forward Networks.}

Each FFN is a position-wise two-layer MLP,
\begin{equation}
    \mathrm{FFN}(\hidden{\ell})_{:, i}
    \;=\; W^{(\mathrm{out})}\,
    \sigma\!\big(W^{(\mathrm{in})} \hidden{\ell}_{:, i}\big),
    \qquad W^{(\mathrm{in})} \in \R^{d_h \times \embedDim},\;
    W^{(\mathrm{out})} \in \R^{\embedDim \times d_h},
\end{equation}
with $\sigma$ an elementwise non-linearity (e.g.\ GELU). Biases are
omitted for brevity; they are handled like the LN bias above.

Define the FFN counterpart of the IMIR by applying the circuit-folding
construction of \cref{sec:circuit-folding} to each FFN matrix.
$W^{(\mathrm{in})}$ maps the residual stream into the FFN-hidden space: let
$\bTO, \bPO$ denote the orthogonal basis matrices of the token-only and
position-only blocks of the FFN-hidden axes (with the FFN-hidden dimension
partitioned as $d_h = d_h^{(t)} + d_h^{(p)}$). $W^{(\mathrm{out})}$ maps back,
with the same FFN-hidden bases playing the role of input. We require
\begin{align}
    W^{(\mathrm{in})}
     & \;\in\;
    \vecSpan\!\big(\tp{\bTO}\,\tokActions\,\valActions{\ell:1}\;+\;\tp{\bPO}\,\posActions\,\valActions{\ell:1}\big),
    \\
    W^{(\mathrm{out})}
     & \;\in\;
    \vecSpan\!\big(\valActions{1:\ell}\,\tokActions\,\bTO\;+\;\valActions{1:\ell}\,\posActions\,\bPO\big),
\end{align}
which is exactly the circuit-folding form
$\w\in\vecSpan(\tp{\bTO}\tokActions\bTI+\tp{\bPO}\posActions\bPI)$ of
\cref{sec:circuit-folding}, instantiated with $\bTI = \valActions{\ell:1}$
and $\bPI = \valActions{\ell:1}$ for $W^{(\mathrm{in})}$, and with the
roles of input and output swapped for $W^{(\mathrm{out})}$.

\emph{Forward invariance.} Define the FFN-hidden rotation
$\rotH{\ell}^{(\mathrm{hid})} = \tp{\bTO}\rotT\bTO + \tp{\bPO}\rotP\bPO$,
matching $\rotH{\ell-1}$ on the residual side. By the circuit-folding
lemma (\cref{sec:circuit-folding}), $W^{(\mathrm{in})}\rotH{\ell-1} =
    \rotH{\ell}^{(\mathrm{hid})} W^{(\mathrm{in})}$ and, by the symmetric
statement for $W^{(\mathrm{out})}$, $W^{(\mathrm{out})}
    \rotH{\ell}^{(\mathrm{hid})} = \rotH{\ell-1} W^{(\mathrm{out})}$. The
non-linearity $\sigma$ is applied coordinate-wise within each
FFN-hidden sub-block. Because $\rotH{\ell}^{(\mathrm{hid})}$ permutes
coordinates only within the token sub-block (via $\rotT$, which is a
permutation of the orthonormal token basis vectors) and only within
the position sub-block (via the block-diagonal $\rotP$), and because
$\sigma$ is shared across coordinates of the same sub-block, $\sigma$
commutes with $\rotH{\ell}^{(\mathrm{hid})}$. Composing,
\begin{equation}
    \mathrm{FFN}(\rotH{\ell-1}\hidden{\ell-1})
    \;=\;
    \rotH{\ell-1}\,\mathrm{FFN}(\hidden{\ell-1}),
\end{equation}
so that the residual update $\hidden{\ell} = \hidden{\ell-1} +
    \mathrm{FFN}(\hidden{\ell-1})$ also commutes with $\rotH{\ell-1}$.

\emph{Backward invariance.} The gradients
\begin{align}
    \grad{W^{(\mathrm{out})}}
    \; & =\;
    \grad{\hidden{\ell}} \;
    \tp{\big(\sigma(W^{(\mathrm{in})} \hidden{\ell-1})\big)},
    \\
    \grad{W^{(\mathrm{in})}}
    \; & =\;
    \Big(\tp{W^{(\mathrm{out})}} \, \grad{\hidden{\ell}} \;\odot\;
    \sigma'(W^{(\mathrm{in})} \hidden{\ell-1})\Big)\;
    \tp{\hidden{\ell-1}}
\end{align}
are outer products in which one factor lives in the residual stream
(rotated by $\rotH{\ell-1}$) and the other in the FFN-hidden space
(rotated by $\rotH{\ell}^{(\mathrm{hid})}$). The same gradient-block
analysis as in \cref{sec:invariant-space-proof} (cross-concept blocks
vanish under $\rotP \mapsto -\rotP$; same-concept blocks commute with
$\rotT$ resp.\ $\rotP$) implies that the expected gradient blocks are
polynomials in the fundamental actions and therefore fit the
prescribed FFN-IMIR structure.

\paragraph{Putting it together.}

Inserting any number of FFN and LN layers, in any order, between the attention
layers does not break \cref{thm:invariant-manifold}. The extended manifold
consists of the original attention-weight constraints of
\cref{sec:invariant-space-proof}, plus the FFN-weight constraints above, plus
the LN gain/bias constraints. The forward pass commutes with $\rotH{\ell}$ at
every layer (attention, FFN, or LN); the loss is therefore invariant under our
embedding rotations; the expected gradient is invariant too; and the same
block-by-block argument forces each gradient block --- attention, FFN, or LN
--- into the prescribed polynomial form. The full Transformer thus retains the
same low-dimensional invariant manifold as the attention-only model.

\newpage
\section{Arbitrary Relational Structure}
\label{sec:relational-structure}

In the main text we specialized the discussion to irreflexive symmetric binary
relations, which already captures the bulk of the inductive tasks studied in
the literature. The framework, however, applies to arbitrary relational
structures on the vocabulary. In this appendix we describe the general
construction and then work out the directed pairwise case as a concrete
extension. The two ingredients are always the same: a relation $\setAssoc$
induces a token symmetry group, the symmetry group induces orthogonal
transformations on representations, and the corresponding commutant pins down
the admissible operators that learning dynamics can produce.

\subsection{Automorphism Groups of General Relations}
\label{sec:relational-structure-aut}

Let
\begin{align}
    \setAssoc \subseteq \vocab^\arity
\end{align}
be an $\arity$-ary relation on the vocabulary $\vocab = \{1,\dots,m\}$, characterizing
the basic relational skills required by the task. Examples include binary
associations, equivalence relations, transition structures of a Markov chain,
or higher-order dependencies. The relation $\setAssoc$ induces a subgroup of
the symmetric group, the \emph{automorphism group} of $\setAssoc$:
\begin{align}
    \operatorname{Aut}(\vocab,\setAssoc)
    &:=
    \bigl\{
        \perm \in \permGroup{m}
        \;:\;
        \tokSeq \in \setAssoc \Longleftrightarrow \perm \circ \tokSeq \in \setAssoc
    \bigr\}, \\
    \perm \circ (w_1,\dots,w_\arity)
    &:= (\perm(w_1),\dots,\perm(w_\arity)). \nonumber
\end{align}
Its elements are precisely the permutations of $\vocab$ that preserve the
relational structure of $\setAssoc$.

\subsection{Orthogonal Action on Representations}

To translate vocabulary symmetries into representation-space symmetries we
exploit the orthogonality of the embedding matrix. As in the main text we
assume $\tp{\embedMatrix}\embedMatrix = \id$ and, for simplicity of
exposition, $\embedDim = m$, so that no embedding dimensions are redundant.
Every permutation $\perm \in \operatorname{Aut}(\vocab,\setAssoc)$ induces a
permutation matrix $\mb P_\perm \in \R^{m\times m}$ acting on basis vectors by
$\mb P_\perm \mb e_i = \mb e_{\perm(i)}$, and the corresponding orthogonal
action on representations is
\begin{align}
    \mb U_\perm
    \;=\;
    \embedMatrix \mb P_\perm \tp{\embedMatrix},
    \qquad
    \mb U_\perm \emb{i} \;=\; \emb{\perm(i)}.
\end{align}
The induced symmetry group on representation space is
\begin{align}
    \mc U
    :=
    \bigl\{
        \mb U_\perm
        \;:\;
        \perm \in \operatorname{Aut}(\vocab, \setAssoc)
    \bigr\}
    \;\subseteq\;
    O(\embedDim),
\end{align}
and the admissible operators are exactly those that commute with every
element of $\mc U$:
\begin{align}
    \mc C(\mc U)
    :=
    \bigl\{
        \mb W \in \R^{\embedDim\times\embedDim}
        \;:\;
        \mb W \mb U = \mb U \mb W
        \;\;\forall\, \mb U \in \mc U
    \bigr\}.
\end{align}
By the gradient-invariance argument of \cref{sec:IMIR-theory}, the population
gradient flow preserves $\mc C(\mc U)$, so $\mc C(\mc U)$ provides the
elementary admissible building blocks of any reasoning circuit, regardless
of the arity or structure of $\setAssoc$.

\subsection{Directed Pairwise Relations}

To illustrate the construction beyond the symmetric pairwise case treated in
the main text, consider an \emph{ordered} (directed) binary relation. Write
\begin{align}
    \setAssoc
    =
    \{(\itm_1,\lbl_1),\dots,(\itm_\vocabPairs,\lbl_\vocabPairs)\}
    \;\subseteq\;
    \vocab \times \vocab,
\end{align}
where every vocabulary item appears at most once, and let
\begin{align}
    A = \{\itm_1,\dots,\itm_\vocabPairs\},
    \qquad
    B = \{\lbl_1,\dots,\lbl_\vocabPairs\},
    \qquad
    D = \vocab \setminus (A \cup B)
\end{align}
denote the sources, the targets, and the unrelated tokens. The automorphisms
of $\setAssoc$ jointly permute the directed pairs while preserving
orientation, and independently permute the unrelated tokens:
\begin{align}
    \itm_i \mapsto \itm_{\sigma(i)},
    \qquad
    \lbl_i \mapsto \lbl_{\sigma(i)},
    \qquad
    d_j \mapsto d_{\tau(j)},
    \qquad
    \sigma \in S_\vocabPairs,
    \;
    \tau \in S_{m-2\vocabPairs}.
\end{align}
Because sources and targets cannot be exchanged, the admissible operators now
distinguish four pair-local roles. Define the role-preserving projectors and
role-switching couplings
\begin{align}
    \id_A
    &:=
    \sum_{i=1}^\vocabPairs \emb{\itm_i} \tp{\emb{\itm_i}},
    \qquad
    \id_B
    :=
    \sum_{i=1}^\vocabPairs \emb{\lbl_i} \tp{\emb{\lbl_i}},
    \qquad
    \id_D
    :=
    \sum_{d \in D} \emb{d} \tp{\emb{d}}, \\
    \commCorrel_{A\to B}
    &:=
    \sum_{i=1}^\vocabPairs \emb{\lbl_i} \tp{\emb{\itm_i}},
    \qquad
    \commCorrel_{B\to A}
    :=
    \sum_{i=1}^\vocabPairs \emb{\itm_i} \tp{\emb{\lbl_i}}.
\end{align}
A direct orbit count shows that, after centering the embeddings within each
symmetry class so that the global rank-one couplings drop out, the commutant
reduces to the pair-local algebra
\begin{align}
    \mc C(\mc U)
    \;=\;
    \operatorname{span}
    \bigl\{
        \id_A,\; \id_B,\; \id_D,\;
        \commCorrel_{A\to B},\; \commCorrel_{B\to A}
    \bigr\}.
\end{align}
This generalizes the unordered case in two ways. First, the identity
$\id = \id_A + \id_B + \id_D$ splits into separate role projectors, so
attention heads can now condition on whether a token plays the role of a
source, of a target, or of an unrelated filler. Second, the symmetric
association operator $\commCorrel = \commCorrel_{A\to B} + \commCorrel_{B\to A}$
of the main text splits into two oriented operators, allowing the circuit
algebra to encode directional relations such as $\itm \to \lbl$ without
collapsing onto its inverse.

Before centering, the full commutant additionally contains uniform coupling
operators of the form $\mb J_{RS} := \sum_{r\in R}\sum_{s\in S}\emb{r}\tp{\emb{s}}$,
for role types $R,S \in \{A,B,D\}$. These operators correspond to nonzero
mean directions of the symmetry classes; centering the embeddings within each
class removes them, and the local algebra above is recovered as the essential
symmetry-compatible structure. The construction of the basis weights
$\mathcal{W}_\ell$ in \cref{sec:IMIR-theory} then proceeds verbatim, with the
token selection set $\tokActions = \{\idT, \commCorrel\}$ replaced by the
directed selection set $\{\id_A, \id_B, \id_D, \commCorrel_{A\to B},
\commCorrel_{B\to A}\}$, and the invariant manifold theorem
(\cref{thm:invariant-manifold}) carries over without further change.

\subsection{General Recipe}

The two examples treated so far -- symmetric pairwise relations in the main
text and directed pairwise relations above -- follow the same recipe, and the
recipe extends to arbitrary $\setAssoc$:
\begin{enumerate}[leftmargin=*]
    \item Compute the automorphism group
    $\operatorname{Aut}(\vocab,\setAssoc)$ of the relation, and partition
    $\vocab$ into orbits of role-equivalent tokens.
    \item Form the induced orthogonal symmetry group $\mc U \subseteq O(\embedDim)$
    via $\mb U_\perm = \embedMatrix \mb P_\perm \tp{\embedMatrix}$.
    \item Compute the commutant $\mc C(\mc U)$. Its dimension equals the number
    of orbits of $\operatorname{Aut}(\vocab,\setAssoc)$ acting on $\vocab\times\vocab$,
    and a basis is given by the orbit-indicator operators
    $\sum_{(\token,\token')\in \mathcal{O}}\emb{\token}\tp{\emb{\token'}}$,
    indexed by orbits $\mathcal{O}$.
    \item Use $\mc C(\mc U)$ as the token selection set in place of
    $\tokActions$ and form the basis weights $\mathcal{W}_\ell$ as in
    \cref{sec:IMIR-theory}.
\end{enumerate}
For binary $\setAssoc$ defining a directed graph, this recipe identifies the
isomorphic connected components of the graph; the symmetric pair and the
directed pair are the simplest such components. For higher-arity relations
the orbit count grows accordingly, and the commutant captures progressively
richer relational structure (such as equivalence classes, $n$-cycles, or
transition kernels). In every case the same gradient-invariance argument
applies and the corresponding basis weights remain invariant under population
gradient flow.

\newpage
\section{The Geometry of Reasoning Circuits}
\label{sec:geometry}

In this appendix, we discuss the geometry of the Invariant Manifold of
Inductive Reasoning (IMIR) in more detail. We first give an intuitive
description of its coordinate frame, and then we give a theoretical motivation
for its origin based on representational symmetries.

\subsection{What is an Attention Head?}
\label{sec:IMIR-intuitive}

Now that our key theoretical result has been established, we give an intuitive
description of its geometry. Using a toy attention example, we introduce the
mechanisms of the circuits that exist in the Invariant Manifold of Inductive
Reasoning. Consider the input embeddings $\mathbf{h}^{0}_i = \emb{\tok{a}} +
    \mathbf{p}_i$ (where $\mathbf{h}^{0}_i \in \embedSpace$) of token $\tok{a} \in
    \vocab$ at position $i \in [\seqLen]$. What can an attention head do with this
input?

\paragraph{Token Actions.}
In our setup, we consider that the only correlations present in the data are
between non-overlapping pairs of associated tokens in $\setAssoc$, as formalized by the $\setAssoc$-invariance assumption of \cref{sec:assumptions}.
Thus, for each token, the model can learn only two possible mappings: to the
same token or to the associated token. Applying this to attention heads, we get
two possible token actions. First, an attention head may look for the exact
token it has, constructing key and query vectors that align when the input
tokens match, i.e. $\weightsK{1,1} \,\emb{\tok{a}} \approx \weightsQ{1,1} \,
    \emb{\tok{b}}$ when $\tok{a} = \tok{b}$. This is achieved when the key-query
circuit is approximately a multiple of the identity transformation over the
subspace of token embeddings, i.e. $\tp{\weightsK{1,1}} \weightsQ{1,1} \approx
    \alpha \idT$, where $\idT = \sum_{\token \in \vocab} \emb{\token}
    \tp{\emb{\token}}$. The second possible action for the attention head is to
look for the associated token, i.e. $\weightsQ{1,1} \, \emb{\tok{a}} \approx
    \weightsK{1,1} \, \emb{\tok{b}}$ when $\{\tok{a}, \tok{b}\} \in \setAssoc$. In
this case, the key-query circuit must act as a mapping between associated
tokens, i.e. $\tp{\weightsK{1,1}} \weightsQ{1,1} \approx \alpha \commCorrel$,
where $\commCorrel = \sum_{\{\tok{a}, \tok{b}\} \in \setAssoc} (\emb{\tok{a}}
    \tp{\emb{\tok{b}}} + \emb{\tok{b}} \tp{\emb{\tok{a}}})$.

\paragraph{Position Actions.}
In our setup, each block has a random positional offset, simulating an
arbitrary (unknown) number of imaginary tokens separating consecutive blocks.
This breaks positional correlations between separate blocks, which means that
positional embeddings may only be used to attend to a previous position in the
same block (up to $\tupleSize - 1$ positions back). Indeed, since block offsets are sampled independently, the relative position between tokens of different blocks is uniformly random and carries no usable information. 
This is achieved by constructing key and query vectors that align for a certain position distance, i.e. $\weightsK{1,1}\, \textbf{p}_{i}
    \approx \weightsQ{1,1} \, \textbf{p}_{i-j}$ for a distance $j \in [\tupleSize -
        1]$. Using translation-invariant sinusoidal embeddings, this is achieved when
the key-query circuit encodes a specific rotation $\tp{\weightsK{1,1}}
    \weightsQ{1,1} \approx \alpha \posCorrel^j$,
where $\posCorrel = \EV[\mathbf{p}_{i-1} \tp{\mathbf{p}_i}]$ and $\posCorrel^j$ denotes the $j$-th matrix power of $\posCorrel$. The power arises because $\posCorrel$ rotates every frequency plane by its unit-position angle, mapping each position embedding to that of the preceding position ($\posCorrel\, \mathbf{p}_i = \mathbf{p}_{i-1}$); applying it $j$ times thus shifts positions back by $j$, i.e. $\posCorrel^j\, \mathbf{p}_i = \mathbf{p}_{i-j}$. 
Deep circuits might also benefit from attending to the exact same position using $\tp{\weightsK{1,1}} \weightsQ{1,1} \approx \alpha \idP$, where $\idP = \EV[\mathbf{p}_{i} \tp{\mathbf{p}_i}]$.

To see how this extends to stacked attention heads, assume that the first
attention head
has attended entirely to a token $\tok{b} \in \vocab$ at position $j \in
    [\seqLen]$. The residual stream at position $i$ has become
$\mathbf{h}^{1}_i = \emb{\tok{a}} + \mathbf{p}_i + \weightsP{1,1}
    \weightsV{1,1} (\emb{\tok{b}} + \mathbf{p}_j)$. How could a second attention
head leverage these representations?

\paragraph{Output-Value Projections.}
While the previously described actions remain perfectly valid, a whole new
range of possibilities is unlocked. A second head can now attend using the
information retrieved by the first head. For example, the second head can
attend to positions containing the token retrieved by the first head if
$\weightsK{2,1} (\weightsP{1,1} \weightsV{1,1} \emb{\tok{b}}) \approx
    \weightsQ{2,1} \emb{\tok{b}}$ or $\tp{\weightsK{2,1}} \weightsQ{2,1} \approx
    \weightsP{1,1} \weightsV{1,1} \idT$. The information retrieved by the first
head can also be used by the second head when constructing queries. There are
twice as many ways to construct keys and queries, so the number of total
possible behaviors has increased four-fold.

\paragraph{Combined Projections.} To get an intuition for even deeper circuits, assume that the second attention
head at position $i$ has attended entirely to a token $\tok{a}' \in \vocab$ at
position $i' \in [\seqLen]$. Moreover, assume that the first head
at position $i'$ has attended entirely to a token $\tok{b}' \in \vocab$ at
position $j' \in [\seqLen]$, so that $\mathbf{h}^{1}_{i'} =
    \emb{\tok{a}'} + \mathbf{p}_{i'} + \weightsP{1,1} \weightsV{1,1}
    (\emb{\tok{b}'} + \mathbf{p}_{j'})$. Therefore, after the second head, the
residual stream at position $i$ will be $\mathbf{h}^{2}_i = \emb{\tok{a}} +
    \mathbf{p}_i + \weightsP{1,1} \weightsV{1,1} (\emb{\tok{b}} + \mathbf{p}_j) +
    \weightsP{2,1} \weightsV{2,1} (\emb{\tok{a}'} + \mathbf{p}_{i'}) +
    \weightsP{2,1} \weightsV{2,1} \weightsP{1,1} \weightsV{1,1} (\emb{\tok{b}'} +
    \mathbf{p}_{j'})$. Note that the residual stream is now a sum of eight embedding terms --- the four token embeddings $\emb{\tok{a}}, \emb{\tok{b}}, \emb{\tok{a}'}, \emb{\tok{b}'}$ and the four position embeddings $\mathbf{p}_i, \mathbf{p}_j, \mathbf{p}_{i'}, \mathbf{p}_{j'}$ --- each rotated into its own subspace by the output--value projections of the heads that wrote it.
    Each of them
    could be used by a third head to construct queries and keys by accessing the
    corresponding subspaces.\footnote{The number of combined projections across
        layers is analogous to the number of possible code calls when converting a
        transformer into D-RASP, a programming language proposed to abstract
        transformers \citep{huang2026discovering}.}

    \subsection{Representational Symmetry of Inductive Tasks}
    \label{sec:IMIR-symmetry}

    \paragraph{Token Embeddings.} We use a data representation with fixed, non-learnable embeddings, mapping
    every $\mb x \in \mc V$ to a vector $\mb e_{\mb x} \in \R^d$, resulting in the
    embedding matrix $\mb E \in \R^{m \times d}$. However, in the context of
    induction head learning, the model should not memorize specific embeddings in
    its parameters. Rather, it should learn to associate tokens with certain roles,
    as is the case in the class of block-list tasks described above. Hence, we want
    the model weights to be invariant under transformation symmetries $S$ of the
    data representation $\mb E$: What is learned with $\mb E$ should generalize to
    all input representations $\mb E' = \mb U \mb E$, with $\mb U \in S$.

    Specifically, natural transformation symmetries to consider are subgroups of
    orthogonal transformations $S \subseteq O(d)$. As is well established, such
    orthogonal invariances can be tied to the commutant of the symmetry group
    \begin{align}
        \mc C(S) := \{ \mb W:  \mb W \mb U = \mb U \mb W ,\; \forall\, \mb U \in S\} \,.
    \end{align}
    It can be shown easily that, for instance, equivariance of linear operators and invariance of bi-linear operators constrains them to be in $\mc C(S)$. The larger $S$, the more constrained are operators, for instance, $\mc C(O(d)) = \operatorname{span}(\mb I)$ and invariant operators can only re-scale data.

    What is then a suitable choice for $S$? Our focus is on retaining associative
    pairwise relations between vocabulary items. Hence, we assume that the task
    implies a symmetric, irreflexive binary relation $\sim$ on $\mc V$ such that
    each element is related to \emph{exactly} one other element (a perfect matching). 
    This induces a subgroup $S\subseteq O(d)$ consisting of permutation matrices $\mb U_\pi$ corresponding
    to permutations $\pi$ of $\mc V$ that preserve the relation, i.e.~$\mb x \sim
\mb y \Longleftrightarrow \pi( \mb x) \sim \, \pi(\mb y)$. The commutation
    relation implies that operators in $\mc C(S)$ are constant on orbits of index
    pairs under $S$. There are three such orbits
    \begin{align}
        \mc O_1  = \{ (\mb x, \mb x): \mb x \in \mc V \}, \quad
        \mc O_2 = \{ (\mb x, \mb y): \mb x \sim \mb y, \; \mb x, \mb y \in \mc V\}, \quad
        \mc O_3 = \overline{\mc O}_1 \cap \overline{\mc O}_2\,.
    \end{align}
    and the commutant is given by
    \begin{align}
        \mc C(S) =
        \operatorname{span}\{ \mb I, \mb C, \mb 1 \mb 1^\top\}, \quad
        \mb C := \mb 1\{x \sim y\}\,.
    \end{align}
    Note that by centering embeddings such that $\mb 1^\top \mb E= \mb 0$ one can eliminate $\mb 1 \mb 1^\top$. The interpretation is very clear: For each token, a (bi-)linear map can learn only one of two possible things (and superpositions thereof): auto-association of a token with itself (i.e.~$\mb I$) or hetero-association of a token with its partner (i.e.~$\mb C$).

    \paragraph{Positional Embeddings.}
    Token positions are encoded via sinusoidal embeddings as in
    \citep{vaswani2017attention}. To model the fact that relevant blocks may appear
    at arbitrary positions, we add random offsets for each block. Let $\omega_i \in
\R_+$, $i\in[k]$, denote the embedding frequencies. For a token at relative
    position $t$ inside block $b$, with offset $\phi_b$, we define
    \begin{align*}
        p_{b,t}
        =
        \bigl(
        \sin(\omega_1(t+\phi_b)),
        \cos(\omega_1(t+\phi_b)),
        \dots,
        \sin(\omega_k(t+\phi_b)),
        \cos(\omega_k(t+\phi_b))
        \bigr)^\top
        \in \R^{2k}.
    \end{align*}

    In the spirit of the previous paragraph, we require that proper induction does
    not learn or memorize absolute token positions, but only relative ones. We thus
    require invariance under relative position shifts. It is a key property of
    sinusoidal embeddings that shifts in position correspond to rotations within
    each subspace $V_j$. More abstractly, the above positional embeddings admit a
    decomposition into orthogonal two-dimensional subspaces corresponding to
    different frequencies, i.e.~$\R^{2k} = \bigoplus_{j} V_j$. The corresponding
    subgroup $P \subseteq O(d)$ consists of block-diagonal orthogonal matrices of
    the form
    \begin{align}
        \mb U = \operatorname{diag}(\mb U_1, \dots, \mb U_k),
        \quad \mb U_j \in SO(2).
    \end{align}
    As before, symmetry of the data distribution implies that admissible operators must commute with all $\mb U \in P$, which already constrains them substantially. Commutation alone, however, does not fully determine the structure used in the main text: additional properties of the sinusoidal embeddings, unrelated to this symmetry, further reduce the admissible operators to the desired polynomial structure in the positional shift operator (see \cref{sec:invariant-space-proof}).

    \newpage
    \section{In-Context Learning vs.\ In-Weights Learning}

    \subsection{Training details}
    \label{sec:icl-vs-iwl-training}

    We train a 2-layer, 1-head-per-layer attention-only transformer of width $128$
    on an in-context bigram prediction task with a vocabulary of $16$ common tokens
    and $64$ rare tokens, plus sinusoidal positional embeddings with 4 frequencies
    (8 features). Each sequence consists of $4$ bigram demonstrations followed by a
    query, giving a context length of $9$. Tokens within each sequence are sampled
    as rare with probability $0.9$, with single-shot demonstrations within context
    and zero per-sequence frequency variation. To facilitate interpretability, we
    train the simplified architecture described in \cref{sec:proof-simplified},
    with merged key-query matrix, fixed output-value projections, and orthogonal
    representations. The width of $128$ is chosen so that the model exactly admits
    the geometric construction of the bigram circuit on the manifold (base
    dimension $32$, branching $(1+H)^{L}=4$, hence $4\cdot32=128$).
    We optimize
    with vanilla SGD at learning rate $0.1$ and batch size $256$ for $250$ steps;
    the random seed is fixed at $0$. Every $10$ steps we estimate top-1 accuracy on
    two probe distributions, each averaged over $8$ fresh batches: an \emph{ICL
        probe} in which every token is rare, so that the query bigram can only be
    solved by attending to its in-context demonstration, and an \emph{IWL probe} in
    which every token is common and the context is shuffled so the target pair is
    absent, forcing reliance on the in-weights bigram statistics. For the
    constrained training, we use the same optimization hyperparameters, while
    ablating all model parameters except for $\alpha$, $\beta$, $\gamma$, and
$\delta$ after every step of gradient descent.

    \paragraph{Compute resources.}
    The model has approximately $0.05\,$M learnable parameters; one full training
    run (250 SGD steps at batch size 256, plus the periodic ICL and IWL probes)
    takes well under a minute on a single consumer-grade GPU and uses less than
$100\,$MB of GPU memory. Constrained training adds a single per-step IMIR
    projection, which is dominated by a constant-size matrix multiply and does not
    change the wall-clock cost in any noticeable way.

    \subsection{Circuit Ablations}
    \label{sec:icl-vs-iwl-ablations}

    In this appendix, we causally test the circuit interpretation given in \cref{sec:icl-vs-iwl}. We take the model trained as described in \cref{sec:icl-vs-iwl-training} and ablate all of its parameters except for a chosen subset of the four highlighted IMIR directions; we then re-evaluate the training loss and the ICL and IWL probe accuracies (defined in \cref{sec:icl-vs-iwl-training}). Each row of \cref{tab:ablation-results} corresponds to one such subset. The results confirm our interpretation: keeping only $(\alpha, \beta, \gamma)$ preserves ICL accuracy ($0.75$, vs.\ $0.77$ for the unablated model) while destroying IWL; keeping only $\delta$ yields near-perfect IWL accuracy and no ICL; and removing any single direction from $(\alpha, \beta, \gamma)$ destroys ICL entirely, confirming that these three directions jointly implement the induction head.

    \begin{table}[h]
        \centering
        \caption{Ablation study confirming that the $\alpha\beta\gamma$-circuit implements In-Context Learning (ICL), while the $\delta$-circuit accounts for In-Weights Learning (IWL).}
        \label{tab:ablation-results}
        \begin{tabular}{lccc}
            \toprule
            \textbf{Ablated except}         & \textbf{Train Loss} & \textbf{ICL Accuracy} & \textbf{IWL Accuracy} \\
            \midrule
            all (original)                  & 1.41                & \textbf{0.77}         & 0.22                  \\
            $\alpha, \beta, \gamma, \delta$ & \textbf{1.06}       & \textbf{0.76}         & 0.14                  \\
            $\alpha, \beta, \gamma$         & \textbf{1.03}       & \textbf{0.75}         & 0.10                  \\
            $\delta$                        & 4.34                & 0.01                  & \textbf{0.99}         \\
            $\alpha, \beta$                 & 4.38                & 0.00                  & 0.06                  \\
            $\alpha, \gamma$                & 3.99                & 0.05                  & 0.00                  \\
            $\beta, \gamma$                 & 7.47                & 0.03                  & 0.00                  \\
            \bottomrule
        \end{tabular}
    \end{table}

    \newpage
    \subsection{All circuits during training}
    \label{sec:icl-vs-iwl-all-circuits}

    As described in \cref{sec:icl-vs-iwl}, the IMIR of our two-layer, single-head model is 28-dimensional. We additively decompose the model's key--query and output weights into these 28 basis directions, and we track the resulting coefficients throughout training. \cref{fig:circuits-all} plots every coefficient, across the data regimes considered in \cref{sec:icl-vs-iwl}. Only the four highlighted directions ($\alpha$, $\beta$, $\gamma$, and $\delta$) grow appreciably during training; this is the evidence behind our claim in \cref{sec:icl-vs-iwl} that learning concentrates on these four directions.

    \begin{figure}[h]
        \centering
        \captionsetup{width=0.9\linewidth}
        \includegraphics[width=\linewidth]{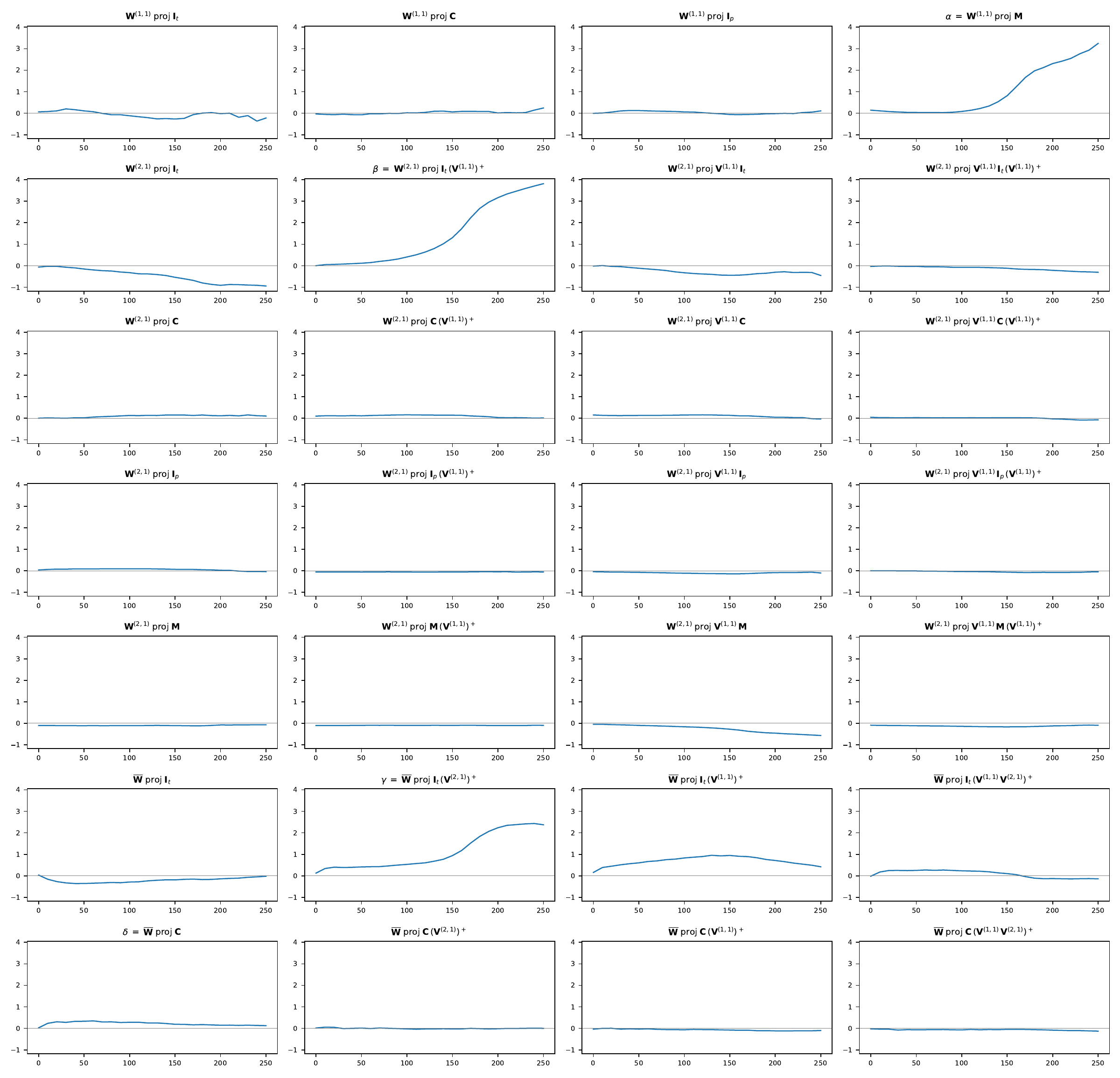}
        \caption{All 28 IMIR coefficients of a 2-layer, single-head Transformer trained on in-context bigrams, tracked over 250 training steps: the 4 first-layer directions (top row), the 16 second-layer directions (middle rows), and the 8 output directions (bottom rows). Only the four highlighted directions ($\alpha$, $\beta$, $\gamma$, $\delta$) grow appreciably during training.}
        \label{fig:circuits-all}
    \end{figure}

    \newpage
    \subsection{Proof of \texorpdfstring{\cref{thm:icl-rare-tokens}}{Theorem 2}}
    \label{sec:proof-rare-tokens}

    \thmICLrareTokens*

    We give the detailed proof of \cref{thm:icl-rare-tokens}, following the proof
    sketch: we partition the data distribution into the part on which the IWL
    solution succeeds and the remainder, and we show that the IWL solution kills
    the ICL gradient on the former while leaving it unchanged on the latter.

    We work in the simplified architecture of \cref{sec:proof-simplified} with the
    in-context bigram task ($\tupleSize = 2$); the argument extends to arbitrary
$\tupleSize$ without modification. The ICL induction circuit is specified by
    three pseudo-parameters $\alpha, \beta, \gamma$ via
    \begin{align}
        \weights{1} \;=\; \alpha\, \posCorrel,
         &  &
        \weights{2} \;=\; \beta\, \idT\, \tp{\values{1}},
         &  &
        \weightsOut \;=\; \gamma\, \idT\, \tp{\values{2}},
    \end{align}
    and the IWL solution by a single pseudo-parameter $\delta$ via
$\weightsOut = \delta\, \commCorrel$.
    We write $\weightsAll = (\alpha,\beta,\gamma,0)$ for an ICL-only configuration
    and $\weightsAll + \weightsAll_\mathrm{IWL} = (\alpha,\beta,\gamma,\delta)$ for
    the configuration with the IWL solution superimposed on top, so that
$\weightsOut = \gamma\, \idT\, \tp{\values{2}} + \delta\, \commCorrel$.

    \paragraph{Setup: the ICL gradient, the regime, and the disjointness assumption.}
    We pin down the three objects the comment flags as under-specified.

    \emph{\textbf{(A1) The ICL gradient is a scalar.}} Throughout we use the single
    definition of \cref{sec:proof-burstiness}: $\nabla_\mathrm{ICL}\loss$ is the
    directional derivative of the loss along the unit ICL tuple direction
    $\hat e_\mathrm{ICL} \propto (\weightsICL{1},\weightsICL{2},\weightsOutICL)$ under
    the Frobenius inner product,
    \begin{equation}
        \begin{aligned}
            \nabla_\mathrm{ICL}\loss
            \;&=\; \big\langle \weightsICL{1}, \grad{\weights{1}}\big\rangle
                + \big\langle \weightsICL{2}, \grad{\weights{2}}\big\rangle
                + \big\langle \weightsOutICL, \grad{\weightsOut}\big\rangle
            \;=\; \big( \mathbf{p}(\mathbf{z}) - \modelOutput \big)^{\!\top} J,
            \\[2pt]
            J \;&:=\; \frac{\mathrm{d} \mathbf{z}_\mathrm{ICL}}{\mathrm{d} t}\Big|_{t=0} \in \R^{|\vocab|},
        \end{aligned}
        \label{eq:icl-grad-def}
    \end{equation}
    where $J$ is the logit-space directional Jacobian along $\hat e_\mathrm{ICL}$ and
    $J_x := \langle \emb{x}, J\rangle$ is its coordinate in the (orthonormal) embedding
    basis; the chain rule gives the second equality because the loss sees the ICL
    direction only through the logits $\mathbf{z}$. This collapses the
    three pseudo-parameter partials $\partial\mathbf{z}_\mathrm{ICL}/\partial\theta$
    of the sketch into one logit-space vector $J$, reconciling the notation with
    \cref{sec:proof-burstiness}.

    \emph{\textbf{(A2) Geometry.}} Token embeddings are orthonormal (the
    associative-memory idealization; lex-invariant rare embeddings share the common
    tokens' norm and are pairwise orthogonal up to $O(|\vocab|^{-1})$ fluctuations),
    under the standing assumptions of \cref{sec:proof-simplified}, and all gradients
    are population (data-averaged) gradients, consistent with
    \cref{thm:invariant-manifold}.

    \emph{\textbf{(A3) Asymptotic regime.}} We work in the \emph{saturated-IWL,
    large-vocabulary} regime: the sequence length $\seqLen$, the number of blocks
    $\seqPairs$, the common-vocabulary size $|\vocabC|$, and the ICL readout scale
    $\gamma = O(1)$ are all held \emph{fixed}, while $|\vocab| \to \infty$ (the rare
    vocabulary grows) and the IWL scale $\delta \to \infty$ jointly, fast enough that
    $|\vocab|\,e^{-\delta} \to 0$ (equivalently $\delta - \log|\vocab| \to \infty$).
    Because $\gamma = O(1)$, the ICL logits are $O(1)$ and the softmax partition
    function is $\Theta(|\vocab|)$, so every off-target probability is
    $O(|\vocab|^{-1})$ --- the fact that drives both error terms.

    \emph{\textbf{(A4) Distractor disjointness.}} For a common query with canonical
    partner $\token_q'$, no distractor block carries $\token_q'$ as an item or a
    label; i.e.\ the query's canonical association is token-disjoint from the
    distractor context. Under (A4) the induction circuit never reads $\token_q'$, so
    its logit-space contribution --- and hence $J_{\token_q'}$ --- is pure
    orthonormality fluctuation, $J_{\token_q'} = O(|\vocab|^{-1})$. This is the clean
    fix for the cross-talk flagged above; \cref{rem:finite-vocab-crosstalk} quantifies
    the deviation of the actual finite-vocabulary sampler from~(A4).

    \medskip
    We summarize the residual error of the IWL solution by
    \begin{equation}
        \epsilon_{\mathrm{IWL}} \;:=\;
        \EV_{\mathcal{D}_{c,\mathrm{can}}}\!\big[\, 1 - \mathbf{p}(\mathbf{z})_{\token_q'} \,\big]
        \;=\; O\!\big(|\vocab|\, e^{-\delta}\big),
    \end{equation}
    the probability mass that the saturated model fails to place on the canonical
    token on the part of the data the IWL solution covers; under (A3),
    $\epsilon_{\mathrm{IWL}} \to 0$, and the rate is derived in \cref{lem:iwl-solvable}.

    \paragraph{Output decomposition.}
    Let $q$ denote the query position. The simplified architecture yields the
    orthogonal decomposition
    \begin{equation}
        \hiddenlast{2} \;=\;
        \underbrace{\emb{\token_q}}_{\in\, \tokSpace}
        \;+\; \underbrace{\mathbf{p}_q}_{\in\, \posSpace}
        \;+\; \underbrace{\values{1} \mathbf{u}^{(1)}}_{\in\, \values{1} \repSpace{0}}
        \;+\; \underbrace{\values{2} \mathbf{u}^{(2)}}_{\in\, \values{2} \repSpace{1}},
    \end{equation}
    with the four addends in pairwise orthogonal subspaces (cf.\
    \cref{sec:proof-simplified}). The map $\commCorrel$ acts only on the common-token
    subspace and vanishes on the other three subspaces, so
    \begin{equation}
        \commCorrel\, \hiddenlast{2} \;=\; \commCorrel\, \emb{\token_q}
        \;=\;
        \begin{cases}
            \emb{\token_q'} & \text{if } \token_q \in \vocabC \text{ with canonical partner } \token_q', \\
            \zero           & \text{if } \token_q \in \vocabR.
        \end{cases}
    \end{equation}
    The model logits at $q$ therefore split into an ICL part and an IWL part,
    \begin{equation}
        \mathbf{z}
        \;=\; \weightsOut\, \hiddenlast{2}
        \;=\; \mathbf{z}_\mathrm{ICL}(\alpha,\beta,\gamma)
        \;+\; \delta\, \commCorrel\, \emb{\token_q},
        \qquad
        \mathbf{z}_\mathrm{ICL} \;=\; \gamma\, \idT\, \tp{\values{2}}\, \hiddenlast{2},
    \end{equation}
    where the IWL part depends on $\delta$ but not on $\alpha,\beta,\gamma$.

    \begin{lemma}[The ICL Jacobian is blind to the IWL solution]
        \label{lem:icl-blind}
        The directional Jacobian $J$ of \eqref{eq:icl-grad-def} is independent of the
        IWL scale $\delta$, so the only $\delta$-dependence of $\nabla_\mathrm{ICL}\loss$
        is funneled through the probabilities:
        \begin{equation}
            \nabla_\mathrm{ICL} \loss
            \;=\;
            \big( \mathbf{p}(\mathbf{z}) - \modelOutput \big)^{\!\top} J,
            \qquad
            \mathbf{p}(\mathbf{z})_x \;=\; \frac{e^{\tp{\emb{x}} \mathbf{z}}}{\sum_{x'} e^{\tp{\emb{x'}} \mathbf{z}}}.
            \label{eq:icl-grad-template}
        \end{equation}
        Moreover $J$ obeys two structural bounds:
        \emph{(J1)} an $\ell_1$ bound $\|J\|_1 = \sum_x |J_x| = O(1)$; and
        \emph{(J2)} under \emph{(A4)}, $|J_{\token_q'}| = O(|\vocab|^{-1})$.
    \end{lemma}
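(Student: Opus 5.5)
The plan is to treat the Jacobian $J$ as the derivative of $\mathbf{z}_\mathrm{ICL}$ alone. The key input is the output decomposition just established: $\mathbf{z} = \mathbf{z}_\mathrm{ICL}(\alpha,\beta,\gamma) + \delta\,\commCorrel\,\emb{\token_q}$, where the second term does not depend on $(\alpha,\beta,\gamma)$.

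\textbf{Independence of $\delta$ and the gradient formula.} Differentiating along the ICL direction $\hat e_\mathrm{ICL}$ annihilates the IWL term. Hence $J = \frac{\mathrm{d}}{\mathrm{d}t}\mathbf{z}_\mathrm{ICL}$ is a function of $(\alpha,\beta,\gamma)$ and the input sequence only.

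This needs one check: the forward pass up to $\hiddenlast{2}$ must not involve $\weightsOut$. That holds because $\weightsOut$ acts only after layer $2$, and $\weights{1},\weights{2}$ carry no $\delta$ component.

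The cross-entropy gradient with respect to the logits is $\mathbf{p}(\mathbf{z})-\modelOutput$. Combining this with the chain rule gives \eqref{eq:icl-grad-template}, with $\mathbf{p}$ taken in the orthonormal embedding basis. All $\delta$-dependence then enters through $\mathbf{p}(\mathbf{z})$.

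\textbf{Bound (J1).} Write $\mathbf{z}_\mathrm{ICL} = \gamma\,\idT\,\tp{\values{2}}\,\values{2}\mathbf{u}^{(2)}$. By \cref{asm:ortho-values}, $\tp{\values{2}}\values{2}$ acts as the identity on $\repSpace{1}$, so this equals $\gamma\,\idT\,\mathbf{u}^{(2)}$. Here $\mathbf{u}^{(2)}$ is a convex combination of at most $\seqLen$ columns of $\hidden{1}$, and its token component is a convex combination of at most $\seqLen$ token embeddings.

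Differentiating in $t$ splits $J$ into two pieces:
\begin{itemize}
\item a $\gamma'$-term, which is a convex combination of unit embedding coordinates, with $\ell_1$ norm at most $1$;
\item a term from derivatives of the softmax weights with respect to $\alpha$ and $\beta$. This is $\gamma\sum_i \dot a_i \emb{\token_i}$ with $\sum_i|\dot a_i| \le 2\max_i|\dot s_i|$ (the standard softmax Jacobian bound). Each score derivative $\dot s_i$ is a bilinear form in bounded residual vectors, and the first-layer attention derivatives propagate through $\hidden{1}$ the same way.
\end{itemize}
With $\seqLen$, $\gamma$ and the norms fixed by (A3), summing over the at most $\seqLen$ contributing tokens gives $\|J\|_1 = O(1)$, uniformly in $|\vocab|$ and $\delta$.

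\textbf{Bound (J2).} By the same expansion, $J$ is supported on the embeddings of tokens that appear in the context. Under (A4) the canonical partner $\token_q'$ appears in no context block. Its coordinate $J_{\token_q'} = \langle \emb{\token_q'}, J\rangle$ is therefore a finite sum of inner products $\langle \emb{\token_q'},\emb{\token_i}\rangle$ with $\token_i \ne \token_q'$.

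These inner products vanish exactly for common tokens. For lex-invariant rare tokens they are $O(|\vocab|^{-1})$ by (A2). Multiplying by the $O(1)$ coefficients from (J1) gives $|J_{\token_q'}| = O(|\vocab|^{-1})$.

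\textbf{Main obstacle.} I expect the hardest step to be making (J1) uniform. The derivatives through both softmax layers must be bounded independently of $|\vocab|$, which requires the attention scores, and hence $\alpha$ and $\beta$, to stay in a bounded set. I would handle this by stating explicitly that $(\alpha,\beta,\gamma)$ range over a fixed compact set, consistent with (A3).
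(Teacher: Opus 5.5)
Your proposal follows the paper's proof, and on (J1) it is more careful than the paper. Like the paper, you get $\delta$-independence from the orthogonal output decomposition (the IWL logit $\delta\,\commCorrel\,\emb{\token_q}$ does not move along the ICL direction), (J1) from $J$ being a $\gamma$-scaled combination of in-context token embeddings, and (J2) from the circuit not reading $\token_q'$. Your softmax-Jacobian bound $\sum_i|\dot a_i|\le 2\max_i|\dot s_i|$ and the boundedness caveat on $(\alpha,\beta)$ cover the attention-derivative terms that the paper's one-line justification of (J1) skips.

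One caveat you share with the paper: (A4) only excludes $\token_q'$ from \emph{distractor} blocks, so the step ``$\token_q'$ appears in no context block'' fails on $\mathcal{D}_{c,\mathrm{can}}$, where $\token_q'$ is the relevant block's label and $J_{\token_q'}$ is generically $\Theta(1)$. (J2) therefore holds only for scrambled common queries, which is the only place it is used (\cref{lem:iwl-unhelpful}).
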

    \begin{proof}
        For each $\theta \in \{\alpha,\beta,\gamma\}$, $\partial \hiddenlast{2} /
        \partial \theta$ lives entirely in $\values{1} \repSpace{0} + \values{2}
        \repSpace{0} + \values{2} \repSpace{1}$ (it is built from differentiating
        the layer-$1$/layer-$2$ attention scores and attended content, all produced
        by the value rotations of those layers). All three subspaces are orthogonal
        to $\tokSpace$, so $\commCorrel$ annihilates them, $\commCorrel\,
        \partial \hiddenlast{2} / \partial \theta = \zero$, and hence
        $\partial \mathbf{z}/\partial\theta = \partial\mathbf{z}_\mathrm{ICL}/
        \partial\theta$ is independent of $\delta$; assembling the three partials into
        the directional derivative \eqref{eq:icl-grad-def} makes $J$ $\delta$-independent.
        \emph{(J1):} $J = \mathrm{d}\mathbf{z}_\mathrm{ICL}/\mathrm{d}t$ is, by the
        induction-circuit construction, a combination of in-context token embeddings
        weighted by layer-$2$ attention weights (which sum to $1$) and the bounded
        value rotations, scaled by $\gamma = O(1)$; in the orthonormal embedding basis
        its coordinates therefore have $\ell_1$ mass $O(1)$.
        \emph{(J2):} the induction circuit reads only in-context tokens; under (A4)
        the canonical partner $\token_q'$ appears in no block, so $J_{\token_q'}$
        collects only the $O(|\vocab|^{-1})$ orthonormality fluctuations.
    \end{proof}

    \paragraph{Splitting the data into ``IWL-solvable'' and ``IWL-unhelpful''.}
    The data distribution $\mathcal{D}(\freqR, \freqWCV, \burstiness)$ partitions
    into three sub-distributions, $\mathcal{D}_r$ (rare query, frequency $\freqR$),
$\mathcal{D}_{c,\mathrm{can}}$ (common query whose in-context partner equals
    the canonical partner $\token_q'$, frequency $(1-\freqR)(1-\freqWCV)$), and
$\mathcal{D}_{c,\mathrm{scr}}$ (common query whose in-context partner differs
    from $\token_q'$, frequency $(1-\freqR) \freqWCV$). We bundle the latter two
    according to whether the IWL solution succeeds at solving them.

    \begin{lemma}[IWL-solvable part]
        \label{lem:iwl-solvable}
        On $\mathcal{D}_{c,\mathrm{can}}$ (mass $(1-\freqR)(1-\freqWCV)$),
        \begin{equation}
            \nabla_\mathrm{ICL} \loss_{c,\mathrm{can}}\big(\weightsAll + \weightsAll_\mathrm{IWL}\big)
            \;=\; O(\epsilon_{\mathrm{IWL}})
            \;=\; O\!\big(|\vocab|\, e^{-\delta}\big)
            \;\xrightarrow[\;\delta \to \infty\;]{}\; 0 .
        \end{equation}
    \end{lemma}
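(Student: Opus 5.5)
We bound the gradient on $\mathcal{D}_{c,\mathrm{can}}$ directly from the template \eqref{eq:icl-grad-template} of \cref{lem:icl-blind}. On this sub-distribution the target is the canonical partner, $\modelOutput = \emb{\token_q'}$. Write $p_x := \mathbf{p}(\mathbf{z})_x$. Expanding in the orthonormal embedding basis, we split the gradient into the target coordinate and the off-target coordinates:
\begin{equation}
    \nabla_\mathrm{ICL}\loss_{c,\mathrm{can}}
    \;=\; \big(p_{\token_q'} - 1\big)\, J_{\token_q'}
    \;+\; \sum_{x \neq \token_q'} p_x\, J_x .
\end{equation}
The first term is bounded by $|1-p_{\token_q'}|\,|J_{\token_q'}| \le (1-p_{\token_q'})\,\|J\|_1$. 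The second is bounded by $\max_{x\ne \token_q'} p_x \cdot \|J\|_1 \le (1-p_{\token_q'})\,\|J\|_1$, since the off-target probabilities sum to $1-p_{\token_q'}$. By (J1), $\|J\|_1 = O(1)$, so pointwise $|\nabla_\mathrm{ICL}\loss_{c,\mathrm{can}}| \le 2(1-p_{\token_q'})\,O(1)$. Taking the expectation over $\mathcal{D}_{c,\mathrm{can}}$ gives $O(\epsilon_{\mathrm{IWL}})$ by the definition of $\epsilon_{\mathrm{IWL}}$. We do not even need (J2) here; it can sharpen the first term, but it is not required.

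It remains to show $\epsilon_{\mathrm{IWL}} = O(|\vocab|e^{-\delta})$, which is also the rate promised in the setup. By the output decomposition, $\mathbf{z} = \mathbf{z}_\mathrm{ICL} + \delta\,\emb{\token_q'}$ for a common query. The logit of $\token_q'$ is therefore $\delta + \tp{\emb{\token_q'}}\mathbf{z}_\mathrm{ICL}$, while every other logit is $\tp{\emb{x}}\mathbf{z}_\mathrm{ICL}$, which is $O(1)$ because $\gamma = O(1)$ and the embeddings are orthonormal (A2). Hence
\begin{equation}
    1 - p_{\token_q'} \;=\; \frac{\sum_{x\ne \token_q'} e^{\tp{\emb{x}}\mathbf{z}_\mathrm{ICL}}}{e^{\delta + \tp{\emb{\token_q'}}\mathbf{z}_\mathrm{ICL}} + \sum_{x\ne \token_q'} e^{\tp{\emb{x}}\mathbf{z}_\mathrm{ICL}}}
    \;\le\; C\,|\vocab|\, e^{-\delta},
\end{equation}
with $C = e^{2\|\mathbf{z}_\mathrm{ICL}\|_\infty}$ uniformly bounded. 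Averaging over $\mathcal{D}_{c,\mathrm{can}}$ gives the claim, and under (A3) the bound tends to $0$.

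The main obstacle is making the uniform $O(1)$ bounds legitimate: both $\|\mathbf{z}_\mathrm{ICL}\|_\infty$ and $\|J\|_1$ must be bounded uniformly in $|\vocab|$ and over the data, including the lexinvariant rare embeddings. These embeddings are only approximately orthonormal, so $\|J\|_1$ could pick up a sum of $|\vocab|$ small fluctuations. I would handle this by bounding $\mathbf{z}_\mathrm{ICL}$ and $J$ as convex combinations, weighted by the attention weights, of at most $\seqLen$ in-context embeddings passed through isometries (\cref{asm:ortho-values}). Their inner products with any unit embedding are then $O(\gamma)$ deterministically, and the $\ell_1$ mass over all $x$ is controlled by $\seqLen$ terms plus fluctuations of total size $O(1)$. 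The argument appeals to (J1) of \cref{lem:icl-blind} as stated rather than re-deriving it.
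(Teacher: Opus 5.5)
Your proposal is correct and follows essentially the same route as the paper: write the gradient as $(\mathbf{p}(\mathbf{z})-\modelOutput)^{\top}J$ via \cref{lem:icl-blind}, show that the correct-token probability is $1-O(|\vocab|e^{-\delta})$ because the canonical logit gains $\delta$ while every competing logit stays $O(1)$, and conclude with the $\ell_1$ bound (J1). Your version is slightly more careful in two places: the paper asserts that the correct logit is exactly $\gamma+\delta$ and pairs $\|\mathbf{p}-\modelOutput\|$ with $J=O(1)$ without naming the norms, whereas you need only a uniform bound on the ICL logits and make the $\ell_\infty$--$\ell_1$ pairing explicit.
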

    \begin{proof}
        Here the target equals the canonical partner, $\modelOutput = \emb{\token_q'}$,
        and both the IWL term $\delta\, \emb{\token_q'}$ and the ICL readout
        $\gamma\, \emb{\token_q'}$ add to the \emph{correct} logit, while every competing
        logit stays $O(1)$ by orthonormality. Hence the correct-token probability is
        \begin{equation}
            \mathbf{p}(\mathbf{z})_{\token_q'}
            \;=\;
            \frac{e^{\gamma + \delta}}{e^{\gamma + \delta} + \sum_{x \neq \token_q'} e^{\tp{\emb{x}}\mathbf{z}}}
            \;=\;
            1 - O\!\big(|\vocab|\, e^{-(\gamma+\delta)}\big)
            \;=\;
            1 - \epsilon_{\mathrm{IWL}},
        \end{equation}
        the second equality because the denominator's $|\vocab|-1$ off-target logits are
        each $O(1)$ by orthonormality. So the residual is $\|\mathbf{p}(\mathbf{z}) - \modelOutput\| =
        O(\epsilon_{\mathrm{IWL}})$. By \cref{lem:icl-blind}, $J = O(1)$ is independent
        of $\delta$ (J1), so \eqref{eq:icl-grad-template} gives a \emph{suppressed} (not
        exactly cancelled) ICL gradient of order $\epsilon_{\mathrm{IWL}} = O(|\vocab|\,
        e^{-\delta})$, decaying exponentially in the IWL margin $\delta$.
    \end{proof}

    \begin{lemma}[IWL-unhelpful part]
        \label{lem:iwl-unhelpful}
        On $\mathcal{D}_r \cup \mathcal{D}_{c,\mathrm{scr}}$ (mass $\freqR +
        (1-\freqR)\freqWCV$), under \emph{(A4)},
        \begin{equation}
            \nabla_\mathrm{ICL} \loss_r\big(\weightsAll + \weightsAll_\mathrm{IWL}\big)
            = \nabla_\mathrm{ICL} \loss_r\big(\weightsAll\big),
            \quad
            \nabla_\mathrm{ICL} \loss_{c,\mathrm{scr}}\big(\weightsAll + \weightsAll_\mathrm{IWL}\big)
            = \nabla_\mathrm{ICL} \loss_{c,\mathrm{scr}}\big(\weightsAll\big)
            + O\!\big(|\vocab|^{-1}\big).
        \end{equation}
    \end{lemma}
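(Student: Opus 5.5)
The plan is to reuse the template of \cref{lem:icl-blind}: on each sub-distribution,
\[
\nabla_\mathrm{ICL}\loss = \big(\mathbf{p}(\mathbf{z})-\modelOutput\big)^{\!\top} J .
\]
Here $J$ does not depend on $\delta$. So the whole question is how the probability vector $\mathbf{p}(\mathbf{z})$ changes when we add the IWL logit shift $\delta\,\commCorrel\,\emb{\token_q}$. Both the ICL-only and the superimposed configurations see the same $J$ and the same target $\modelOutput$. It therefore suffices to compare $\mathbf{p}(\mathbf{z}_\mathrm{ICL})$ with $\mathbf{p}(\mathbf{z}_\mathrm{ICL}+\delta\,\commCorrel\,\emb{\token_q})$ after contracting against $J$.

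\emph{Rare part.} On $\mathcal{D}_r$ the query token is rare. By the output decomposition above, $\commCorrel\,\emb{\token_q}=\zero$: $\commCorrel$ acts only on the common-token subspace, and under (A2) the lexinvariant rare embedding is treated as lying outside $\tokSpace$. The logits are then literally identical, $\mathbf{z}=\mathbf{z}_\mathrm{ICL}$, so the probabilities and the gradient coincide exactly. If rare embeddings are only approximately orthogonal to $\tokSpace$, the residual overlap is $O(|\vocab|^{-1})$. It would then be absorbed into the same error term as below, but the statement claims exact equality, so I would state the idealized orthogonality of (A2) explicitly here.

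\emph{Scrambled part.} On $\mathcal{D}_{c,\mathrm{scr}}$ the IWL shift adds $\delta$ to the single coordinate $\token_q'$, which is not the target. Write $p$ for the ICL-only probabilities and $\tilde p$ for the shifted ones. Softmax with one coordinate boosted gives
\[
\tilde p_x = \frac{p_x}{1+p_{\token_q'}(e^\delta-1)} \quad (x\neq\token_q'), \qquad \tilde p_{\token_q'} = \frac{p_{\token_q'}e^\delta}{1+p_{\token_q'}(e^\delta-1)} .
\]
I would then split $(\tilde p-\modelOutput)^{\top}J$ into the $\token_q'$ coordinate and the remaining ones.

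The $\token_q'$ coordinate contributes at most $|J_{\token_q'}|\le O(|\vocab|^{-1})$ by (J2), since $\tilde p_{\token_q'}\le 1$. The other coordinates are rescaled by a common factor $\kappa=1/(1+p_{\token_q'}(e^\delta-1))$. The change relative to $p$ is therefore $(\kappa-1)\sum_{x\ne \token_q'}p_xJ_x$, bounded by $|\kappa-1|\,\max_x p_x\,\|J\|_1$.

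\emph{Main obstacle.} This is exactly where the approach can fail. Under (A3), $p_{\token_q'}=\Theta(|\vocab|^{-1})$ while $e^\delta\gg|\vocab|$, so $\kappa\to0$ rather than $\kappa\approx1$. The saturated IWL logit then \emph{does} steal the mass, and the naive comparison breaks down.

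My first attempt would be to note that on this part the target coordinate carries the dominant term $-J_{\modelOutput}$. The probability terms are $\sum_x p_xJ_x$, and with $\|J\|_1=O(1)$ and all $p_x=O(|\vocab|^{-1})$ (by (A3), since the ICL logits are $O(1)$), these terms are themselves $O(|\vocab|^{-1})$ in \emph{both} configurations. Both gradients then equal $-J_{\modelOutput}+O(|\vocab|^{-1})$, which gives the claim regardless of $\kappa$.

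This step needs the off-target probability bound to hold even for the target coordinate in the ICL-only model. That follows from (A3), because $\gamma=O(1)$ keeps every logit $O(1)$ against a $\Theta(|\vocab|)$ partition function. So I would make that uniform bound the central estimate and derive both identities from it.
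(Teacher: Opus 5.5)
Your proof is correct and is essentially the paper's: on $\mathcal{D}_r$ you get exact equality because $\commCorrel\,\emb{\token_q}=\zero$, and on $\mathcal{D}_{c,\mathrm{scr}}$ you use the same split into the $\token_q'$ coordinate (controlled by (J2)) plus an $\ell_\infty$--$\ell_1$ bound $\max_x p_x\,\|J\|_1$ on the remaining coordinates, relying on the fact that the IWL boost only shrinks their probabilities. Your ``main obstacle'' is illusory, though: since $\kappa\in(0,1]$, you have $|\kappa-1|\le 1$, so your first estimate $|\kappa-1|\max_x p_x\,\|J\|_1=O(|\vocab|^{-1})$ already closes the argument (it is exactly the paper's $|\Delta p_x|\le p_x$), and your ``both configurations'' rephrasing is valid only if you again control $\tilde p_{\token_q'}=\Theta(1)$ through (J2) rather than by claiming that all probabilities are $O(|\vocab|^{-1})$.
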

    \begin{proof}
        \emph{Rare queries.} On $\mathcal{D}_r$ the IWL term is identically zero
        ($\commCorrel\, \emb{\token_q} = \zero$), so $\mathbf{z}$ and hence
        $\mathbf{p}(\mathbf{z})$ are unchanged by adding $\weightsAll_\mathrm{IWL}$;
        with $J$ also $\delta$-independent (\cref{lem:icl-blind}) the gradient is
        \emph{exactly} unchanged.

        \emph{Scrambled common queries.} The target is the in-context partner
        $\token^* \neq \token_q'$, and the IWL contribution $\delta\, \emb{\token_q'}$
        adds logit mass to a \emph{wrong} token. By \cref{lem:icl-blind} only
        $\mathbf{p}(\mathbf{z})$ carries the $\delta$-dependence, so with
        $\Delta\mathbf{p} := \mathbf{p}(\mathbf{z} + \delta\, \emb{\token_q'}) -
        \mathbf{p}(\mathbf{z})$,
        \begin{equation}
            \nabla_\mathrm{ICL} \loss_{c,\mathrm{scr}}\big(\weightsAll + \weightsAll_\mathrm{IWL}\big)
            - \nabla_\mathrm{ICL} \loss_{c,\mathrm{scr}}\big(\weightsAll\big)
            \;=\; \Delta\mathbf{p}^{\!\top} J
            \;=\; \sum_x \Delta p_x\, J_x .
            \label{eq:scr-perturbation}
        \end{equation}
        Adding $\weightsAll_\mathrm{IWL}$ only rescales the softmax normalizer, so for
        every $x \neq \token_q'$,
        \begin{equation}
            \mathbf{p}(\mathbf{z} + \delta\, \emb{\token_q'})_x
            \;=\;
            \frac{\mathbf{p}(\mathbf{z})_x}{1 + (e^{\delta} - 1)\, \mathbf{p}(\mathbf{z})_{\token_q'}}
            \;\le\; \mathbf{p}(\mathbf{z})_x
            \qquad (x \neq \token_q') .
        \end{equation}
        Hence $\max_{x \neq \token_q'} |\Delta p_x| \le \max_{x \neq \token_q'}
        \mathbf{p}(\mathbf{z})_x = O(|\vocab|^{-1})$ by (A3): under orthonormal
        embeddings and $O(1)$ logits the partition function is $\Theta(|\vocab|)$, so
        each off-target mass is $O(|\vocab|^{-1})$. The single coordinate $x =
        \token_q'$ instead absorbs
        $\Theta(1)$ of mass, $\Delta p_{\token_q'} = \Theta(1)$. We therefore split it
        off and bound the remainder by H\"older against the Jacobian $\ell_1$ norm:
        \begin{equation}
            \Big| \sum_x \Delta p_x\, J_x \Big|
            \;\le\;
            \underbrace{\big|\Delta p_{\token_q'}\big|\,\big|J_{\token_q'}\big|}_{\Theta(1)\,\cdot\, O(|\vocab|^{-1}) \text{ by (J2)}}
            \;+\;
            \underbrace{\Big(\max_{x \neq \token_q'} |\Delta p_x|\Big) \sum_{x \neq \token_q'} |J_x|}_{O(|\vocab|^{-1})\,\cdot\, O(1) \text{ by (J1)}}
            \;=\; O\!\big(|\vocab|^{-1}\big).
            \label{eq:holder-split}
        \end{equation}
        This is where \emph{(A4)} is load-bearing: it gives $|J_{\token_q'}| =
        O(|\vocab|^{-1})$ (J2), without which the first term would be $\Theta(1)$ on
        the event that $\token_q'$ appears as a distractor token
        (\cref{rem:finite-vocab-crosstalk}). Note that the per-coordinate bound
        $|\Delta p_x| = O(|\vocab|^{-1})$ does \emph{not} suffice on its own --- summed
        over $|\vocab|$ coordinates it gives $O(1)$ --- which is why the bound is
        carried as $\ell_\infty(\Delta\mathbf{p}) \cdot \ell_1(J)$.
    \end{proof}

    \begin{lemma}[IWL-free gradients agree across sub-distributions]
        \label{lem:subdist-agree}
        Under \emph{(A4)}, the IWL-free per-sub-distribution ICL gradients agree up to
        $O(|\vocab|^{-1})$:
        \begin{equation}
            \nabla_\mathrm{ICL} \loss_r(\weightsAll)
            \;=\;
            \nabla_\mathrm{ICL} \loss_{c,\mathrm{can}}(\weightsAll)
            \;=\;
            \nabla_\mathrm{ICL} \loss_{c,\mathrm{scr}}(\weightsAll)
            \;=\;
            \nabla_\mathrm{ICL} \loss(\weightsAll \,|\, \freqR = 1)
            \;+\; O\!\big(|\vocab|^{-1}\big).
        \end{equation}
    \end{lemma}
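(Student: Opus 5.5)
The plan is to write the IWL-free ICL gradient on each sub-distribution in the template form of \cref{lem:icl-blind}, $\nabla_\mathrm{ICL}\loss = (\mathbf{p}(\mathbf{z}) - \modelOutput)^{\!\top} J$, now with $\delta = 0$. We then show that, in distribution, the pair $(\mathbf{p}(\mathbf{z}) - \modelOutput,\, J)$ is the same whether the query block is rare, common-canonical, or common-scrambled, up to $O(|\vocab|^{-1})$.

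At $\delta = 0$ the logits are $\mathbf{z} = \mathbf{z}_\mathrm{ICL}(\alpha,\beta,\gamma) = \gamma\,\idT\,\tp{\values{2}}\hiddenlast{2}$. This is built only from attention scores and from attended in-context content. The first step is to note that every ingredient is relational. The layer-1 scores depend only on positions, through $\alpha\posCorrel$. The layer-2 scores $\beta\,\tp{\emb{\token_q}}\idT\tp{\values{1}}\values{1}(\cdot)$ are inner products between the query embedding and embeddings retrieved by head 1. The readout $\gamma\idT\tp{\values{2}}$ returns the retrieved label embedding. Consequently the loss, the residual $\mathbf{p}-\modelOutput$, and $J$ depend on the sequence only through the Gram matrix of its token embeddings and through which context token is the target.

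The second step treats the common sub-distributions. Under \cref{asm:data-symmetry} and (A4), $\mathcal{D}_{c,\mathrm{can}}$ and $\mathcal{D}_{c,\mathrm{scr}}$ differ only in whether the target label equals $\token_q'$ or some other token $\token^*$. In both cases the query token matches exactly one relevant in-context item, the label is a fresh token not equal to any distractor, and by (A4) $\token_q'$ is not otherwise read. We would therefore build an embedding-relabeling bijection between the two sub-distributions: a permutation of the orthonormal label embeddings that sends $\token_q'$ to $\token^*$. It preserves the Gram matrix of the tokens actually present, so $\mathbf{z}_\mathrm{ICL}$ is mapped to its relabeled version and the scalar $(\mathbf{p}-\modelOutput)^{\!\top}J$ is unchanged. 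The one difference is the unembedding softmax over all of $\vocab$. That difference is controlled by the same $\ell_\infty$--$\ell_1$ split as in \eqref{eq:holder-split}, using (J1), (J2), and $\max_x p_x = O(|\vocab|^{-1})$ from (A3).

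The third step compares rare and common. By \cref{asm:lexinvariance}, rare embeddings are resampled with the common tokens' mean and second moment. By (A2) they are orthonormal to the other in-context embeddings up to $O(|\vocab|^{-1})$ fluctuations. The Gram matrix of a rare-query sequence therefore equals that of a common-query sequence up to $O(|\vocab|^{-1})$ entries. Since $\mathbf{p}(\mathbf{z})$ is a smooth function of $O(1)$ logits, the first-order sensitivity of $(\mathbf{p}-\modelOutput)^{\!\top}J$ to Gram perturbations is $O(1)$ in $\ell_1$ by (J1). This gives agreement up to $O(|\vocab|^{-1})$. Finally, the gradient at $\freqR=1$ is just the gradient on $\mathcal{D}_r$, which closes the chain of equalities.

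I expect the main obstacle to be this last perturbation step. The softmax partition function sums over $\Theta(|\vocab|)$ terms, so per-coordinate $O(|\vocab|^{-1})$ errors could accumulate to $O(1)$. To prevent this, I would bound the perturbed quantity in the form $\ell_\infty(\Delta\mathbf{p})\cdot\ell_1(J)$, as in \cref{lem:iwl-unhelpful}. I would also keep the normalizer's perturbation relative: each off-target logit moves by $O(|\vocab|^{-1})$, so $\log Z$ moves by $O(|\vocab|^{-1})$. This avoids summing errors coordinate by coordinate.
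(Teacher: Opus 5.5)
Your proposal is correct and follows essentially the paper's argument. At $\delta=0$ the circuit never uses $\commCorrel$ and reads only in-context relational structure, so the per-sequence ICL gradient coincides across $\mathcal{D}_r$, $\mathcal{D}_{c,\mathrm{can}}$, $\mathcal{D}_{c,\mathrm{scr}}$ in the orthonormal idealization, and lexinvariance of rare embeddings contributes only $O(|\vocab|^{-1})$. Your relabeling bijection and the log-sum-exp / $\ell_\infty$--$\ell_1$ perturbation bound make explicit what the paper asserts in one line, and unlike the paper's phrasing they do not need the logits to already be a multiple of $\emb{\token^*}$.

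One correction to the canonical-versus-scrambled step. When $\token^*$ is common, no softmax correction arises at all: the transposition $\token_q'\leftrightarrow\token^*$ permutes the full-vocabulary normalizer termwise, and the $\delta=0$ model (built from $\idT$ and $\posCorrel$) is equivariant under it. Invoking (J2) there would in fact be wrong, because on $\mathcal{D}_{c,\mathrm{can}}$ the circuit does read $\token_q'$, so $J_{\token_q'}=\Theta(1)$. When $\token^*$ is rare, your step-three perturbation argument is what applies.
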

    \begin{proof}
        At $\weightsAll$ (no IWL) the logits are $\mathbf{z}_\mathrm{ICL}$ alone, which
        by construction of the induction circuit equals (a multiple of) the in-context
        partner embedding $\emb{\token^*}$ in all three sub-distributions --- the
        canonical association is never read, only the in-context demonstration is, and
        (A4) ensures the canonical partner does not re-enter as a distractor token.
        Together with the lex-invariance of rare embeddings and the orthonormality of
        common-token embeddings, this gives a per-sequence loss depending on the target
        only through $\gamma$ and $|\vocab|$, not on whether the target is rare or
        common. The three gradients therefore agree, exactly in the orthonormal
        idealization and up to $O(|\vocab|^{-1})$ finite-size corrections otherwise; by
        linearity of expectation the data-averaged $\nabla_\mathrm{ICL}\loss(\weightsAll)$
        equals this common value up to the same order.
    \end{proof}

    \begin{remark}[Finite-vocabulary cross-talk: deviation from (A4)]
        \label{rem:finite-vocab-crosstalk}
        The sampler of \cref{sec:data-properties} draws the $\Theta(\seqPairs)$
        distractor associations independently and does \emph{not} enforce~(A4): a
        distractor common block coincides with the query's canonical pair with
        probability $\Theta(1/|\vocabC|)$ per block, so the canonical partner
        $\token_q'$ surfaces in the context with probability $\Theta(\seqPairs\,
        \freqC/|\vocabC|)$ --- governed by the \emph{common}-vocabulary size $|\vocabC|$
        (e.g.\ $16$ in our experiments), not the total $|\vocab|$. On that event the
        induction circuit \emph{does} read $\token_q'$, so $|J_{\token_q'}| = \Theta(1)$
        and the first term of \eqref{eq:holder-split} is $\Theta(1)$ rather than
        $O(|\vocab|^{-1})$. Carrying it explicitly replaces the $O(|\vocab|^{-1})$ error
        of \cref{lem:iwl-unhelpful,lem:subdist-agree} (and of \cref{thm:icl-rare-tokens})
        by $O\!\big(|\vocab|^{-1} + \seqPairs\, \freqC/|\vocabC|\big)$. This term does
        not vanish in the orthonormal idealization; (A4) removes it by fiat, and it is
        the only place where the finite common vocabulary enters the bound.
    \end{remark}

    \paragraph{Combining the two parts.}
    Summing the per-sub-distribution contributions weighted by their frequencies and
    substituting \cref{lem:iwl-solvable,lem:iwl-unhelpful} ($O(\epsilon_{\mathrm{IWL}})$ on
$\mathcal{D}_{c,\mathrm{can}}$, unchanged-up-to-$O(|\vocab|^{-1})$ on
$\mathcal{D}_{c,\mathrm{scr}}$, exactly unchanged on $\mathcal{D}_r$),
    \begin{subequations}
    \begin{align}
        \nabla_\mathrm{ICL} \loss\big(\weightsAll + \weightsAll_\mathrm{IWL}\big)
        \; & =\;
        \freqR \, \nabla_\mathrm{ICL} \loss_r\big(\weightsAll\big)
        \\
           & \quad
        \;+\; (1-\freqR)(1-\freqWCV) \, O(\epsilon_{\mathrm{IWL}})
        \nonumber \\
           & \quad
        \;+\; (1-\freqR)\, \freqWCV \, \Big( \nabla_\mathrm{ICL} \loss_{c,\mathrm{scr}}\big(\weightsAll\big) + O\!\big(|\vocab|^{-1}\big) \Big)
        \nonumber \\
           & \;=\;
        \big( \freqR + (1-\freqR)\, \freqWCV \big)
        \, \nabla_\mathrm{ICL} \loss(\weightsAll)
        \;+\; O(\epsilon_{\mathrm{IWL}}) \;+\; O\!\big(|\vocab|^{-1}\big),
    \end{align}
    \end{subequations}
    where the final step used \cref{lem:subdist-agree} (equality of the IWL-free
    per-sub-distribution gradients). Taking expectations gives the statement of
    \cref{thm:icl-rare-tokens}: the ICL gradient is suppressed by the data-coverage
    factor $\freqR + (1-\freqR)\freqWCV$, exactly in the saturated-IWL,
    large-vocabulary limit and up to the two stated error terms otherwise. Without the
    disjointness assumption (A4), the $O(|\vocab|^{-1})$ term is replaced by
    $O\!\big(|\vocab|^{-1} + \seqPairs\,\freqC/|\vocabC|\big)$, per
    \cref{rem:finite-vocab-crosstalk}.
    \hfill$\square$

    \subsection{Proof of \texorpdfstring{\cref{thm:icl-burstiness}}{Theorem~3}}
    \label{sec:proof-burstiness}

    \thmICLburstiness*

    The proof rests on a single observation: the ICL gradient decomposes
    into a sum of \emph{per-block} contributions in which every relevant
    block carries the same expected signal and every distractor block
    carries zero. With $\burstiness$ identical relevant copies in context,
    the resulting sum is $\burstiness$ times the corresponding
$\burstiness = 1$ quantity.

    \paragraph{Setup.}

    We work in the simplified two-attention-layer architecture of
    \cref{sec:proof-simplified} (single head per layer, fixed orthogonal
    output--value matrices $\values{\ell}$, merged key--query matrices
    $\weights{\ell}$). The all-rare regime $\freqR = 1$ is assumed; the $(\freqR +
(1 - \freqR)\freqWCV)$ pre-factor of \cref{thm:icl-rare-tokens} is independent
    of $\burstiness$ and multiplies the conclusion below without altering its
    derivation. The extension to deeper or multi-head models follows from
    \cref{sec:multi-head,sec:circuit-folding}.

    A single training sequence has $\seqPairs$ blocks of size $\tupleSize = 2$
    followed by the query item at the last position $\queryPos = N$. Block $r \in
[\seqPairs]$ has its item at position $i_r = 2r - 1$ and its label at position
$j_r = 2r$. The block indices split into the \emph{relevant} set $\mathcal{R} =
\{r : \token_{i_r} = \itm_q\}$ of size $|\mathcal{R}| = \burstiness$ and the
    \emph{distractor} set $\mathcal{D} = [\seqPairs] \setminus \mathcal{R}$, whose
    items and labels are drawn from the rare vocabulary independently of $(\itm_q,
\lbl_q)$. We invoke the never-repeat (lexinvariance) assumption of
    \cref{sec:data-properties}: distractor tokens are drawn from the effectively
    infinite rare vocabulary, so no distractor item coincides with $\itm_q$. (With
    a finite rare vocabulary $\vocabR$ this would fail with probability
    $\sim 1/|\vocabR|$ per block, contributing an extra $O(\seqPairs/|\vocabR|)$
    term; lexinvariance removes it.)

    \paragraph{Layer-$2$ attention at the query.}

    Write $s^{(\ell)}_{ik}$ for the pre-softmax attention logit from position $i$
    to position $k$ at layer $\ell$, and $A^{(\ell)}_{ik} = e^{s^{(\ell)}_{ik}} /
    Z^{(\ell)}_i$ for the corresponding attention weight, with $Z^{(\ell)}_i =
    \sum_k e^{s^{(\ell)}_{ik}}$. The query at position $N$ reads out through the
    layer-$2$ denominator
    \begin{equation}
        Z_q \;\coloneqq\; Z^{(2)}_N \;=\; \sum_{k} e^{s^{(2)}_{N k}}
        \;=\; \underbrace{\textstyle\sum_{r \in \mathcal{R}} e^{s^{(2)}_{N j_r}}}_{\burstiness\;\text{relevant keys}}
        \;+\; \underbrace{\textstyle\sum_{r \in \mathcal{D}} e^{s^{(2)}_{N j_r}}}_{\Theta(\seqPairs)\;\text{distractor keys}} ,
        \label{eq:burst-Zq}
    \end{equation}
    which couples all blocks. At a fixed ICL configuration the relevant logits
    take a common value $s_\mathrm{rel}$ and the distractor logits average to
    $\bar s_\mathrm{dis}$, so $Z_q = \burstiness\,e^{s_\mathrm{rel}} +
    (\seqPairs - \burstiness)\,e^{\bar s_\mathrm{dis}} = \Theta(\seqPairs)$ when
    $\seqPairs \to \infty$ with $\burstiness$ fixed. Consequently each relevant
    attention weight $A^{(2)}_{N j_r} = e^{s_\mathrm{rel}}/Z_q = \Theta(1/\seqPairs)$,
    and, since every per-block signal below carries exactly one such factor, the
    gradient is of order $\burstiness/\seqPairs$. This is precisely why the error
    must be controlled \emph{relative} to the leading term.

    The ICL direction parameterizes the canonical induction circuit:
    \begin{equation}
        \weightsICL{1} \,\propto\, \posCorrel,
        \qquad
        \weightsICL{2} \,\propto\, \tp{\values{1}}\,\idT,
        \qquad
        \weightsOutICL \,\propto\,
        \idT\,\tp{\values{2}}.
    \end{equation}
    The ICL gradient is the directional derivative of the loss along this
    tuple under the Frobenius inner product
$\langle \mathbf{A}, \mathbf{B}\rangle = \mathrm{tr}(\tp{\mathbf{A}}\mathbf{B})$:
    \begin{equation}
        \nabla_\mathrm{ICL}\loss
        \;=\;
        \big\langle \weightsICL{1}, \grad{\weights{1}}\big\rangle
        \;+\;
        \big\langle \weightsICL{2}, \grad{\weights{2}}\big\rangle
        \;+\;
        \big\langle \weightsOutICL, \grad{\weightsOut}\big\rangle .
    \end{equation}

    \paragraph{Per-block decomposition.}

    Each of the three components decomposes as a sum over context blocks,
    \begin{equation}
        \nabla_\mathrm{ICL}\loss
        \;=\; \sum_{r=1}^{\seqPairs}
        \Big( g^{(1)}_r \,+\, g^{(2)}_r \,+\, g^{(\mathrm{o})}_r \Big),
        \label{eq:burst-decomp}
    \end{equation}
    with per-block signals
    \begin{align}
        g^{(2)}_r          & \;=\;
        \frac{\partial \loss}{\partial s^{(2)}_{N j_r}}\,
        \big\langle \hiddenlast{1}, \;
        \weightsICL{2}\,\hiddeni{1}{j_r}\big\rangle ,
        \\
        g^{(1)}_r          & \;=\;
        \frac{\partial \loss}{\partial s^{(1)}_{j_r i_r}}\,
        \big\langle \hiddeni{0}{j_r}, \;
        \weightsICL{1}\,\hiddeni{0}{i_r}\big\rangle ,
        \\
        g^{(\mathrm{o})}_r & \;=\;
        A^{(2)}_{N\, j_r}\,
        \big\langle \outputPredicted - \modelOutput,\;
        \weightsOutICL\,\values{2}\,\hiddeni{1}{j_r}\big\rangle .
    \end{align}
    Three observations justify the decomposition. \emph{(i)} Only the
    residual stream at the last position feeds the readout, so
$\partial \loss / \partial s^{(2)}_{ij}$ is nonzero only on row
$i = N$. \emph{(ii)} $\weightsICL{1} \propto \posCorrel$
    shifts positions by one, so its inner product with
$\grad{\weights{1}}$ is supported on adjacent within-block pairs
$(j_r, i_r)$. \emph{(iii)} The residual stream at $N$ decomposes
    additively over context positions through the layer-$2$ attention
    sum, and within each block the only term that contributes through
$\weightsOutICL$ is the one at the label position $j_r$.

    \paragraph{Two facts about the per-block signals.}

    The per-block signals satisfy a clean dichotomy.

    \textbf{Distractor blocks contribute zero in expectation.} For every
$r \in \mathcal{D}$ and every $\bullet \in \{1, 2, \mathrm{o}\}$, the
    inner product inside $g^{(\bullet)}_r$ contracts the query embedding
$\emb{\itm_q}$ against an embedding of $\token_{i_r}$ or
$\token_{j_r}$, both of which are rare tokens drawn independently of
$\itm_q$. By the same orthogonality argument used to exclude
    cross-concept gradient blocks in \cref{sec:invariant-space-proof},
    distinct rare-token inner products vanish in expectation, so
    \begin{equation}
        \EV\big[g^{(\bullet)}_r \,\big|\, r \in \mathcal{D}\big] \;=\; 0.
    \end{equation}

    \textbf{Relevant blocks contribute identical signals.} For every
$r \in \mathcal{R}$ the item is $\itm_q$ and the label is $\lbl_q$
    by definition; the only quantity that varies with $r$ is the i.i.d.\
    block positional offset, which the analysis of
    \cref{sec:invariant-space-proof} shows to integrate out from the
    expected gradient. Consequently
    \begin{equation}
        \EV_{\dataDist(\burstiness)}\big[g^{(\bullet)}_r \,\big|\, r \in \mathcal{R}\big]
        \;=\; \tilde g^{(\bullet)}(\burstiness)
    \end{equation}
    takes a single value $\tilde g^{(\bullet)}(\burstiness)$ that does not depend on
    the choice of $r \in \mathcal{R}$. Each signal carries exactly one factor of the
    relevant layer-$2$ attention weight $A^{(2)}_{N j_r} = e^{s_\mathrm{rel}}/Z_q$:
    explicitly in $g^{(\mathrm{o})}_r$, and through the chain rule in
    $g^{(2)}_r$ (the softmax derivative $\partial\loss/\partial s^{(2)}_{N j_r} =
    A^{(2)}_{N j_r}\big(\partial_A\loss|_{j_r} - \langle A^{(2)}_{N\cdot},
    \partial_A\loss\rangle\big)$) and in $g^{(1)}_r$ (the layer-$1$ output
    $\hiddeni{1}{j_r}$ reaches the readout only via the query's layer-$2$ attention
    to $j_r$). We therefore write
    \begin{equation}
        \tilde g^{(\bullet)}(\burstiness) \;=\; \frac{\kappa^{(\bullet)}(\burstiness)}{Z_q(\burstiness)},
        \label{eq:burst-factor}
    \end{equation}
    isolating the dominant $\burstiness$-channel, the shared denominator $Z_q$ of
    \cref{eq:burst-Zq}. The residual numerator $\kappa^{(\bullet)}(\burstiness)$ is
    built from the query residual $\outputPredicted - \modelOutput$ and the softmax
    cross-term; it depends on $\burstiness$ only through the \emph{aggregate}
    relevant attention mass $\burstiness\,e^{s_\mathrm{rel}}/Z_q =
    \Theta(\burstiness/\seqPairs)$, which the query absorbs into $\modelOutput$.
    Hence $\kappa^{(\bullet)}(\burstiness) = \kappa^{(\bullet)}(1)\,\big(1 +
    O(\burstiness/\seqPairs)\big)$: the numerator is $\burstiness$-independent to
    leading order, with a correction of the same relative order as the denominator
    ratio computed below.

    \paragraph{Conclusion.}

    Taking the population expectation of \cref{eq:burst-decomp} under
    $\dataDist(\burstiness)$ and substituting the two facts, the
    $\seqPairs - \burstiness$ distractor blocks drop out and each of the
    $\burstiness$ relevant blocks contributes $\sum_{\bullet} \tilde g^{(\bullet)}(\burstiness)$;
    using the factorization \cref{eq:burst-factor},
    \begin{equation}
        \EV_{\dataDist(\burstiness)}\big[\nabla_\mathrm{ICL}\loss(\weightsAll)\big]
        \;=\;
        \burstiness \sum_{\bullet} \tilde g^{(\bullet)}(\burstiness)
        \;=\;
        \frac{\burstiness}{Z_q(\burstiness)} \sum_{\bullet} \kappa^{(\bullet)}(\burstiness).
        \label{eq:burst-exact}
    \end{equation}
    This decomposition is exact; the additive remainder of the original bound is
    gone. The residual $\burstiness$-dependence beyond the explicit prefactor sits
    in the two factors $Z_q(\burstiness)$ and $\kappa^{(\bullet)}(\burstiness)$,
    both of which we now control \emph{relative} to their $\burstiness = 1$ values.

    For the denominator, write $\bar e_\mathrm{dis} = e^{\bar s_\mathrm{dis}}$ for
    the mean distractor exponential; from \cref{eq:burst-Zq},
    \begin{equation}
        \frac{Z_q(1)}{Z_q(\burstiness)}
        \;=\;
        \frac{e^{s_\mathrm{rel}} + (\seqPairs - 1)\,\bar e_\mathrm{dis}}
             {\burstiness\, e^{s_\mathrm{rel}} + (\seqPairs - \burstiness)\,\bar e_\mathrm{dis}}
        \;=\;
        1 \;+\; \frac{(\burstiness - 1)\,(\bar e_\mathrm{dis} - e^{s_\mathrm{rel}})}
                     {Z_q(\burstiness)}
        \;=\;
        1 + O\!\big(\burstiness/\seqPairs\big),
    \end{equation}
    since $Z_q(\burstiness) = \Theta(\seqPairs)$ (distractor-dominated) and the
    numerator of the correction is $O(\burstiness)$. The numerator factor obeys the
    same relative bound, $\kappa^{(\bullet)}(\burstiness) =
    \kappa^{(\bullet)}(1)\,(1 + O(\burstiness/\seqPairs))$, by Fact~2. Specializing
    \cref{eq:burst-exact} to $\burstiness = 1$ gives
    $\EV_{\dataDist(1)}[\nabla_\mathrm{ICL}\loss(\weightsAll)] =
    \big(\sum_{\bullet}\kappa^{(\bullet)}(1)\big)/Z_q(1)$, so dividing the two
    instances of \cref{eq:burst-exact} and inserting both relative bounds yields
    \begin{equation}
        \begin{aligned}
            \EV_{\dataDist(\burstiness)}\big[\nabla_\mathrm{ICL}\loss(\weightsAll)\big]
            \;&=\;
            \burstiness\;
            \frac{Z_q(1)}{Z_q(\burstiness)}\;
            \frac{\sum_{\bullet}\kappa^{(\bullet)}(\burstiness)}{\sum_{\bullet}\kappa^{(\bullet)}(1)}\;
            \EV_{\dataDist(1)}\big[\nabla_\mathrm{ICL}\loss(\weightsAll)\big]
            \\[2pt]
            \;&=\;
            \burstiness\;
            \EV_{\dataDist(1)}\big[\nabla_\mathrm{ICL}\loss(\weightsAll)\big]
            \;\big(1 + O(\burstiness/\seqPairs)\big).
        \end{aligned}
    \end{equation}
    The remainder is now \emph{multiplicative}: it scales the leading term rather
    than adding to it, so it is genuinely lower-order and vanishes as
    $\seqPairs \to \infty$ with $\burstiness$ fixed, the distractor-dominated
    regime assumed by the theorem. The ICL gradient is therefore proportional to
    $\burstiness$ in this limit.
    \hfill$\square$

    \newpage
    \section{Competing Induction Heads}
    \label{sec:competing-induction-heads-appendix}

    This appendix supports \cref{sec:competing-induction-heads}, which studies the
    dependence of the winning ICL circuit on initialization in a 3-layer
    transformer.

    \subsection{Training details}
    \label{sec:competing-induction-heads-training}

    We train a 3-layer, 1-head-per-layer attention-only transformer of width $256$
    on the in-context bigram task of \cref{sec:icl-vs-iwl-training} restricted to
    the all-rare regime ($\freqR = 1.0$), so the only solution to the task is an
    in-context induction circuit. The vocabulary is $16$ common and $16$ rare
    tokens, sinusoidal positional embeddings with 4 frequencies (8 features), and
    each sequence consists of $4$ bigram demonstrations followed by a query
    (context length $9$); both burstiness and within-class variability are set to
    their neutral values ($\burstiness = 1$, $\freqWCV = 0$). We train the
    simplified architecture of \cref{sec:proof-simplified}, with merged key-query
    matrices, fixed output-value projections, and orthogonal representations. The
    width $256 = 32 \times (1 + H)^{L}$ with $H = 1$ and $L = 3$ is chosen so that
    the model exactly admits the geometric construction of all three competing
    induction circuits on the manifold (base dimension $32$, branching $(1+H)^{L} =
8$). Optimization uses vanilla SGD at learning rate $0.1$, micro-batch size
$64$, gradient-accumulation factor $2$ (effective batch $128$), for at most
$12{,}000$ steps, with early-stop once the training cross-entropy drops below
$0.5$. We constrain the dynamics to the IMIR by projecting $\weightsAll$ onto
    the seven-dimensional sub-manifold spanned by the named circuit units $\alpha,
\alpha', \beta, \beta', \beta'', \gamma, \gamma'$ after every SGD step. For the
    phase-transition scan of \cref{fig:icl-competition}, each seed samples
$(s_\beta, s_{\beta''}) \sim \mathrm{Uniform}(0,\,0.6)^{\!2}$ i.i.d., zeroes
    every other IMIR direction, and sets the initial strengths of $\beta$ and
$\beta''$ to $s_\beta$ and $s_{\beta''}$ respectively; the seed is then plotted
    at $(s_\beta, s_{\beta''})$ and colored by the winning circuit. The winning
    circuit is identified post-training by causal ablation on a fixed evaluation
    batch: we rebuild three reduced transformers from the trained IMIR coordinates,
    retaining only one of $\{\alpha, \beta, \gamma\}$, $\{\alpha, \beta',
\gamma'\}$, or $\{\alpha', \beta'', \gamma'\}$ at a time, and pick the one with
    the lowest cross-entropy at the query position. The same fixed batch is used
    across all seeds so the three ablation losses are directly comparable.

    \paragraph{Compute resources.}
    Each individual seed trains a 3-layer transformer of width $256$
    (${\sim}0.2\,$M learnable parameters) for at most $12{,}000$ SGD steps with
    effective batch size $128$, taking a few minutes on a single GPU and using
    under $0.5\,$GB of GPU memory. The phase-transition scan parallelizes trivially
    across seeds: we run it on an 8-GPU node via per-seed sharding, with one
    persistent worker process per GPU pulling seeds from a shared queue. The scan
    reported in \cref{fig:icl-competition} aggregates several thousand seeds and
    completes in a few hours of wall-clock on the 8-GPU node.

    \newpage
    \subsection{Training Dynamics}
    \label{sec:competing-induction-heads-dynamics}

    \begin{figure}[h]
        \centering
        \captionsetup{width=0.9\linewidth}
        \includegraphics[width=\linewidth]{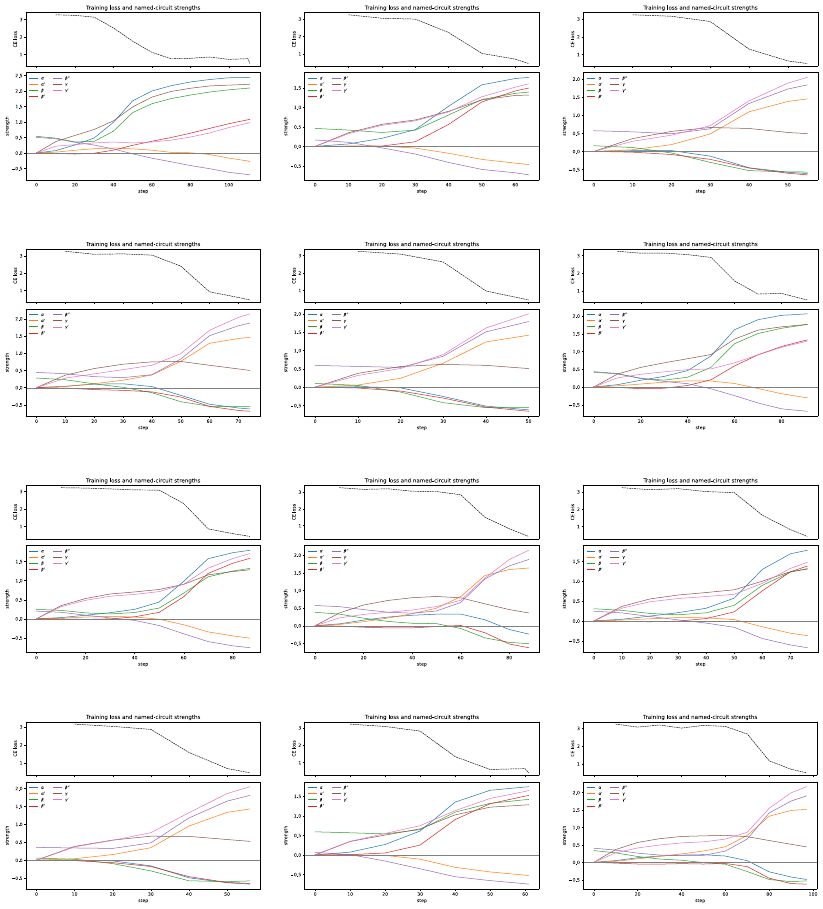}
        \caption{Training dynamics of 12 random seeds of a 3-layer, single-head Transformer trained on in-context bigrams (cf.\ \cref{sec:competing-induction-heads}). For each seed, the top panel shows the training cross-entropy and the bottom panel the strengths of the seven named circuit units ($\alpha$, $\alpha'$, $\beta$, $\beta'$, $\beta''$, $\gamma$, $\gamma'$); the units that grow together indicate which of the three competing induction circuits wins.}
        \label{fig:winning-tickets}
    \end{figure}

    \newpage
    \section{Towards Automated Interpretability}
    \subsection{Training Details}
    \label{sec:autointerp-details}

    We train a 5-layer, 1-head-per-layer attention-only transformer of width $1024$
    on the $k$-hop induction task with $k = 2$ hops, vocabulary of $16$ common and
$32$ rare tokens, sinusoidal positional embeddings with 4 frequencies (8
    features), and 9 context blocks per sequence. The all-rare regime $\freqR =
1.0$ is used; an implicit-curriculum loss samples the target uniformly from the
$k$ hop endpoints $\{t_1, \ldots, t_k\}$ along the query chain, and
    within-chain blocks are emitted in reverse-hop order so that the target-bearing
    hop sits furthest from the query position. The width $1024 = 32 \times (1 +
H)^{L}$ with $H = 1$ and $L = 5$ admits the orthogonal-subspace construction of
    all IMIR directions exactly. Optimization uses AdamW at learning rate
$10^{-3}$, micro-batch size $1$, gradient-accumulation factor $32$ (effective
    batch $32$), for $50{,}000$ optimizer steps; the random seed is fixed at $6$.

    \paragraph{Compute resources.}
    The model has approximately $32\,$M learnable parameters; the bulk of the cost
    comes from the layer-wise attention matrices over the relatively long context.
    We launch training via \texttt{torchrun} on a single 8-GPU node, with the model
    wrapped in DistributedDataParallel and gradients all-reduced on the final
    micro-batch of each accumulation window. One full training run (50{,}000
    optimizer steps, effective batch $32 \cdot 8 = 256$ across ranks) takes a few
    hours of wall-clock on this hardware. The circuit-detection routine of
    \cref{sec:autointerp-algo} runs post-training on a single GPU and completes in
    minutes: each greedy round evaluates one forward pass per remaining IMIR
    direction on a fixed eval batch, so the total cost is quadratic in the number
    of non-zero IMIR directions of the trained model.

    \newpage
    \subsection{Circuit Detection Algorithm}
    \label{sec:autointerp-algo}

    Our circuit-detection routine performs greedy backward elimination on the
    manifold coordinates of the trained model. Starting from the full set of IMIR
    directions, it repeatedly drops the single direction whose ablation increases
    the loss the least, and stops when even the cheapest available ablation would
    push the loss on a fixed evaluation batch above a user-supplied budget $\tau$.
    The surviving directions, sorted by their per-circuit ablation cost, are the
    algorithm's identification of the model's essential circuits.

    \begin{algorithm}[h]
        \caption{Greedy backward circuit elimination on the IMIR.}
        \label{alg:auto-interp}
        \begin{algorithmic}[1]
            \Require trained transformer $f_\weightsAll$, fixed eval batch
            $\mathcal{B}$, loss budget $\tau$, free-ablation slack
            $\rho \ge 1$
            \Ensure essential circuit indices $\mathcal{A}$ and strengths
            $\{s_i\}_{i \in \mathcal{A}}$
            \Statex
            \State $\{(\mathbf{e}_i, s_i)\}_i \gets
                \Call{ReduceToManifold}{\weightsAll}$
            \Comment{basis directions on $\manifold$ and their strengths}
            \State $\weightsAll \gets \sum_i s_i\, \mathbf{e}_i$
            \Comment{project the trained weights onto $\manifold$}
            \State $\mathcal{A} \gets \{i : s_i \ne 0\}$;\;\;
            $\loss_{\mathrm{cur}} \gets \loss(f_\weightsAll;\, \mathcal{B})$
            \Statex
            \While{$\loss_{\mathrm{cur}} < \tau$ \textbf{and}
                $\mathcal{A} \ne \emptyset$}
            \State $i_\star \gets \textsc{None}$;\;\;
            $\loss_\star \gets +\infty$
            \ForAll{$i \in \mathcal{A}$ in random order}
            \State $\loss_i \gets
                \loss\big(f_{\weightsAll - s_i \mathbf{e}_i};\, \mathcal{B}\big)$
            \Comment{tentative ablation of direction $i$}
            \If{$\loss_i < \loss_\star$}
            \State $\loss_\star \gets \loss_i$;\;\;
            $i_\star \gets i$
            \EndIf
            \If{$\loss_\star \le \rho \cdot \loss_{\mathrm{cur}}$}
            \State \textbf{break}
            \Comment{free ablation found; skip rest of scan}
            \EndIf
            \EndFor
            \If{$\loss_\star \ge \tau$}
            \State \textbf{break}
            \Comment{even the cheapest ablation crosses the budget}
            \EndIf
            \State $\weightsAll \gets \weightsAll - s_{i_\star}\,
                \mathbf{e}_{i_\star}$
            \Comment{commit the ablation}
            \State $\mathcal{A} \gets
                \mathcal{A} \setminus \{i_\star\}$;\;\;
            $\loss_{\mathrm{cur}} \gets \loss_\star$
            \EndWhile
            \Statex
            \State \Return $\mathcal{A}$, sorted by per-circuit ablation cost
            $\loss(f_{\weightsAll - s_i \mathbf{e}_i};\, \mathcal{B}) -
                \loss_{\mathrm{cur}}$ in descending order.
        \end{algorithmic}
    \end{algorithm}

    The free-ablation slack $\rho$ is a small optimization: once the inner scan
    finds an ablation whose cost is within a factor $\rho$ of the current loss, we
    accept it without finishing the round. This keeps each round at
$O(|\mathcal{A}|)$ in the worst case but typically much cheaper when several
    directions are obviously prunable. The randomized scan order avoids the bias of
    always probing the same prefix first. The fixed evaluation batch $\mathcal{B}$
    and the deterministic RNG seed used by the embedding layer ensure that two
    evaluations of the same model produce identical losses, which is required for
    the greedy comparisons to be meaningful. We use the values $\tau = 100$ and
$\rho = 1.01$ in our experiments.

    \newpage
    \subsection{Discovered Circuits}
    \label{sec:autointerp-circuits}

    We run the greedy backward-elimination procedure of \cref{alg:auto-interp} on
    two trained 5-layer transformers solving the $k = 2$ hop induction task, and
    report the discovered circuits in
    \cref{tab:autointerp-circuit1,tab:autointerp-circuit2}. In addition to the four
    units one would expect from a canonical induction-head construction --- a
    previous-token head, two item-match heads chained through value rotations, and
    a readout that inverts the chain --- the algorithm consistently recovers a
    small number of \emph{aiding} units that further lift accuracy without being
    part of any standard induction template. We separate the two groups in the
    tables for clarity.

    The aiding units are interesting on their own: removing them drops the isolated
    subcircuit's accuracy from $\sim 0.92$ down to $\sim 0.6$
    (\cref{tab:autointerp-circuit1}, $0.922 \to 0.594$), which is much more than a
slight refinement. They appear to compose with the canonical induction units to
form a slightly redundant ensemble that exploits multiple paths through the
residual stream simultaneously. This kind of structure would be invisible to
existing automated circuit discovery algorithms with granularity at the
attention head level. We list the four canonical units under \emph{Essential}
and the recovered extras under \emph{Aiding} in each circuit's table.

\begin{table}[h]
    \centering
    \caption{Discovered subcircuit $C_1$ in a trained 5-layer transformer
        ($k = 2$ hops). The accuracy table (left) reports top-1 accuracy on
        a fixed evaluation batch at four levels of restriction; the
        circuit-unit table (right) lists the seven IMIR directions kept by
        the greedy elimination, separated into the four canonical induction
        units (\emph{Essential}) and three non-canonical helpers
        (\emph{Aiding}).}
    \label{tab:autointerp-circuit1}
    \begin{minipage}[t]{0.30\linewidth}
        \vspace{0pt}
        \centering
        \begin{tabular}{@{}lc@{}}
            \toprule
            \textbf{Configuration}      & \textbf{Acc.} \\
            \midrule
            Trained model               & $0.938$       \\
            Full IMIR ($1428$ dirs.)    & $0.953$       \\
            Recovered $C_1$ ($7$ units) & $0.922$       \\
            Essentials ($4$ units)      & $0.594$       \\
            \bottomrule
        \end{tabular}
    \end{minipage}
    \hfill
    \begin{minipage}[t]{0.65\linewidth}
        \vspace{0pt}
        \centering
        \begin{tabular}{@{}cl l r@{}}
            \toprule
            \textbf{Layer} & \textbf{Direction}                           & \textbf{Type} & \textbf{Strength} \\
            \midrule
            $1$            & $\posCorrel$                                 & Essential     & $+2.94$           \\
            $3$            & $\idT\,(\values{1,1})^+$                     & Essential     & $+9.24$           \\
            $4$            & $\values{3,1}\,\posCorrel\,(\values{1,1})^+$ & Essential     & $+3.68$           \\
            out            & $\idT\,(\values{3,1}\,\values{4,1})^+$       & Essential     & $+1.42$           \\
            \midrule
            $2$            & $\idT\,(\values{1,1})^+$                     & Aiding        & $+3.81$           \\
            $3$            & $\values{1,1}\,\values{2,1}\,\idP$           & Aiding        & $-1.33$           \\
            $4$            & $\values{3,1}\,\idT$                         & Aiding        & $+1.14$           \\
            \bottomrule
        \end{tabular}
    \end{minipage}
\end{table}

\begin{table}[h]
    \centering
    \caption{Discovered subcircuit $C_2$ in a second trained 5-layer
        transformer ($k = 2$ hops). Compared with $C_1$, $C_2$ lives
        entirely on layers $1, 2, 3$, uses canonical token-identity
        directions on every essential unit, and recovers a single aiding
        unit.}
    \label{tab:autointerp-circuit2}
    \begin{minipage}[t]{0.30\linewidth}
        \vspace{0pt}
        \centering
        \begin{tabular}{@{}lc@{}}
            \toprule
            \textbf{Configuration}      & \textbf{Acc.} \\
            \midrule
            Trained model               & $0.906$       \\
            Full IMIR ($1428$ dirs.)    & $0.906$       \\
            Recovered $C_2$ ($5$ units) & $0.844$       \\
            Essentials ($4$ units)      & $0.719$       \\
            \bottomrule
        \end{tabular}
    \end{minipage}
    \hfill
    \begin{minipage}[t]{0.65\linewidth}
        \vspace{0pt}
        \centering
        \begin{tabular}{@{}cl l r@{}}
            \toprule
            \textbf{Layer} & \textbf{Direction}                     & \textbf{Type} & \textbf{Strength} \\
            \midrule
            $1$            & $\posCorrel$                           & Essential     & $+2.35$           \\
            $2$            & $\idT\,(\values{1,1})^+$               & Essential     & $+2.68$           \\
            $3$            & $\idT\,(\values{1,1})^+$               & Essential     & $+5.79$           \\
            out            & $\idT\,(\values{2,1}\,\values{3,1})^+$ & Essential     & $+2.94$           \\
            \midrule
            $2$            & $\idT$                                 & Aiding        & $-0.47$           \\
            \bottomrule
        \end{tabular}
    \end{minipage}
\end{table}

\end{document}